\documentclass{article}

% if you need to pass options to natbib, use, e.g.:
\PassOptionsToPackage{numbers, compress}{natbib}
% before loading neurips_2021
% ready for submission
\usepackage[preprint]{neurips_2021}

% to compile a preprint version, e.g., for submission to arXiv, add add the
% [preprint] option:
%     \usepackage[preprint]{neurips_2021}

% to compile a camera-ready version, add the [final] option, e.g.:
%     \usepackage[final]{neurips_2021}

% to avoid loading the natbib package, add option nonatbib:
%    \usepackage[nonatbib]{neurips_2021}

\usepackage[utf8]{inputenc} % allow utf-8 input
\usepackage[T1]{fontenc}    % use 8-bit T1 fonts
\usepackage{hyperref}       % hyperlinks
\usepackage{url}            % simple URL typesetting
\usepackage{booktabs}       % professional-quality tables
\usepackage{amsfonts}       % blackboard math symbols
\usepackage{nicefrac}       % compact symbols for 1/2, etc.
\usepackage{microtype}      % microtypography
\usepackage{xcolor}         % colors

\usepackage{amsmath}
\usepackage{bm}
\usepackage{bbm}

\usepackage{algorithm}
\usepackage{algorithmic}
\usepackage{amsthm}
\usepackage{mathtools}

\usepackage{subfigure}

\usepackage{wrapfig}

\newcommand{\tu}{\textbf}
\newcommand{\tbf}{\textbf}
\usepackage{color}
\newcommand{\SC}[1]{{\color{black}#1}}

\newcommand{\expec}{\mathbb{E}}
\newcommand{\reals}{\mathbb{R}}
\usepackage{balance}
%\usepackage{breakurl}
%\AtBeginEnvironment{proof}{\renewcommand{\qedsymbol}{}}{}{}

%\setlength{\dbltextfloatsep}{0.5pt}

%\usepackage{etoolbox}
%\makeatletter
%\patchcmd\@combinedblfloats{\box\@outputbox}{\unvbox\@outputbox}{}{\errmessage{\noexpand patch failed}}
%\makeatother
%
\usepackage{hyperref}

\usepackage{caption}

\usepackage[algo2e]{algorithm2e}

\newcommand{\Nepochs}{N_{\text{epoch}}}
%%%%%%%%%%%%%%%%%%%%%%%%%%%%%%%%%%%%%%%%
\newcommand{\argmin}{\mathop{\rm argmin}}

\newcommand{\eg}{{\it e.g.}}

% with MPC
%\newcommand{\bolduMPCopt}{\bm{u_t}^{\text{MPC},*}}
%\newcommand{\bolduMPChat}{\bm{\hat{u}_t}^{\text{MPC},*}}
%\newcommand{\uMPCopt}{{u_t}^{\text{MPC},*}}
%\newcommand{\uMPChat}{{\hat{u}_t}^{\text{MPC},*}}

%\newcommand{\bolduMPCopt}{\bm{u_t}^{*}}
% \newcommand{\bolduMPCopt}{\bm{u}^{*}}
% \newcommand{\bolduMPChat}{\bm{\hat{u}}}
%\newcommand{\bolduMPChat}{\bm{\hat{u}_t}}
\newcommand{\bolduMPCopt}{\boldu^{*}}
\newcommand{\bolduMPChat}{\bolduhat}

% definitions

\newtheorem{prop}{Proposition}

\newtheorem{problem}{Problem}

% task machine perception
%\newcommand{\Aencoder}{A}
%\newcommand{\Bdecoder}{B}
%\newcommand{\Atilde}{\tilde{A}}
%\newcommand{\Btilde}{\tilde{B}}
%\newcommand{\KLQR}{K}
%\newcommand{\taskoperand}{\KLQR x - \KLQR \Bdecoder \Aencoder x}
%\newcommand{\reconoperand}{x - \Bdecoder \Aencoder x}

\newcommand{\control}{\mathrm{c}}
\newcommand{\encoder}{\mathrm{e}}
\newcommand{\decoder}{\mathrm{d}}
\newcommand{\forecaster}{\mathrm{F}}

\newcommand{\thetacontrol}{\theta_\control}
\newcommand{\thetaencoder}{\theta_\encoder}
\newcommand{\thetadecoder}{\theta_\decoder}

\newcommand{\shat}{\hat{s}}
\newcommand{\zbottleneck}{Z}
\newcommand{\ninput}{n}
\newcommand{\moutput}{m}

\newcommand{\pH}{pH}

\newcommand{\bolds}{\bold{s}}
\newcommand{\boldS}{\bold{S}}
\newcommand{\boldShat}{\bold{\hat{S}}}
\newcommand{\boldshat}{\bold{\shat}}
\newcommand{\boldu}{\bold{u}}
\newcommand{\bolduhat}{\bold{\hat{u}}}
\newcommand{\boldx}{\bold{x}}

\newcommand{\JMPC}{\Jcontrol}

\newcommand{\Jcontrol}{J^{\control}}
\newcommand{\Joverall}{J^{\mathrm{tot.}}}
\newcommand{\Jforecast}{J^{\forecaster}}
\newcommand{\lambdaforecast}{\lambda^{\forecaster}}

\newcommand{\sdim}{p}

% state, action, exogenous input dimensions
%%%%%%%%%%%%%%%%%%%%%%%%%%%%%%%%%%%%
\newcommand{\statedim}{\reals^{\ninput}}
\newcommand{\controldim}{\reals^{\moutput}}
\newcommand{\exoinputdim}{\reals^{\sdim}}
\newcommand{\doublevert}{\vert\vert}

\title{Data Sharing and Compression for Cooperative Networked Control}

% The \author macro works with any number of authors. There are two commands
% used to separate the names and addresses of multiple authors: \And and \AND.
%
% Using \And between authors leaves it to LaTeX to determine where to break the
% lines. Using \AND forces a line break at that point. So, if LaTeX puts 3 of 4
% authors names on the first line, and the last on the second line, try using
% \AND instead of \And before the third author name.

\usepackage{authblk}
 
\author[1]{Jiangnan Cheng}
\author[2]{Marco Pavone}
\author[3]{Sachin Katti}
\author[4]{Sandeep Chinchali}
\author[1]{Ao Tang}

\affil[1]{School of Electrical and Computer Engineering, Cornell University, Ithaca, NY}
\affil[2]{Department of Aeronautics and Astronautics, Stanford University, Stanford, CA}
\affil[3]{Department of Computer Science, Stanford University, Stanford, CA}
\affil[4]{Department of Electrical and Computer Engineering, The University of Texas at Austin, Austin, TX \authorcr
  \{\tt jc3377, atang\}@cornell.edu, \{\tt pavone, skatti\}@stanford.edu, \tt sandeepc@utexas.edu}

% \author{%
%   Jiangnan Cheng\\
%   Cornell University\\
%   \And
%   Marco Pavone\\
%   Stanford University\\
%   \And
%   Sachin Katti\\
%   Stanford University\\
%   \AND
%   Sandeep Chinchali\\
%   Stanford University\\
%   \And
%   Ao Tang\\
%   Cornell University\\
% }

\begin{document}

\maketitle

\begin{abstract}
Sharing forecasts of network timeseries data, such as cellular or electricity load patterns, can improve independent control applications ranging from traffic scheduling to power generation. Typically, forecasts are designed without knowledge of a downstream controller's task objective, and thus simply optimize for \textit{mean} prediction error. However, such task-agnostic representations are often too large to stream over a communication network and do not emphasize salient temporal features for cooperative control. This paper presents a solution to learn succinct, highly-compressed forecasts that are \textit{co-designed} with a modular controller's task objective. Our simulations with real cellular, Internet-of-Things (IoT), and electricity load data show we can improve a model predictive controller's performance by at least $25\%$ while transmitting $80\%$ less data than the competing method. Further, we present theoretical compression results for a networked variant of the classical linear quadratic regulator (LQR) control problem.
\end{abstract}

\section{Introduction}
\label{sec:intro}
Cellular network and power grid operators measure rich timeseries data, such as city-wide mobility and electricity demand patterns. Sharing such data with \textit{external} entities, such as a
taxi fleet operator, can enhance a host of \SC{societal-scale control tasks}, ranging from
taxi routing to battery storage optimization. 
However, how should timeseries owners \textit{represent} their data to limit the scope and volume of information shared across a data boundary, such as a congested wireless network?\footnote{Uber processes petabytes of data per day \cite{Uber} and a mobile operator can process 60 TB of daily cell metrics \cite{SK}. Even a \textit{fraction} of such data is hard to send.}

At a first glance, it might seem sufficient to simply share generic demand forecasts with any downstream controller.
Each controller, however, often has a unique cost function and context-specific sensitivity to prediction errors.
%, which are often \textit{unknown} to the timeseries generator. 
For example, cell demand forecasts should emphasize accurate peak-hour forecasts for taxi fleet routing. The same underlying cellular data should instead emphasize fine-grained throughput forecasts when a video streaming controller starts a download. 
Despite the benefits of customizing forecasts for control, today's forecasts are mostly \textit{task-agnostic} and simply optimize for mean or median prediction error. As such, they often waste valuable network bandwidth to transmit temporal features that are unnecessary for a downstream controller. Even worse, they might not minimize errors when they matter most, such as peak-hour variability.
%Even worse, they often don't minimize error when it matters most, such as peak-hour variability.

% check
% \begin{figure}[t]
% \vskip 0.2in
% \begin{center}
% \centerline{\includegraphics[width=0.6\columnwidth]{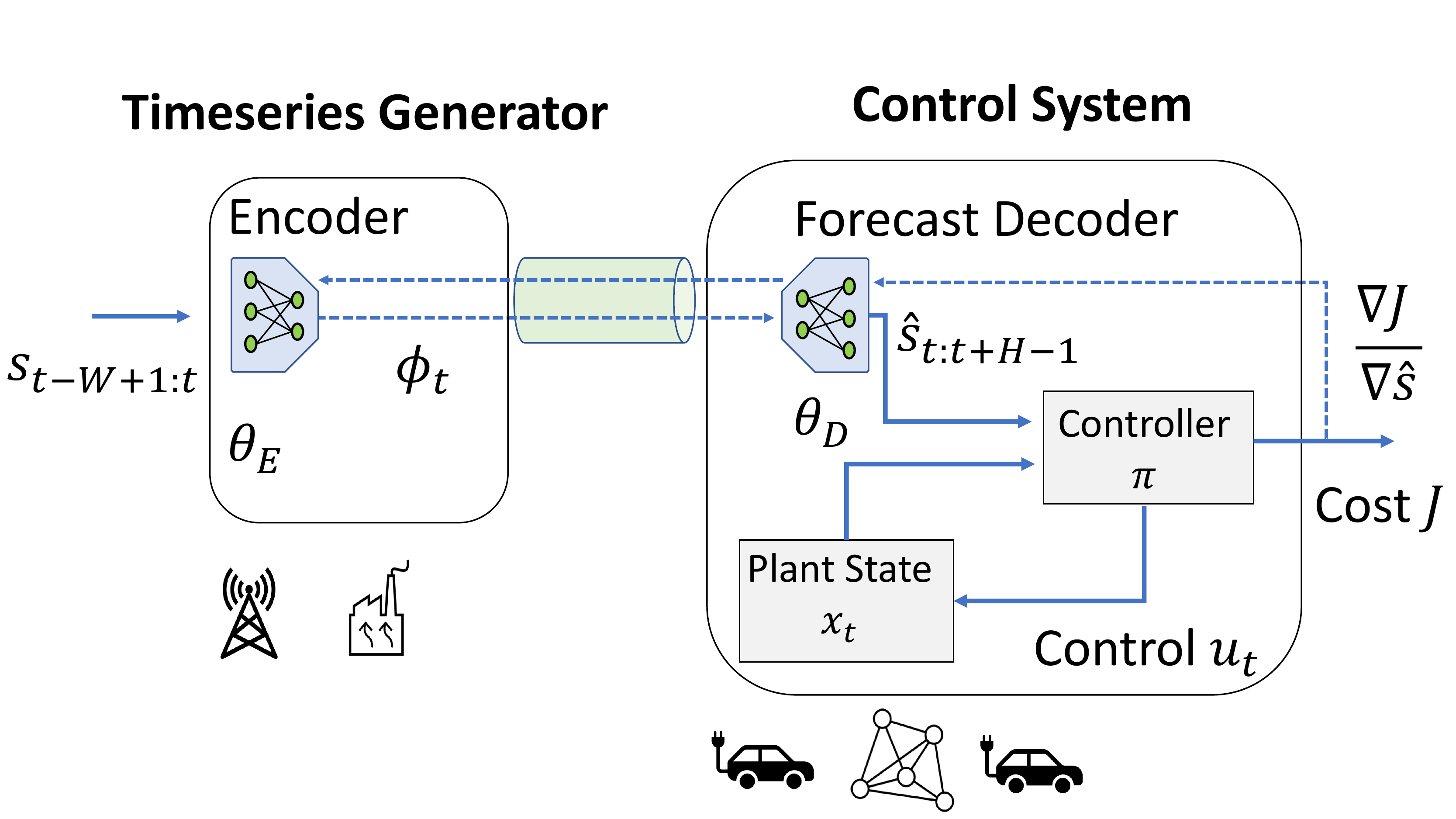}}
% \caption{\textbf{Data sharing for cooperative control:} An owner of timeseries data $s_t$, such as a mobile operator, needs to transmit a compressed representation $\phi_t$ to a downstream controller with internal state $x_t$. The \textit{learned} forecast emphasizes task-relevant temporal features to minimize end-to-end controller cost $J$.} 
% \label{fig_problem}
% \end{center}
% \vskip -0.4in
% \end{figure}

\begin{wrapfigure}{R}{0.5\columnwidth}
\centering
\includegraphics[width=0.5\columnwidth]{}
\caption{\textbf{Data sharing for cooperative control:} An owner of timeseries data $s_t$, such as a mobile operator, needs to transmit a compressed representation $\phi_t$ to a downstream controller with internal state $x_t$. The \textit{learned} forecast emphasizes task-relevant temporal features to minimize end-to-end controller cost $J$.}
\label{fig_problem}
% \vskip -0.5em
\end{wrapfigure}

Given the limitations of today's task-agnostic forecasts, this paper contributes a novel problem formulation for learning \textit{task-driven} forecasts for networked control. 
In our general problem (Fig. \ref{fig_problem}), an operator measures timeseries $s_t$, such as electricity or cell demand, and 
transmits compressed representation $\phi_t$, which is decoded to $\hat{s}_t$ at the controller. Rather than simply minimize the prediction error for $\hat{s}_t$, we instead learn a representation that minimizes a modular controller $\pi$'s ultimate cost $J$.
Our key technical insight is to compute a controller's sensitivity to prediction errors, which in turn guides how we \textit{co-design} and learn a concise forecast representation that is tailored to control. 
As such, our scheme jointly integrates data-driven forecasting, compression, and model-based control. 

\tu{Related work:} 
Our work is broadly related to information-theoretic compression for control as well as task-driven representation learning. The closest work to ours is \cite{donti2017task}, where task-driven forecasts are learned for one-step stochastic optimization problems. In stark contrast, we address \textit{compression} of timeseries forecasts and focus on networked, multi-step control problems. Our work is also inspired by Shannon's rate-distortion theory \cite{berger2003rate}, which describes how to encode and transmit signals with a minimal bit-rate to minimize reconstruction error.
%\SC{In contrast, we observe that if we optimize for control cost, as opposed to simply reconstruction error, we can achieve
%significantly better compression gains by encoding only task-relevant features}.
%a forecast should only encode task-relevant features necessary to minimize control cost, which enables significant compression gains compared to optimizing for reconstruction error. 
In contrast, we work with real numbers rather than bits and focus on reducing the dimension of data while keeping task-specific control cost low.

%We observe that a forecast should only encode task-relevant features necessary to minimize control cost, which enables significant compression gains compared to optimizing for reconstruction error. Our distortion measure is task specific.

Prior work has addressed rate-distortion tradeoffs for networked LQR control problems \cite{kostina2019rate,tatikonda2004stochastic,schenato2007foundations}. However, these works focus on ensuring closed-loop stability for a remote controller and a physically-separated plant, \SC{such as in tele-operation}. Our problem is fundamentally different, since we address how \textit{external} timeseries forecasts can enhance a controller's \textit{local} decisions using full knowledge of its own internal state. 
\SC{While the term \textit{co-design} appears in select work on networked LQR, it refers to a drastically different setting where a communication scheduler and tele-operated controller must be jointly designed \cite{yun2011optimal,zhang2006communication,branicky2002scheduling,peng2013event}}.
Finally, our work differs from deep neural network (DNN) compression schemes for video inference \cite{blau2019rethinking,DBLP:conf/rss/NakanoyaCADKP21} since we focus on control.
%that optimize for either human or machine perception tasks \cite{blau2019rethinking,nakanoya2020task} since we focus on multi-step control.

\tu{Contributions:}
In light of prior work, our contributions are three-fold. First, we introduce a novel problem for learning compressed timeseries representations that are tailored to control. 
%Second, we illustrate how to use a controller's sensitivity to prediction errors to guide data-driven learning of concise, task-relevant forecasts.
%Second, we show how to compute a 
%controller's sensitivity to prediction errors, which guides learning of concise forecast representations as well as analytical compression results for linear control.
Second, to gain insights into our problem, we contribute analytic compression results for LQR control. These insights serve as a foundation for our general algorithm that computes the sensitivity of a model predictive controller (MPC) to prediction errors, which guides learning of concise forecast representations.
%Second, we contribute analytic compression results for LQR control, which serve as foundations for our model predictive control (MPC) experiments. Third, we illustrate the generality and efficacy of our approach using DNN forecasts for real cell, IoT, and electricity grid data. 
Third, we learn representations that improve control performance by $>25\%$ and are $80\%$ smaller than those generated by standard autoencoders, even for real IoT sensor data we captured on embedded devices as well as benchmark electricity and cell datasets.
%In particular, we collect timeseries from an embedded IoT sensor for our experiments.

\tu{Organization:} 
%This paper is organized as follows. 
In Sec. \ref{sec:problem_statement}, we formalize a general problem of compression for networked control and provide analytical results for LQR. Then, in Sec. \ref{sec:algorithm}, we contribute an algorithm for task-driven data compression for general MPC problems. We demonstrate strong empirical performance of our algorithm for cell, energy, and IoT applications in Sec. \ref{sec:scenarios} - \ref{sec:evaluation} and conclude in Sec. \ref{sec:conclusion}.

%We demonstrate strong empirical performance of our compression scheme for cellular traffic scheduling, energy forecasting, and control from IoT sensors in Sec. \ref{sec:scenarios} - \ref{sec:evaluation}. Finally, we conclude in Sec. \ref{sec:conclusion}.

\section{Problem Formulation}
\label{sec:problem_statement}
We now describe the information exchange between a generator of timeseries data, henceforth called a forecaster, and a controller, as shown in Fig. \ref{fig_problem}. Both systems operate in discrete time, indexed by $t$, for a time horizon of $T$ steps. The notation $y_{a:b}$ denotes a timeseries $y$ from time $a$ to $b$.

\tbf{Forecast Encoder:}
The forecaster measures a high-volume timeseries $s_t \in \reals^p$. Timeseries $s$ is drawn from a domain-specific distribution $\mathcal{D}$, such as cell-demand patterns, denoted by $s_{0:T-1} \sim \mathcal{D}$. A differentiable encoder maps the past $W$ measurements, denoted by $s_{t-W+1:t}$, to a compressed representation $\phi_t \in \reals^Z$, using model parameters $\thetaencoder$: $\phi_t = g_{\mathrm{encode}}(s_{t-W+1:t}; \thetaencoder)$. Typically, $Z\ll p$ and is referred to as the \textit{bottleneck} dimension since it limits the communication data-rate and how many floating-point values are sent per unit time.

\tbf{Forecast Decoder:}
The compressed representation $\phi_t$ is transmitted over a bandwidth-constrained communication network, where a downstream decoder maps $\phi_t$ to a forecast $\shat_{t:t+H-1}$ for the next $H$ steps, denoted by: $\hat{s}_{t:t+H-1} = g_{\mathrm{decode}}(\phi_t; \thetadecoder)$,  
where $\thetadecoder$ are decoder parameters. Importantly, we decode representation $\phi_t$ into a forecast $\hat{s}$ so it can be directly passed to a model-predictive controller that interprets $\hat{s}$ as a physical quantity, such as traffic demand. 
\SC{The encoder and decoder jointly enable compression and forecasting by mapping past observations to a forecast via bottleneck $\phi_t$}.

\tbf{Modular Controller:}
The controller has an internal state $x_t \in \reals^n$ and must choose an optimal control $u_t \in \reals^m$. We denote the admissible state and control sets by $\mathcal{X}$ and $\mathcal{U}$ respectively. The system dynamics also depend on external timeseries $s_t$ and are given by:
$x_{t+1} = f(x_t, u_t, s_t), \quad t \in \{0, \cdots, T-1\}$.
Importantly, while state $x_t$ depends on \textit{exogenous} input $s_t$, we assume $s_t$ evolves \textit{independently} of $x_t$ and $u_t$. This is a practical assumption in many networked settings. For example, the demand $s_t$ for taxis might mostly depend on city commute patterns and not an operator's routing decisions $u_t$ or fleet state $x_t$.
Ideally, control policy $\pi$ chooses a decision $u_t$ based on fully-observed internal state $x_t$ and \textit{perfect} knowledge of exogenous input $s_{t:t+H-1}$:
$u_t = \pi(x_t, s_{t:t+H-1}; \thetacontrol)$, 
where $\thetacontrol$ are control policy parameters, such as a feedback matrix for LQR. However, in practice, given a possibly noisy forecast $\hat{s}_{t:t+H-1}$, it will \textit{enact} a control denoted by $\hat{u}_t = \pi(x_t, \hat{s}_{t:t+H-1}; \thetacontrol)$, which implicitly depends on the encoder/decoder parameters $\thetaencoder,\thetadecoder$ via the forecast $\hat{s}$.

\tbf{Control Cost:}
Our main objective is to minimize end-to-end control cost $\Jcontrol$, 
which depends on initial state $x_0$ and controls $\hat{u}_{0:T-1}$, which in turn depend on the \textit{forecast} $\shat_{0:T-1}$. For a simpler notation, we use bold variables to define the full timeseries, i.e., $\boldu := u_{0:T-1}$, $\bolds := s_{0:T-1}$, $\bolduhat := \hat{u}_{0:T-1}$ and $\boldshat := \shat_{0:T-1}$. The control cost $\Jcontrol$ is a sum of stage costs $c(x_t, \hat{u}_t)$ and terminal cost $c_T(x_T)$:
$\Jcontrol(\bolduhat; x_0, \bolds) = c_T(x_T) + \sum_{t=0}^{T-1} c(x_t, \hat{u}_t)$, where
$x_{t+1} = f(x_t, \hat{u}_t, s_t), t \in \{0, \cdots, T-1\}$.    
Importantly, the above plant dynamics $f$ evolve according to true timeseries $s_t$, but controls $\hat{u}_t$ are enacted with possibly noisy forecasts $\shat_t$. 

\tbf{Forecasting Errors:}
In practice, a designer often wants to visualize decoded forecasts $\shat$ to debug anomalies or view trends. While our principal goal is to minimize the control errors and cost associated with forecast $\shat$, we allow a designer to \textit{optionally} penalize mean squared prediction error (MSE). This penalty incentivizes a forecast $\shat_t$ to estimate the key trends of $s_t$, serving as a regularization term: 
$\Jforecast(\bolds, \boldshat) = \frac{1}{T} \sum_{t=0}^{T-1} \vert \vert s_t - \shat_t \vert \vert^2_2.$

\tbf{Overall Weighted Cost:}
Given our principal objective of minimizing control cost and optionally penalizing prediction error, we combine the two costs using a user-specified weight $\lambdaforecast$. 
Importantly, we try to minimize the \textit{additional} control cost $\Jcontrol(\bolduhat; x_0, \bolds)$ incurred by using forecast $\boldshat$ instead of true timeseries $\bolds$, yielding overall cost:
\begin{align}
     \Joverall(\boldu, \bolduhat, \bolds, \boldshat; x_0, \lambdaforecast) = \frac{1}{T}\big(\underbrace{\Jcontrol(\bolduhat; x_0, \bolds) - \Jcontrol(\boldu; x_0, \bolds)}_{\mathrm{extra ~control~ cost}}\big) + \lambdaforecast \Jforecast(\bolds, \boldshat). 
    \label{eq:weighted_cost}
\end{align}
The total cost implicitly depends on controller, encoder, and decoder parameters via controls $\boldu$ and $\bolduhat$ and the forecast $\boldshat$.
Having defined the encoder/decoder and controller, we now formally define the problem addressed in this paper.
\begin{problem}[Data Compression for Cooperative Networked Control]
\label{prob:codesign}
We are given a controller $\pi(; \thetacontrol)$ with fixed, pre-trained parameters $\thetacontrol$, fixed bottleneck dimension $\zbottleneck$, and perfect measurements of internal controller state $x_{0:T}$. Given a true exogenous timeseries $s_{0:T-1}$ drawn from data distribution $\mathcal{D}$, find encoder and decoder parameters $\thetaencoder, \thetadecoder$ to minimize the weighted control and forecasting cost (Eq. \ref{eq:weighted_cost}) with weight $\lambdaforecast$:
\begin{align*}
    \thetaencoder^{*}, \thetadecoder^{*} & = \argmin_{\thetaencoder, \thetadecoder} \expec_{s_{0:T-1} \sim \mathcal{D}} \Joverall(\boldu, \bolduhat, \bolds, \boldshat; x_0, \lambdaforecast), ~\mathrm{where~} \\ 
                                         & \phi_t = g_{\mathrm{encode}}(s_{t-W+1:t}; \thetaencoder), ~\phi_t \in \reals^{\zbottleneck} \\
                                         & \shat_{t:t+H-1} = g_{\mathrm{decode}}(\phi_t; \thetadecoder), \\
    & \hat{u}_t = \pi(x_t, \shat_{t:t+H-1}; \thetacontrol), ~u_t = \pi(x_t, s_{t:t+H-1}; \thetacontrol), \\ 
    & x_{t+1} = f\big(x_t, \hat{u}_t, s_t), \quad \mathrm{and} \quad x_t \in \mathcal{X}, \hat{u}_t \in \mathcal{U}, \quad t \in \{0, \cdots, T-1\}.
\end{align*}
\end{problem}

\tu{Technical Novelty and Practicality of our Co-design Problem:} 

Having formalized our problem, we can now articulate how it differs from classical networked control and tele-operation \cite{hespanha2007survey,borkar1997lqg,tatikonda2004control,tatikonda2004stochastic,kostina2019rate}, compressed sensing \cite{donoho2006compressed,eldar2012compressed}, and certainty-equivalent control \cite{van1981certainty,mania2019certainty}.
First, we can not readily apply the classical separation principle \cite{wonham1968separation} of Linear Quadratic Gaussian (LQG) control, which proscribes how to \textit{independently} design a timeseries estimator, such as the Kalman Filter \cite{kalman}, and a ``certainty-equivalent'' controller (the linear quadratic regulator) for optimal performance. This is because the timeseries owner measures a \textbf{non-stationary timeseries} $s_t$ (e.g. spatiotemporal cell demand patterns), without an analytical process model for standard Kalman Filtering, motivating our subsequent use of learned DNN forecasters. Second, due to \textbf{data-rate constraints}, we must prioritize task-relevant features as opposed to equally weighting and sending the full $\hat{s}_t$, which a classic state observer in LQG would do. 

Moreover, even when the estimator and controller are separated by a bandwidth-limited network and the separation principle does not hold \cite{6146405}, our setting still differs from classical networked control \cite{hespanha2007survey,borkar1997lqg,tatikonda2004control,tatikonda2004stochastic,kostina2019rate}. These works assume that both the full plant state $x_t$ and controls $u_t$ are encoded and transmitted between a remote controller and plant. In stark contrast, the only transmitted data in our setting is \textit{external} information $s_t$ from a network operator, which can improve an independent controller's \textit{local} decisions $u_t$ based on its internal state $x_t$. As such, simply grouping controller state $x_t$ and network timeseries $s_t$ into a joint state for classical tele-operation is infeasible, since $x_t$ and $s_t$ are measured at different locations by different entities. 
%Our novel setting is important in smart cities to limit data sharing between IoT sensors, network operators, and controllers. 
In essence, Prob. \ref{prob:codesign} formalizes how a network operator can provide significant value to an independent controller by judicious data sharing.

\section{Forecaster and Controller Co-design}
\label{sec:algorithm}
Prob. \ref{prob:codesign} is of wide scope, and can encompass both neural network forecasters and controllers. For intuition, we first provide analytical results for an \textit{input-driven} LQR problem in Sec. \ref{subsec:input_driven_LQR}. We then use such insights in a general learning algorithm that scales to DNN forecasters in Sec. \ref{subsec:alg_codesign}.

\subsection{Input-Driven Linear Quadratic Regulator (LQR)}
\label{subsec:input_driven_LQR}

We first consider a simple instantiation of Prob. \ref{prob:codesign} with 
linear dynamics, no state or control constraints, and a quadratic control
cost. Since the dynamics have linear dependence on the exogenous input $s$, we
refer to this setting as an \textit{input-driven} LQR problem. We first analyze the problem
when controls are computed for the full-horizon from $t=0$ to $T=H$ and then extend
to receding-horizon control (MPC) in Sec. \ref{subsec:alg_codesign}. The dynamics and control cost are:
\begin{align}
    & x_{t+1} = A x_t + B u_t + C s_t, \\ 
    & \JMPC = \sum_{t=0}^{H} x_t^\top Q x_t + \sum_{t=0}^{H-1} u_t^\top R u_t, 
    \label{eq:input_driven_LQR} 
\end{align}
where $Q, R$ are positive definite.
Our first step is to determine the optimal control.
% To formally define Prob. \ref{prob:codesign} for input-driven LQR, we first find an optimal controller $\pi$ (Eq. \ref{eq:control_policy}).
Given the linear dynamics, for all times $i \in \{0, \cdots, H-1\}$, each $x_{i+1}$ is a linear function of initial condition $x_0$ and the \textit{full future} control vector $\boldu$ and $\bolds$:
\begin{align}
x_{i+1} = A^{i+1} x_0 + \bm{M_{i}} \boldu + \bm{N_{i}} \bolds, ~~\text{where}~~
\end{align}
\begin{align*}
\bm{M_{i}} = \begin{bmatrix}
A^{i}B & A^{i-1}B & \cdots & B & \bm{0}
\end{bmatrix} \in \mathbb{R}^{n \times mH}, \bm{N_{i}} = \begin{bmatrix}
    A^{i}C & A^{i-1}C & \cdots & C & \bm{0}
\end{bmatrix} \in \mathbb{R}^{n \times pH}.
\end{align*}

Therefore, given $x_0$ and vector $\bolds$, control cost $\JMPC$ is a quadratic function of $\boldu$:
\begin{align}
\JMPC(\boldu; x_0, \bolds) ~=~ &  \boldu^\top (\underbrace{\bm{R} + \sum_{i=0}^{H-1} \bm{M_{i}}^\top Q \bm{M_{i}}}_{\bm{K}}) \boldu + 2[\underbrace{\sum_{i=0}^{H-1} \bm{M_{i}}^\top Q (A^{i+1} x_0 + \bm{N_{i}} \bolds)}_{\bm{k}(x_0, \bolds)}]^\top \boldu +~ \text{constant},
\end{align}
where the constant of $\sum_{i=0}^{H-1} (A^{i+1} x_0 + \bm{N_{i}} \bolds)^\top Q (A^{i+1} x_0 + \bm{N_{i}} \bolds)$ is independent of $\boldu$, and $\bm{R} =\text{blockdiag}(R, \cdots, R) \in \mathbb{R}^{mH \times mH}$. Clearly, $\bm{K}$ is positive definite and $\JMPC$ is strictly convex.
Given the convex quadratic cost, the optimal control is $\bolduMPCopt$, where $\bolduMPCopt = - \bm{K}^{-1} \bm{k}(x_0, \bolds)$. However, given a possibly noisy forecast $\boldshat$, we would instead plan and enact controls denoted by $\bolduMPChat$, where $\bolduMPChat = - \bm{K}^{-1} \bm{k}(x_0, \boldshat)$. Thus, the sensitivity of such controls to forecast errors is: 
\begin{align}
& \bolduMPChat - \bolduMPCopt = - \bm{K}^{-1} (\bm{k}(x_0, \boldshat) - \bm{k}(x_0, \bolds))
= - \bm{K}^{-1} \underbrace{(\sum_{i=0}^{H-1} \bm{M_{i}}^\top Q \bm{N_{i}})}_{\bm{L}} (\boldshat - \bolds),
\label{eq:control_error}
\end{align}
and the sensitivity of the control cost to forecast errors is:
\begin{align}
    \JMPC(\bolduMPChat; x_0, \bolds) - & \JMPC(\bolduMPCopt; x_0, \bolds)
= (\bolduMPChat - \bolduMPCopt)^\top \bm{K} (\bolduMPChat - \bolduMPCopt) = (\boldshat - \bolds)^\top \underbrace{\bm{L}^\top \bm{K}^{-1} \bm{L}}_{\text{co-design matrix} ~\Psi} (\boldshat - \bolds), 
\label{eq:codesign_intuitive}
\end{align}
where we term the positive semi-definite \textit{co-design matrix} $\Psi = \bm{L}^\top \bm{K}^{-1} \bm{L}$.
We now combine the extra control cost and prediction error to calculate the total cost as: 
\begin{align}
    \Joverall & = \frac{1}{H}\big(\underbrace{(\boldshat - \bolds)^\top\Psi(\boldshat - \bolds)}_{\mathrm{extra~control~cost}} + \lambdaforecast \underbrace{(\boldshat - \bolds)^\top (\boldshat - \bolds)}_{\mathrm{prediction~error}}\big)
    = \frac{1}{H}\big((\boldshat - \bolds)^\top (\Psi + \lambdaforecast I) (\boldshat - \bolds)\big).
\end{align}
The above expression leads to an intuitive understanding of co-design. The co-design matrix $\Psi$ in Eq. \ref{eq:codesign_intuitive} essentially weights the error in elements of $\boldshat$ based on their importance to the ultimate control cost. Thus, our approach is fundamentally \textit{task-aware} since the co-design matrix depends on LQR's dynamics, control, and cost matrices as shown in Eq. \ref{eq:control_error} and \ref{eq:codesign_intuitive}. The optional weighting of prediction error with $\lambdaforecast$ acts as a regularization term. Moreover, we now show that we can reduce input-driven LQR to a low-rank approximation problem, which 
allows us to find an \textit{analytic} expression for an optimal encoder/decoder. 
%that solve Prob. \ref{prob:codesign}}.

%\subsection{Input-Driven LQR is Low-Rank Approximation}
\textbf{Input-Driven LQR is Low-Rank Approximation.}
%\label{subsec:low_rank_approximation}
Given the above expressions for the total cost, we now assume a simple parametric
model for the encoder and decoder to formally write Prob. \ref{prob:codesign} for the toy input-driven LQR setting. Specifically, we assume a linear encoder $E \in \reals^{\zbottleneck \times \pH}$ maps true exogenous input $\bolds$ to representation $\phi = E\bolds$, where $\phi \in \reals^{\zbottleneck}$. 
Then, linear decoder matrix $D \in \reals^{\pH \times \zbottleneck}$ yields decoded timeseries $\boldshat = D\phi = DE\bolds$. In practice, we often have a training dataset consisting of $N$ samples of exogenous input $\bolds$ drawn from a data distribution $\bolds \sim \mathcal{D}$. These samples can be arranged as columns in a matrix $\boldS \in \reals^{\pH \times N}$. To learn an encoder $E$ and decoder $D$ from $N$ samples $\boldS$ at once, we can express our problem as:
\begin{align}
    \argmin_{D,E} & \sum_{i=1}^{N}(\boldShat_i - \boldS_i)^\top (\Psi + \lambdaforecast I) (\boldShat_i - \boldS_i) \quad{\text{~where~}} \notag \\
    & \boldShat = DE \boldS, ~\text{rank}(D) \le \zbottleneck \text{~and~rank}(E) \le \zbottleneck, 
    \label{eq:low_rank_approximation}
\end{align}
where $\boldS_i$ and $\boldShat_i$ represent the $i$-th column vector of $\boldS$ and $\boldShat$. We now characterize the input-driven LQR problem.

\begin{prop}[Linear Weighted Compression]
    Input-driven LQR (Eq. \ref{eq:low_rank_approximation}) is a low-rank approximation problem, which admits an analytical solution for an optimal encoder and decoder pair $(E, D)$. \label{prop:input_dirven_lqr}
\end{prop}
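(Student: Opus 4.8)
The plan is to recognize the objective of Eq.~\ref{eq:low_rank_approximation}, after a linear change of variables, as an ordinary Frobenius-norm low-rank approximation problem, to solve that with the Eckart--Young--Mirsky theorem, and then to verify that the resulting low-rank target can actually be realized by a rank-$\zbottleneck$ linear encoder/decoder. Concretely, I would set $W := \Psi + \lambdaforecast I$. Since $\Psi = \bm{L}^\top \bm{K}^{-1}\bm{L} \succeq 0$ (because $\bm{K} \succ 0$) and $\lambdaforecast > 0$, $W$ is symmetric positive definite and has a symmetric positive-definite square root $W^{1/2}$. Writing $\boldShat = DE\boldS =: M\boldS$ with $\mathrm{rank}(M)\le\zbottleneck$, the cost becomes $\sum_i (M\boldS_i - \boldS_i)^\top W (M\boldS_i - \boldS_i) = \| W^{1/2}(M-I)\boldS\|_F^2$. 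Substituting $Y := W^{1/2}\boldS \in \reals^{\pH\times N}$ and $\tilde M := W^{1/2} M W^{-1/2}$, which still has rank at most $\zbottleneck$ since $W^{\pm 1/2}$ are invertible, the objective equals $\| \tilde M Y - Y\|_F^2$.

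Next, since every $\tilde M Y$ with $\mathrm{rank}(\tilde M)\le\zbottleneck$ has rank at most $\zbottleneck$, the transformed problem is a relaxation of $\min_{P}\{\| P - Y\|_F^2 : \mathrm{rank}(P)\le\zbottleneck\}$. By Eckart--Young--Mirsky the latter is solved by the truncated SVD: if $Y = U\Sigma V^\top$ with singular values in nonincreasing order, then $P^{*} = U_{\zbottleneck}\Sigma_{\zbottleneck}V_{\zbottleneck}^\top$, where $U_{\zbottleneck}, V_{\zbottleneck}$ hold the leading $\zbottleneck$ singular vectors and $\Sigma_{\zbottleneck}$ the leading $\zbottleneck$ singular values. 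The key point is that this relaxed optimum is attainable within the original, more restrictive feasible set: taking $\tilde M^{*} := U_{\zbottleneck}U_{\zbottleneck}^\top$, a rank-$\zbottleneck$ orthogonal projector, gives $\tilde M^{*} Y = U_{\zbottleneck}(U_{\zbottleneck}^\top U)\Sigma V^\top = U_{\zbottleneck}\Sigma_{\zbottleneck}V_{\zbottleneck}^\top = P^{*}$. Hence the two optimal values coincide and $\tilde M^{*}$ is optimal for the transformed problem; it is also a genuine lower-bound argument, so no feasible $\tilde M$ can do better.

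Finally, I would undo the change of variables: $M^{*} = W^{-1/2}\tilde M^{*}W^{1/2} = W^{-1/2} U_{\zbottleneck}U_{\zbottleneck}^\top W^{1/2}$, which is rank $\zbottleneck$, and read off a compatible factorization $D^{*} = W^{-1/2}U_{\zbottleneck} \in \reals^{\pH\times\zbottleneck}$ and $E^{*} = U_{\zbottleneck}^\top W^{1/2} \in \reals^{\zbottleneck\times\pH}$, each of rank at most $\zbottleneck$, with $D^{*}E^{*} = M^{*}$. Conversely, any $M$ with $\mathrm{rank}(M)\le\zbottleneck$ factors as $M = DE$ with $D\in\reals^{\pH\times\zbottleneck}$, $E\in\reals^{\zbottleneck\times\pH}$ of rank at most $\zbottleneck$ (a rank factorization suffices), so the feasible set of Eq.~\ref{eq:low_rank_approximation} and that of the $M$-problem genuinely coincide. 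This yields the claimed closed-form optimal pair, which is unique only up to an invertible $\zbottleneck\times\zbottleneck$ reparametrization inserted between $D$ and $E$ and up to the usual SVD ambiguities.

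The main obstacle is the tightness step in the second paragraph: the decision variable is not the reconstruction matrix itself but a linear operator applied to the fixed data $\boldS$, so one must exhibit an honest rank-$\zbottleneck$ linear map reproducing the Eckart--Young target exactly — which works precisely because orthogonal projection onto the leading left-singular subspace of $Y$ returns the truncated SVD. A secondary technicality is ensuring $W \succ 0$, so that the weighted quadratic form is a bona fide norm and $W^{1/2}$ is invertible; if one wishes to allow $\lambdaforecast = 0$ with $\Psi$ only positive semidefinite, the argument must instead be carried out on the range of $W$ (or one simply assumes $\Psi + \lambdaforecast I \succ 0$).
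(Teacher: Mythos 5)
Your proof is correct and follows essentially the same route as the paper's: a change of variables by a square root of $\Psi + \lambdaforecast I$ reduces the weighted objective to a Frobenius-norm low-rank approximation, Eckart--Young gives the truncated SVD, and an explicit projector-based encoder/decoder attains it. The only cosmetic difference is that you use the symmetric square root $W^{1/2}$ where the paper uses $\Lambda^{1/2}Y^\top$ from the eigen-decomposition; your treatment is in fact slightly more explicit about the attainability of the relaxed optimum and the equivalence of the $DE$-factorized and rank-constrained feasible sets, both of which the paper handles only implicitly via its final verification.
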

\begin{proof}
    \renewcommand{\qedsymbol}{}
We first re-write the objective of the input-driven LQR problem (Eq. \ref{eq:low_rank_approximation}) as: 
$\sum_{i=1}^{N}(\boldShat_i - \boldS_i)^\top (Y \Lambda Y^\top) (\boldShat_i - \boldS_i)
= \doublevert \Lambda^{\frac{1}{2}} Y^\top \boldShat - \Lambda^{\frac{1}{2}}Y^\top \boldS \doublevert_F^2$, 
where $Y \Lambda Y^\top$ is the eigen-decomposition of the positive definite matrix $\Psi + \lambdaforecast I$ and $\doublevert . \doublevert_F$ represents the Frobenius norm of a matrix. Thus, the problem can be written as: 
    \begin{align}
        \argmin_{D,E} \doublevert \underbrace{\Lambda^{\frac{1}{2}}Y^\top DE \boldS}_{\mathrm{approximation}} - \underbrace{\Lambda^{\frac{1}{2}}Y^\top \boldS}_{\mathrm{original}} \doublevert_F^2 \quad{\text{~where~}} ~\text{rank}(D) \le \zbottleneck \text{~and~rank}(E) \le \zbottleneck, 
        \label{eq:low_rank_approximation_final}
    \end{align}
    which is the canonical form of a low-rank approximation problem. By the Eckhart-Young theorem, the solution to the input-driven LQR problem (Eq. \ref{eq:low_rank_approximation_final}) is the rank $Z$ truncated singular value decomposition (SVD) of original matrix $\Lambda^{\frac{1}{2}}Y^\top\bold{S}$, denoted by $U \Sigma V^\top$. In the truncated SVD, $U\in\reals^{\pH \times Z}$ is semi-orthogonal, $\Sigma \in \reals^{Z\times Z}$ is a diagonal matrix of singular values, and $V \in \reals^{N \times Z}$ is semi-orthogonal. Further, an encoder of $E = U^\top \Lambda^{\frac{1}{2}} Y^\top$ and decoder of $D = (\Lambda^{\frac{1}{2}} Y^\top)^{-1}U$ solve the problem since:
\begin{align*}
    \underbrace{\Lambda^{\frac{1}{2}}Y^\top DE \bold{S}}_{\mathrm{approximation}}
    = & ~\Lambda^{\frac{1}{2}}Y^\top \underbrace{(\Lambda^{\frac{1}{2}}Y^\top)^{-1}U}_{D} \underbrace{U^\top \Lambda^{\frac{1}{2}}Y^\top}_{E} \bold{S}
    = ~U (U^\top \Lambda^{\frac{1}{2}} Y^\top \bold{S} ) = \underbrace{U \Sigma V^\top}_{\mathrm{optimal~rank~Z~approximation}}.  
\end{align*}
%Due to space limits, further details are in the supplement. 
%In the last step, we used the fact that $U^\top Y^\top \Lambda^{\frac{1}{2}}\bold{S} = U^\top (U \Sigma V^\top + U' \Sigma' V'^\top) = \Sigma V^\top$ because $U$ and $U'$ are orthogonal.
\end{proof}

A similar analysis for a linear encoder-decoder structure for networked inference, not control, is presented in \cite{DBLP:conf/rss/NakanoyaCADKP21}. The key difference from our current paper is our problem setup is for control, not networked inference.

\textbf{Compression benefits: }
Casting input-driven LQR as low-rank approximation provides significant intuition.
As shown in Proposition \ref{prop:input_dirven_lqr}, the optimal encoder/decoder depend on the truncated SVD of $\Lambda^{\frac{1}{2}} Y^\top \boldS$, which takes into account the control task via the co-design matrix, importance of prediction errors via $\lambdaforecast$, and statistics of the input via $\boldS$. \SC{We achieved strong compression benefits for simulations of input-driven LQR (provided in supplement Fig. \ref{fig:main_pca_full} due to space limits).}

%%%%%%%%%%%%%%%%%%%%%%%%%%%%%%%%%%%%%%%%%%%%
%\subsection{Transitioning to Model Predictive Control (MPC)}
\textbf{Transitioning to Model Predictive Control (MPC).}
In practice, we often have forecasts for a short horizon $H < T$. Then, starting from any state $x_t$, MPC will plan a sequence of controls $\hat{u}_{t:t+H-1}$, enact the first control $\hat{u}_t$, and then re-plan with the next forecast. If we replace the horizon to $H < T$ in the input-driven LQR analysis in Sec. \ref{subsec:input_driven_LQR}, $\bolduMPCopt = - \bm{K}^{-1} \bm{k}(x_0, \bolds)$ gives the optimal control for a \textit{short-horizon} $H$ and we can encode/decode using a low rank approximation as in Prop. 1. While the performance is \textit{not} necessarily optimal for the full duration $T$, MPC performs extremely well in practice, yielding even better compression gains, as shown in the supplement (Fig. \ref{fig:main_pca}). 

We also note a practitioner can adopt a simple cost function based on MPC that complements Eq. \ref{eq:weighted_cost}. The MPC controller $\pi$ will optimize the cost $\Joverall$ given a short-horizon forecast $\shat_{t:t+H-1}$, but only enact the \textit{first} control $\hat{u}_t = \pi(x_t, \shat_{t:t+H-1}; \thetacontrol)$. Meanwhile, the best first control MPC can take is $u_t = \pi(x_t, s_{t:t+H-1}; \thetacontrol)$ with perfect knowledge of $s$ for horizon $H$. Thus, our insight is that we can penalize the errors in \textit{enacted} controls $\hat{u}_t$ during training and regularize for prediction error, using cost:
$\frac{1}{T}\big(\sum_{t=0}^{T-1} \doublevert \hat{u}_t - u_t \doublevert_2^2 + \lambdaforecast \doublevert \hat{s}_{t} - s_t \doublevert_2^2\big)$.
In our experiments, we observed strong performance by optimizing for the cost Eq. \ref{eq:weighted_cost}, as well as the above cost, which optimizes $\Joverall$ over a short-horizon for MPC.
We now crystallize these insights from input-driven LQR into a formal algorithm that applies to data-driven MPC.
%\JC
%{
%However, for MPC in general we are not able to relate control difference directly to forecasting error (like what we did in Eq. \ref{eq:control_error}), because the $x_t$ evolves differently with optimal and non-optimal control and hence also contributes to control difference. Therefore,  
%we replace the extra control cost in Eq. \ref{eq:weighted_cost} with the average control error, defined as:
%\begin{small}
%\begin{align*}
%    \frac{1}{T} \sum_{t=0}^{T-1} \doublevert \pi(x_t, s_{t:t+H-1}; \thetacontrol) - \pi(x_t, s_{t:t+H-1}; \thetacontrol) \doublevert_2^2
%\end{align*}
%\end{small}
%where $\pi(x_t, s_{t:t+H-1}; \thetacontrol)$ and $\pi(x_t, s_{t:t+H-1}; \thetacontrol)$ adopt the same $x_t$. Our ideology is, the controller with a small average control error in general has a similar evolution of $x_t$ as the optimal controller, which further produces near-optimal control cost. 
%
%(Can remove the definition in Sec. 5.2 after adding this)  
%}
%%%%%%%%%%%%%%%%%%%%%%%%%%%%%%%%%%%%%%%%%%%%
\subsection{Algorithm to Co-design Forecaster and Controller}
\label{subsec:alg_codesign}
For more complex scenarios than LQR, it is challenging to provide analytical forms of an optimal encoder and decoder. Thus, we present a heuristic algorithm to solve Prob. \ref{prob:codesign} in Algorithm \ref{alg:codesign}. 
Our key technical insight is that, if the encoder, decoder, and controller are differentiable, we can write:
\begin{align}
    \frac{\nabla \Joverall(\boldu, \bolduhat, \bolds, \boldshat; x_0, \lambdaforecast)}{\nabla \thetaencoder} = \frac{\nabla \Joverall(\boldu, \bolduhat, \bolds, \boldshat; x_0, \lambdaforecast)}{\nabla (\boldshat - \bolds)} \times \frac{\nabla (\boldshat- \bolds)}{\nabla \thetaencoder},
\end{align}
and likewise for $\thetadecoder$. The first term captures the sensitivity of the control cost with respect to prediction errors and the second propagates that sensitivity to the forecasting model.
Crucially, the gradient of $\Joverall$ can be obtained from recent methods that learn differentiable MPC controllers \cite{agrawal2020learningcontrol,amos2018differentiable}. 

%\begin{figure*}[ht]
%\vskip 0.2in
\begin{wrapfigure}{L}{0.5\textwidth}
\begin{minipage}{0.5\textwidth}
\begin{algorithm}[H]
   \caption{Compression Co-design for Control}
   \label{alg:codesign}
   \begin{algorithmic}[1]
       \STATE Set forecast weight $\lambdaforecast$, bottleneck size $\zbottleneck$ \label{alg:lambda}
       \STATE Initialize encoder/decoder parameters $\thetaencoder^{0}$, $\thetadecoder^{0}$ randomly, and fix controller parameters $\thetacontrol$ \label{alg:init_controller}
       \FOR{$\tau~ \gets 0$ \KwTo $\Nepochs-1$}{
               \STATE Initialize Controller State $x_0 \in \mathcal{X}$ 
           \FOR{$t~ \gets 0$ \KwTo $T-1$}{
               \STATE Encode $\phi_t = g_{\mathrm{encode}}(s_{t-W+1:t}; \thetaencoder^{\tau})$ \label{alg:encode}
               \STATE Decode $\hat{s}_{t:t+H-1} = g_{\mathrm{decode}}(\phi_t; \thetadecoder^{\tau})$ \label{alg:decode}
               \STATE Enact $\hat{u}_{t} = \pi(x_t, \hat{s}_{t:t+H-1}; \thetacontrol)$ 
               \STATE Propagate  $x_{t+1} \gets f(x_t, \hat{u}_t, s_t)$ 
               \STATE $u_{t} = \pi(x_t, s_{t:t+H-1}; \thetacontrol)$ (For Training Only) \label{alg:plan}
            }
           \ENDFOR
           \STATE $\thetaencoder^{\tau + 1}, \thetadecoder^{\tau + 1} \gets$\par$\textsc{BackProp} \big[\Joverall(\boldu,\bolduhat,\bolds,\boldshat; x_0, \lambdaforecast)\big]$ \label{alg:backprop}
       }
       \ENDFOR
       \STATE Return learned parameters $\thetaencoder^{\Nepochs}, \thetadecoder^{\Nepochs}$
    \end{algorithmic}
\end{algorithm}

\end{minipage}
\end{wrapfigure}
%\vskip -0.2in
%\vskip -0.2in
%\end{figure*}

In lines 1-2 of Alg.\ref{alg:codesign}, we randomly initialize the encoder and decoder parameters and set the latent representation size $\zbottleneck$ to limit the communication data-rate. Then, we enact control policy rollouts in lines 3-11 for $\Nepochs$ training epochs, each of duration $T$. We first encode and decode the forecast $\boldshat$ (lines 6-7) and pass them to the downstream controller with fixed parameters $\thetacontrol$ (lines 8-10). During training, we calculate the loss by comparing the optimal weighted cost with \textit{true} input $\bolds$ and the forecast $\boldshat$. In turn, this loss is used to train the differentiable encoder and decoder through backpropagation in line 12. Finally, the learned encoder and decoder (line 14) are deployed.

\textbf{Co-design Algorithm Discussion: }
A few comments are in order. First, true input $\bolds$ is only needed during \textit{training}, which is accomplished at a single server using historical data to avoid passing large gradients over a real network. Then, we can periodically re-train the encoder/decoder during online deployment.
Second, our approach also applies when $\thetacontrol$ are parameters of a deep reinforcement learning (RL) policy. 
However, since the networked systems we consider have well-defined dynamical models, we focus our evaluation on model-based control. 
%learning forecasters for model-based control.

\section{Application Scenarios}
\label{sec:scenarios}
We now describe three diverse application scenarios addressed in our evaluation. The scenarios
are linear MPC problems with box control constraints:
\begin{align}
    & x_{t+1} = x_{t} + u_t - s_t, ~~\text{(Dynamics)}  \quad{\text{~where~}} ~~
    u_{\text{min}} \leq u_t \leq u_{\text{max}}. ~~\text{(Constraints)} \label{eq:dynamics_MPC}
\end{align}
Our scenarios have the same state and control dimensions $m=n$, and dynamics/control matrices $A=B=I_{n \times n}$ indicate uniform coupling between controls and the next state. Finally, we have actuation limits $u_{\text{min}}$ and $u_{\text{max}}$. The cost function incentivizes regulation of the state $x_t$ to a set-point $L$. In practice, we often want to penalize states below the set-point, such as inventory shortages where $x_t < L$, more heavily than those above, such as excesses. In the following cost, weights $\gamma_e, \gamma_s, \gamma_u \in \reals^{+}$ govern excesses, shortages, and controls $u_t$ respectively:
\begin{align}
    \Jcontrol(\boldx, \boldu) = \sum_{t=0}^{T} (\gamma_e \doublevert [x_t - L]_{+} \doublevert^2_2 + \gamma_s \doublevert [L - x_t]_{+} \doublevert^2_2)  + \sum_{t=0}^{T-1} \gamma_u \doublevert u_t \doublevert^2_2,
\label{eq:biased_LQR_cost}
\end{align}
where $[x]_{+}$ represents the positive elements of a vector. We focus on linear MPC with box constraints and a flexible quadratic cost (Eq. \ref{eq:biased_LQR_cost}) since it is a canonical problem \cite{Camacho2013,borrelli2017predictive} with wide applications in networked systems. \SC{However, to show the generality of co-design, we provide strong experimental results for a mobile video streaming application with \textit{noisy, non-linear} dynamics in Appendix Sec. \ref{subsec:nonlinear}}.
We evaluate diverse MPC settings coupled with an array of neural network forecasters.
%Our diverse scenarios couple linear MPC with neural-network forecasting models.
%Linear MPC with box constraints and a flexible quadratic cost (Eq. \ref{eq:biased_LQR_cost}) is a canonical problem with wide engineering applications \cite{Camacho2013,borrelli2017predictive}.
%%\subsection{Data-Center Temperature Control}
%\textbf{Data-Center Temperature Control:}
%Consider an idealized datacenter cooling problem, where $x \in \statedim$ represents the temperature of $n$ server racks we wish to regulate to a temperature set-point of $L$. External heat disturbances $s \in \exoinputdim$ add to temperature state $x_t$ uniformly with $\beta = 1$ in the dynamics (Eq. \ref{eq:dynamics_MPC}). Disturbances are measured by $p=n$ IoT temperature sensors, such as from nearby heating units in the building. Our objective is to select control inputs $u \in \controldim$ to cool the server racks \textit{anticipating} heat disturbances $s$ from the $p$ IoT sensors. The cost function (Eq. \ref{eq:biased_LQR_cost}) has $\gamma_e = \gamma_s = \gamma_u = 1$ to equally penalize temperature deviation and cooling effort. Finally, we captured stochastic timeseries of temperature, pressure, humidity, and light from the Google Edge Tensor Processing Unit (TPU)'s environmental sensor board. Further details of our real-world dataset are provided in the supplement. 

\textbf{Smart Factory Regulation with IoT Sensors:}
We consider an \textit{idealized} scenario similar to datacenter temperature control \cite{recht2019tour}, where $x_t \in \statedim$ represents the temperature, humidity, pressure and light for $\frac{n}{4}$ machines in a smart factory, each of whose 4 sensor measurements we want to regulate to a set-point of $L$. External heat, humidity, and pressure disturbances $s \in \exoinputdim$ add to state $x_t$ in the dynamics (Eq. \ref{eq:dynamics_MPC}). Disturbances are measured by $p=n$ IoT sensors, such as from nearby heating units. Our objective is to select control inputs $u \in \controldim$ to regulate the environment \textit{anticipating} disturbances $s$ from the $p$ IoT sensors. The cost function (Eq. \ref{eq:biased_LQR_cost}) has $\gamma_e = \gamma_s = \gamma_u = 1$ to equally penalize deviation from the set-point and regulation effort. Finally, we collected two weeks of stochastic timeseries of temperature, pressure, humidity, and light from the Google Edge Tensor Processing Unit (TPU)'s environmental sensor board for our experiments, as detailed in the supplement.
% $x_t \in \statedim$ represents the temperature, humidity, and pressure of $n$ machines in a smart factory we wish to regulate to a set-point of $L$ per machine. External heat, humidity, and pressure disturbances $s \in \exoinputdim$ add to state $x_t$ in the dynamics (Eq. \ref{eq:dynamics_MPC}). Disturbances are measured by $p=n$ IoT temperature sensors, such as from nearby heating units. Our objective is to select control inputs $u \in \controldim$ to cool and regulate the machine pressure \textit{anticipating} disturbances $s$ from the $p$ IoT sensors. The cost function (Eq. \ref{eq:biased_LQR_cost}) has $\gamma_e = \gamma_s = \gamma_u = 1$ to equally penalize deviation from the set-point and regulation effort. Finally, we collected two weeks of stochastic timeseries of temperature, pressure, humidity, and light from the Google Edge Tensor Processing Unit (TPU)'s environmental sensor board for our experiments, as detailed in the supplement.  

%\subsection{Taxi Dispatch Based on Cell Demand Data}
\textbf{Taxi Dispatch Based on Cell Demand Data: }
In this scenario, state $x_t \in \statedim$ represents the difference between the number of free taxis and waiting passengers at $n$ city sites, so $x_t > 0$ represents idling taxis while $x_t < 0$ represents queued passengers. Control $u_t \in \controldim$ represents how many taxis are dispatched to serve queued passengers. Exogenous input $s_t \in \exoinputdim$ represents how many new passengers join the queue at time $t$. Of course, the taxi service has a historical forecast of $s_t$, but the cellular operator can use city-wide mobility data to \textit{improve} the forecast. 
Our goal is to regulate $x_t$ to $L=0$ to neither have waiting passengers nor idling taxis. In the cost function (Eq. \ref{eq:biased_LQR_cost}), we have $\gamma_e = 1, \gamma_s = 100$ and $\gamma_u = 1$ to heavily penalize customer waiting time for long queues. Our simulations use 4 weeks of stochastic cell demand data from Melbourne, Australia from \cite{chinchali2018cellular}.

\textbf{Battery Storage Optimization: }
Our final scenario is inspired by a closely-related work to ours \cite{donti2017task}, who consider how a \textit{single} battery must be charged or discharged based on electricity price forecasts. Since our setting involves a vector timeseries $s$, we consider electrical load forecasts from \textit{multiple} markets. Thus, we used electricity demand data from the same PJM operator as in \cite{donti2017task}, but from multiple markets in the eastern USA \cite{PJM}. Specifically, state $x_t \in \statedim$ represents the charge on $\ninput$ batteries and control $u_t \in \controldim$ represents how much to charge the battery to meet demand. Timeseries $s_t \in \exoinputdim$ represents the demand forecast at the locations of the $n$ batteries, where $p=n$. 
In the cost function (Eq. \ref{eq:biased_LQR_cost}), we desire a battery of total capacity $2L$ to reach a set-point where it is half-full, which, as per \cite{donti2017task}, allows flexibly switching between favorable markets. Further, we set $\gamma_e = \gamma_s = \gamma_u = 1$. 
%The box constraints enforce a positive charge between $0$ and $2L$, so $0 \le x_t \le 2L$.
%Further, to ensure demand can be met, we penalize lower charge stores more heavily than excess charge with $\gamma_e \gg \gamma_s$. 

%\JC
%{
%[IoT, PJM] Basic formulation (save us from using $A, B, C$ ...):
%\begin{align*}
%x_{t+1} = x_{t} + u_t - s_t, \\
%J = \sum_{t=0}^{T} ||x_t||^2_2 + \sum_{t=0}^{T-1} ||u_t||^2_2 
%\end{align*}
%
%[Cell] Biased cost function:
%\begin{align*}
%J = \sum_{t=0}^{T} (\gamma_e ||x_{t, e}||^2_2 + \gamma_s ||x_{t, s}||^2_2) + \sum_{t=0}^{T-1} ||u_t||^2_2 
%\end{align*}
%where $x_{t, e}$ and $x_{t, s}$ are the positive and negative part of $x_t$, respectively.
%
%[IoT 2nd Version] Box-constrained dynamics:
%\begin{align*}
%u_{\text{min}} \leq u_t \leq u_{\text{max}} \\
%\end{align*}
%}

\section{Evaluation}
\label{sec:evaluation}
The goal of our evaluation is to demonstrate that our co-design
algorithm achieves near-optimal control cost, but for much smaller representations $\zbottleneck$ compared to task-agnostic methods. 

\tbf{Metrics.} We evaluate the following metrics: 
1) We quantify the \textbf{control cost} for various bottleneck sizes $\zbottleneck$, relative to the \textit{optimal cost} when ground-truth input $s$ is shared \textit{without} a network bottleneck.
2) To quantify the benefits of sending a representation of size $\zbottleneck$ compared to the full forecast $\shat_{t:t+H-1}$ of size $\pH$, we define the \textbf{compression gain} as $\frac{\pH}{\zbottleneck}$. We also compare the minimum bottleneck $Z$ required to achieve within $5\%$ of the optimal cost for all benchmarks. 
3) Since the objective of Prob. \ref{prob:codesign} also incorporates prediction error, we quantify the \textbf{MSE forecasting error} for various $\zbottleneck$.

\begin{figure*}[t]
\vskip 0.2in
\begin{center}
\includegraphics[width=0.8\columnwidth]{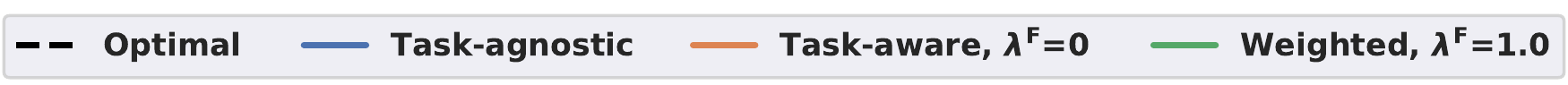}
%\subfigure[]{
\subfigure{
{\includegraphics[width=0.31\columnwidth]{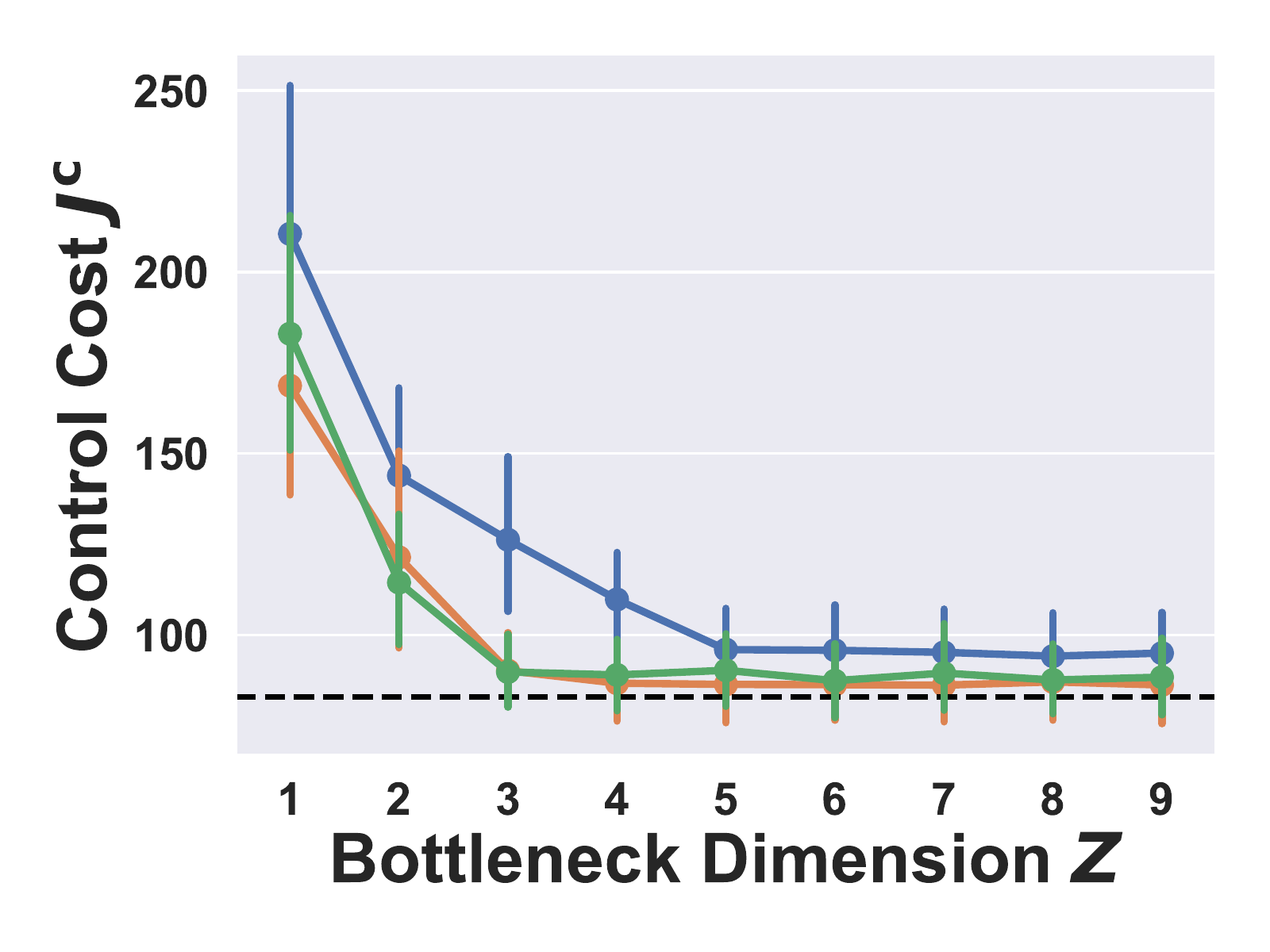}}
\label{fig_iot_cost_bottleneck}
}
%\subfigure[]{
\subfigure{
{\includegraphics[width=0.31\columnwidth]{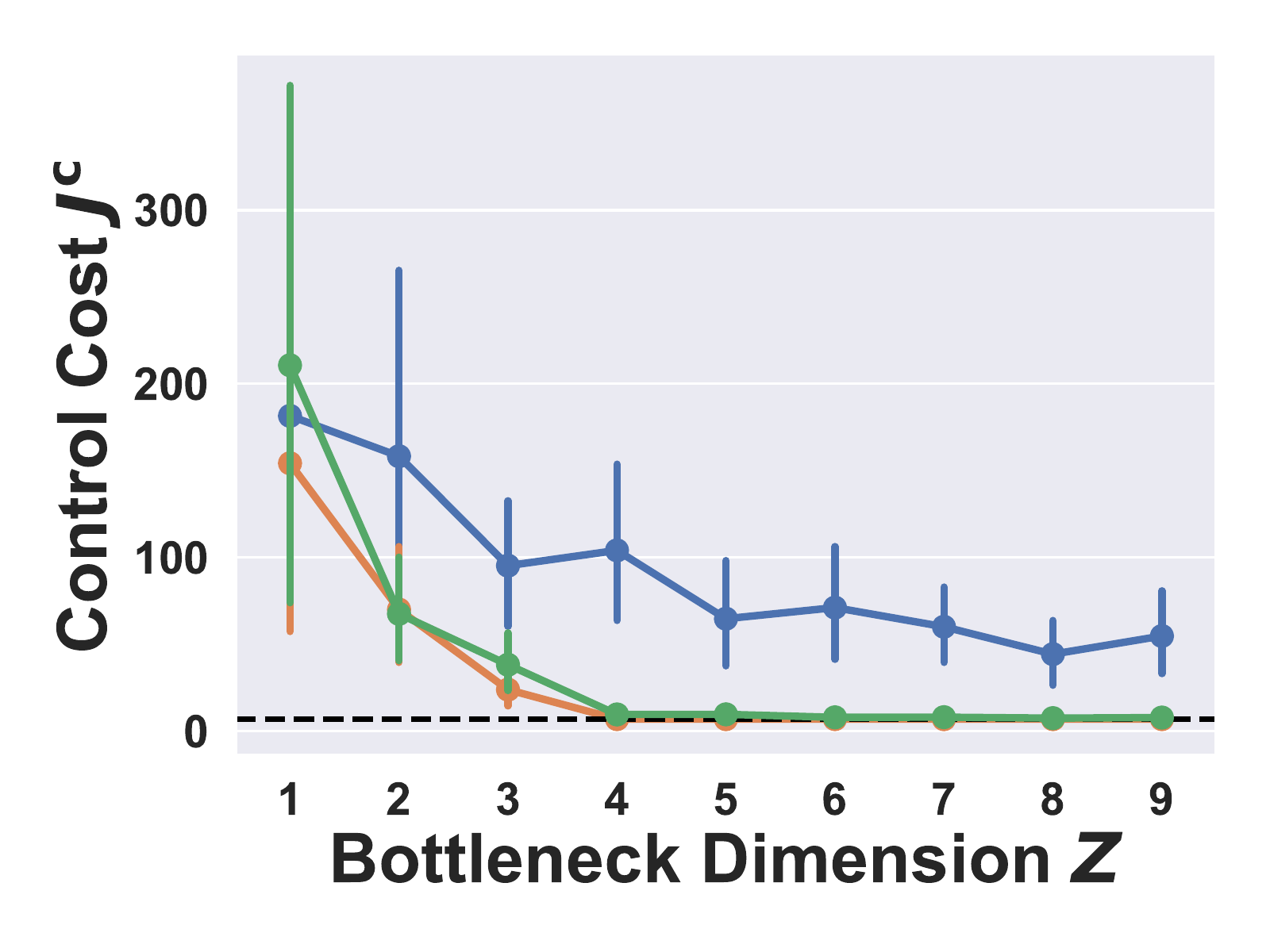}}
\label{fig_cell_cost_bottleneck}
}
%\subfigure[]{
\subfigure{
{\includegraphics[width=0.31\columnwidth]{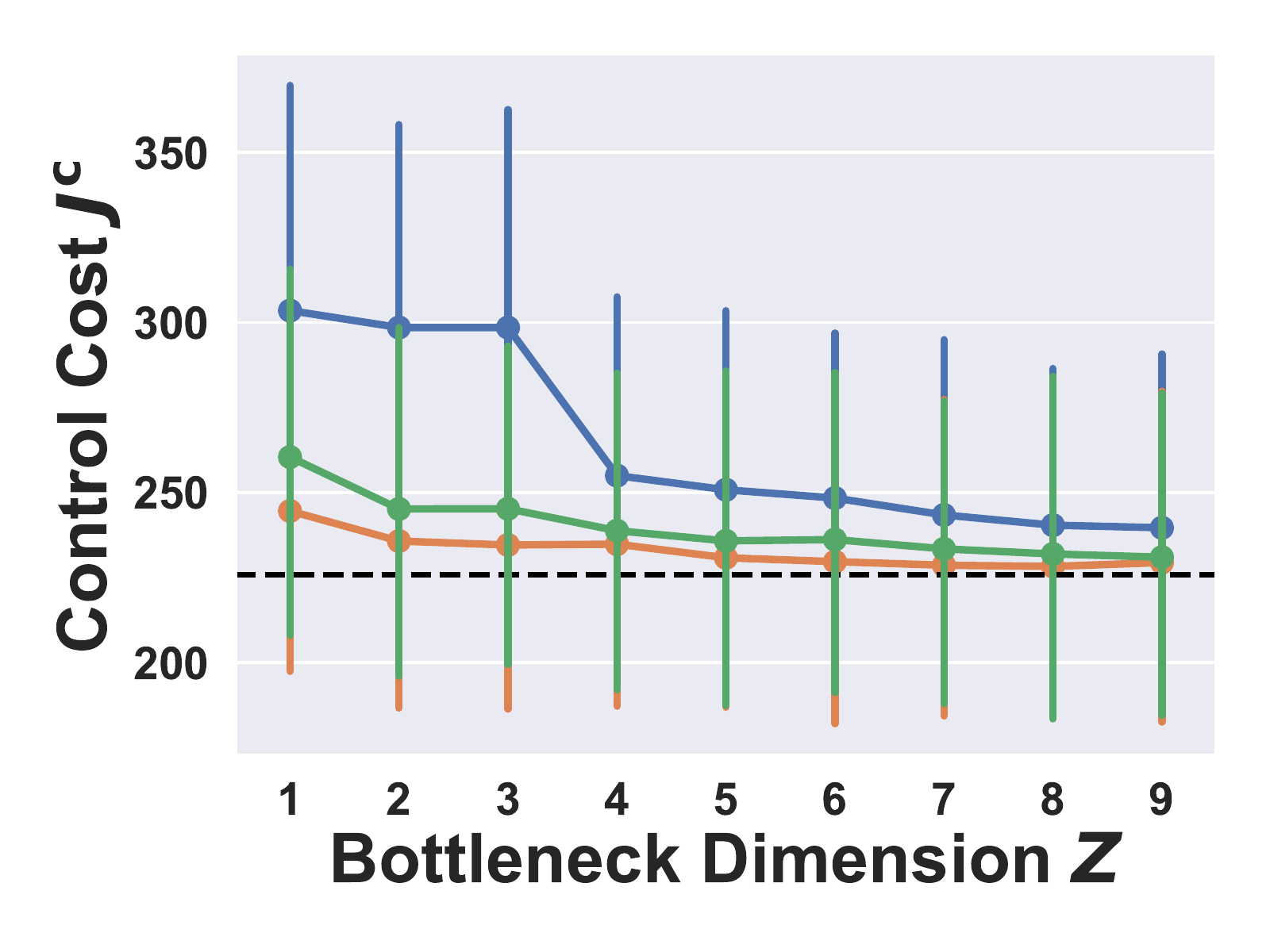}}
\label{fig_pjm_cost_bottleneck}
}
%\subfigure[]{
\subfigure{
{\includegraphics[width=0.31\columnwidth]{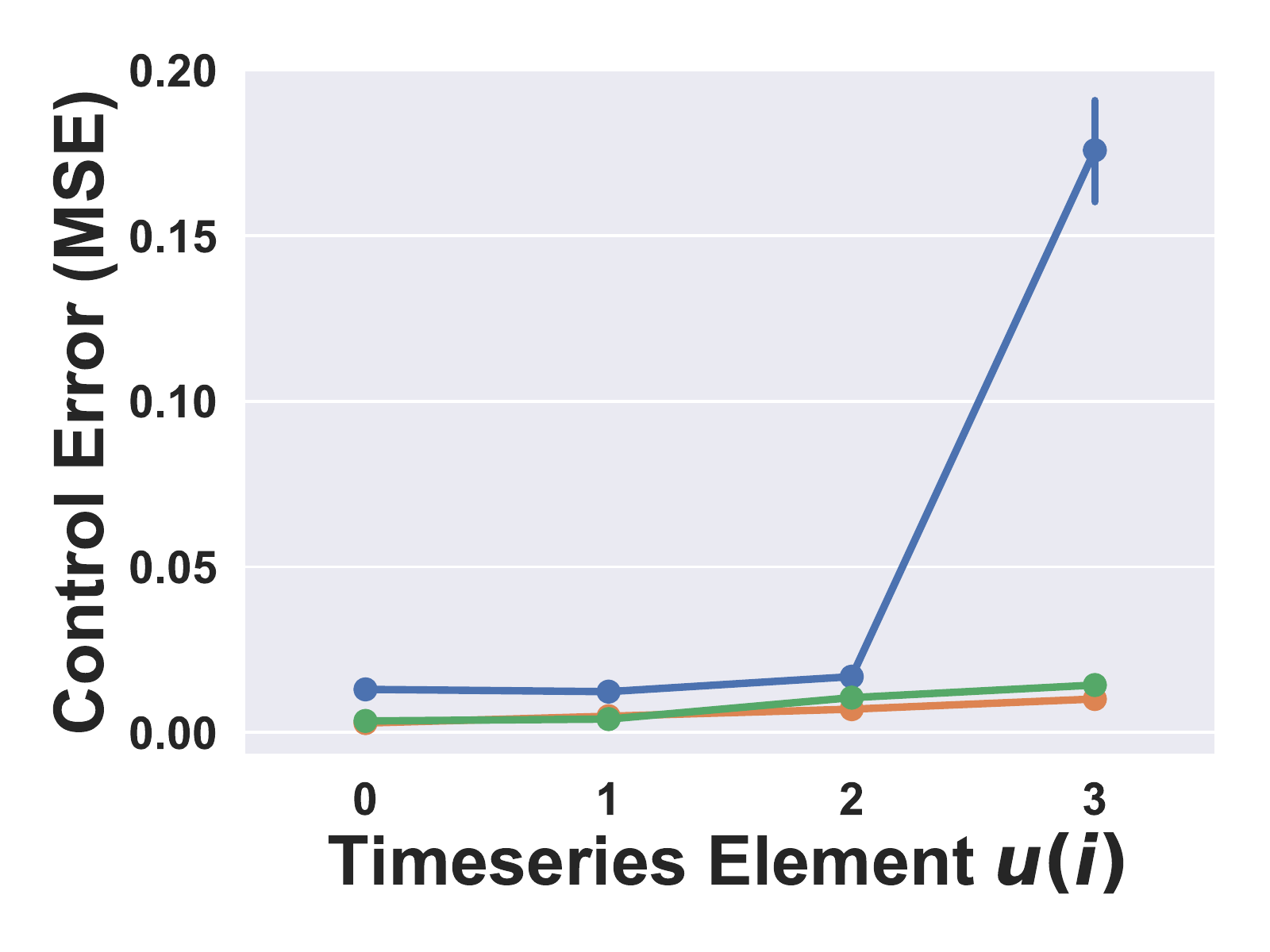}}
\label{fig_iot_control_errors}
}
%\subfigure[]{
\subfigure{
{\includegraphics[width=0.31\columnwidth]{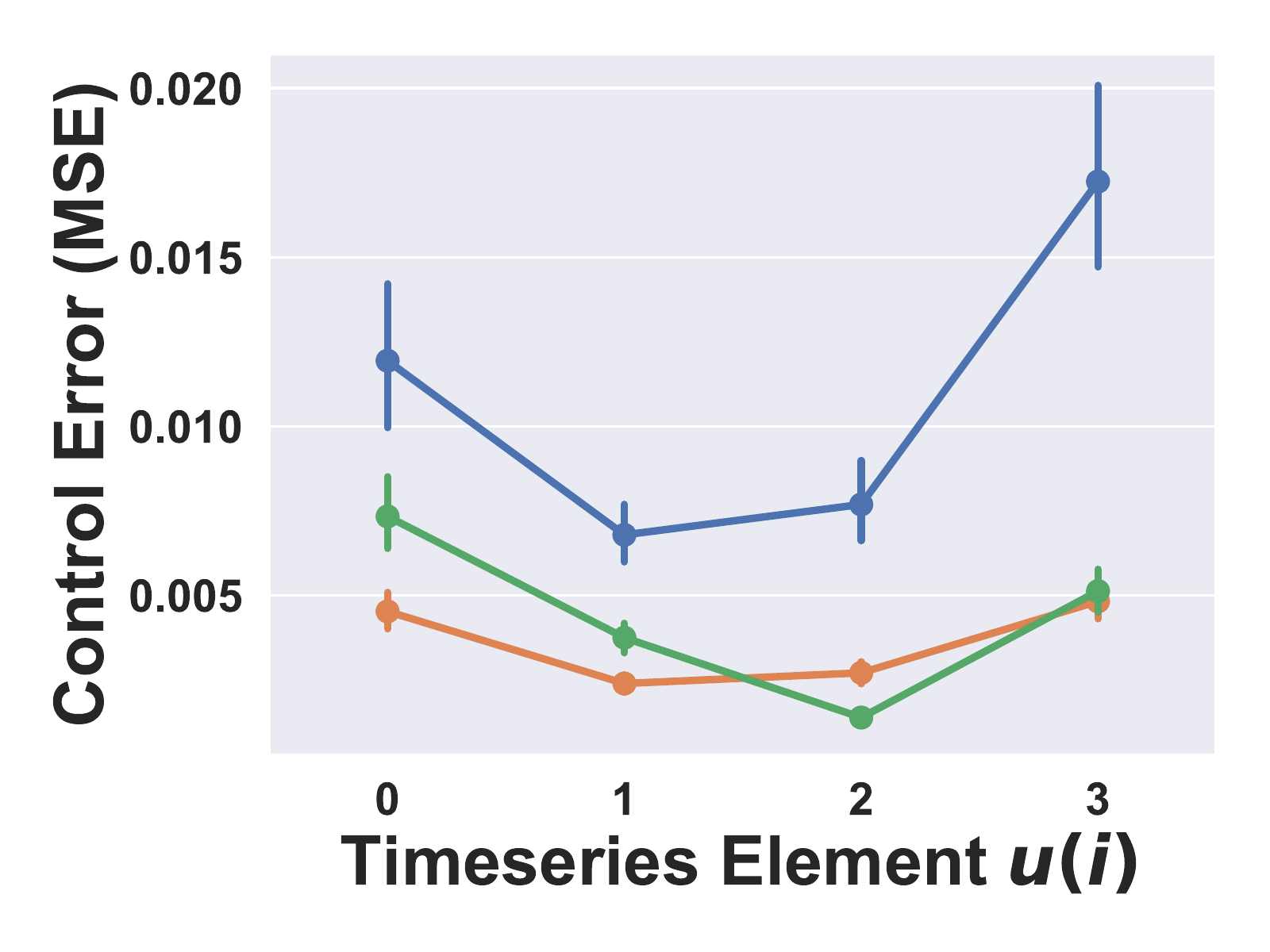}}
\label{fig_cell_control_errors}
}
%\subfigure[]{
\subfigure{
{\includegraphics[width=0.31\columnwidth]{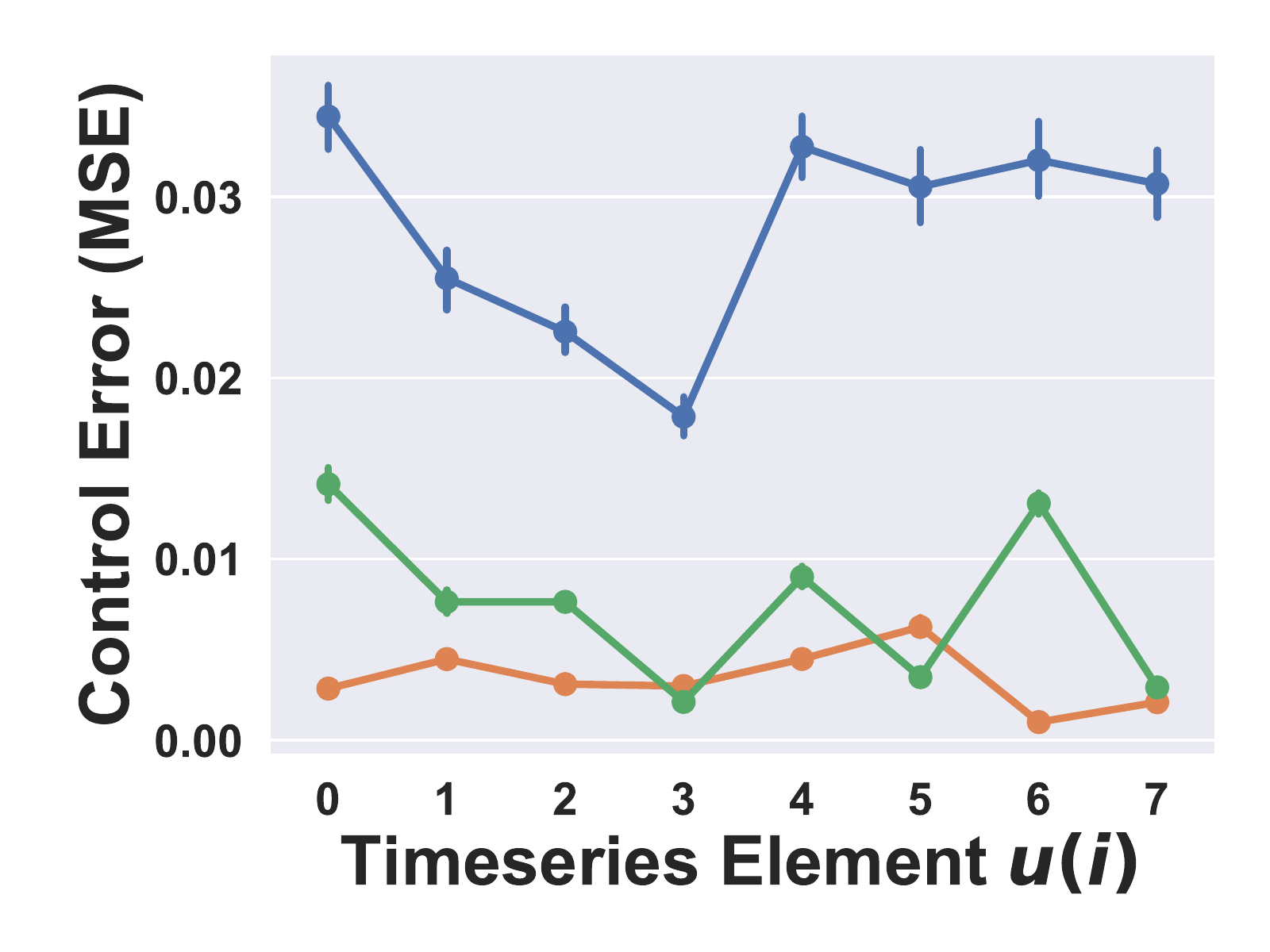}}
\label{fig_pjm_control_errors}
}
%\subfigure[]{
\subfigure{
{\includegraphics[width=0.31\columnwidth]{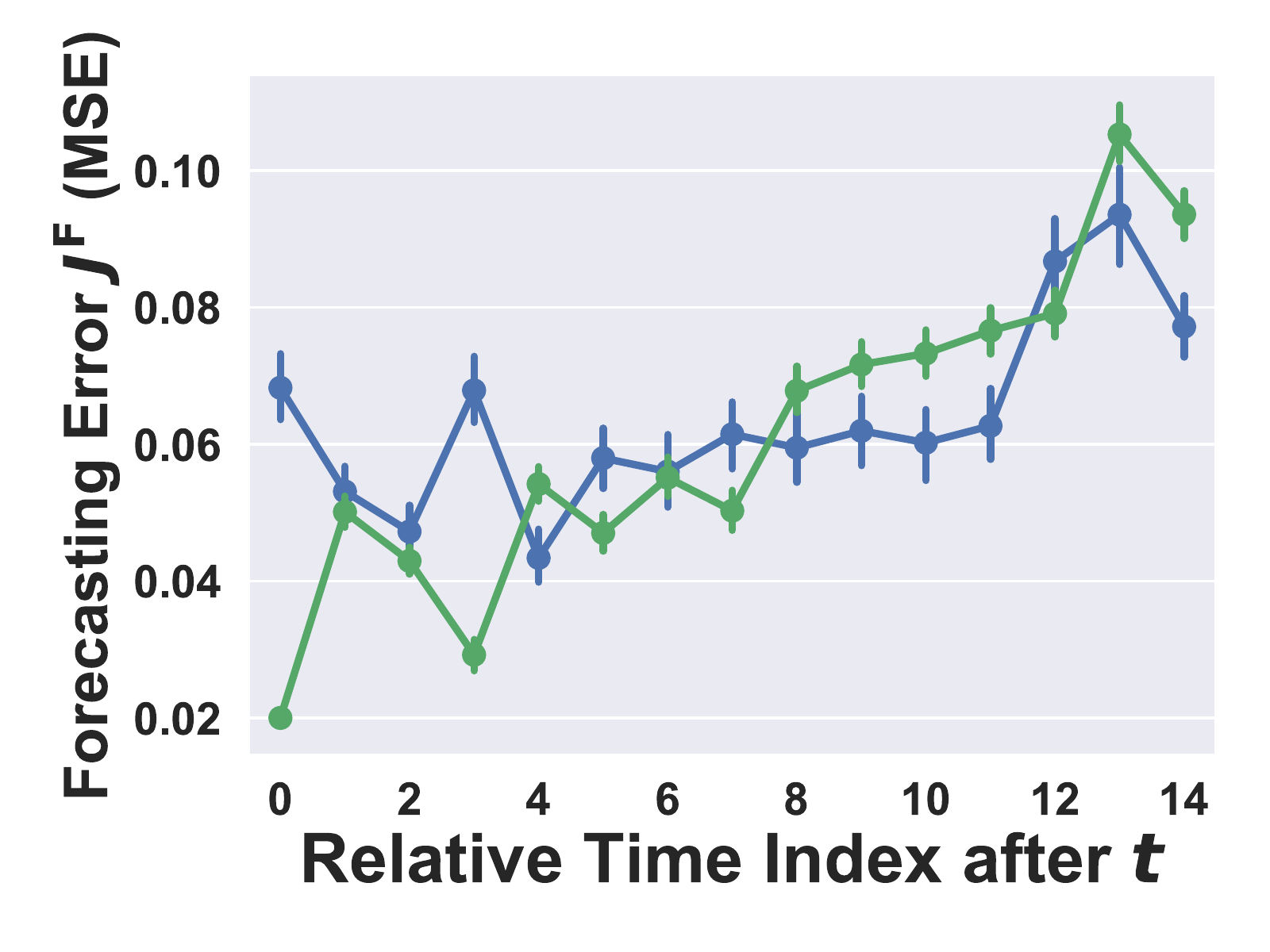}}
\label{fig_iot_forecast_errors_time_horizon}
}
%\subfigure[]{
\subfigure{
{\includegraphics[width=0.31\columnwidth]{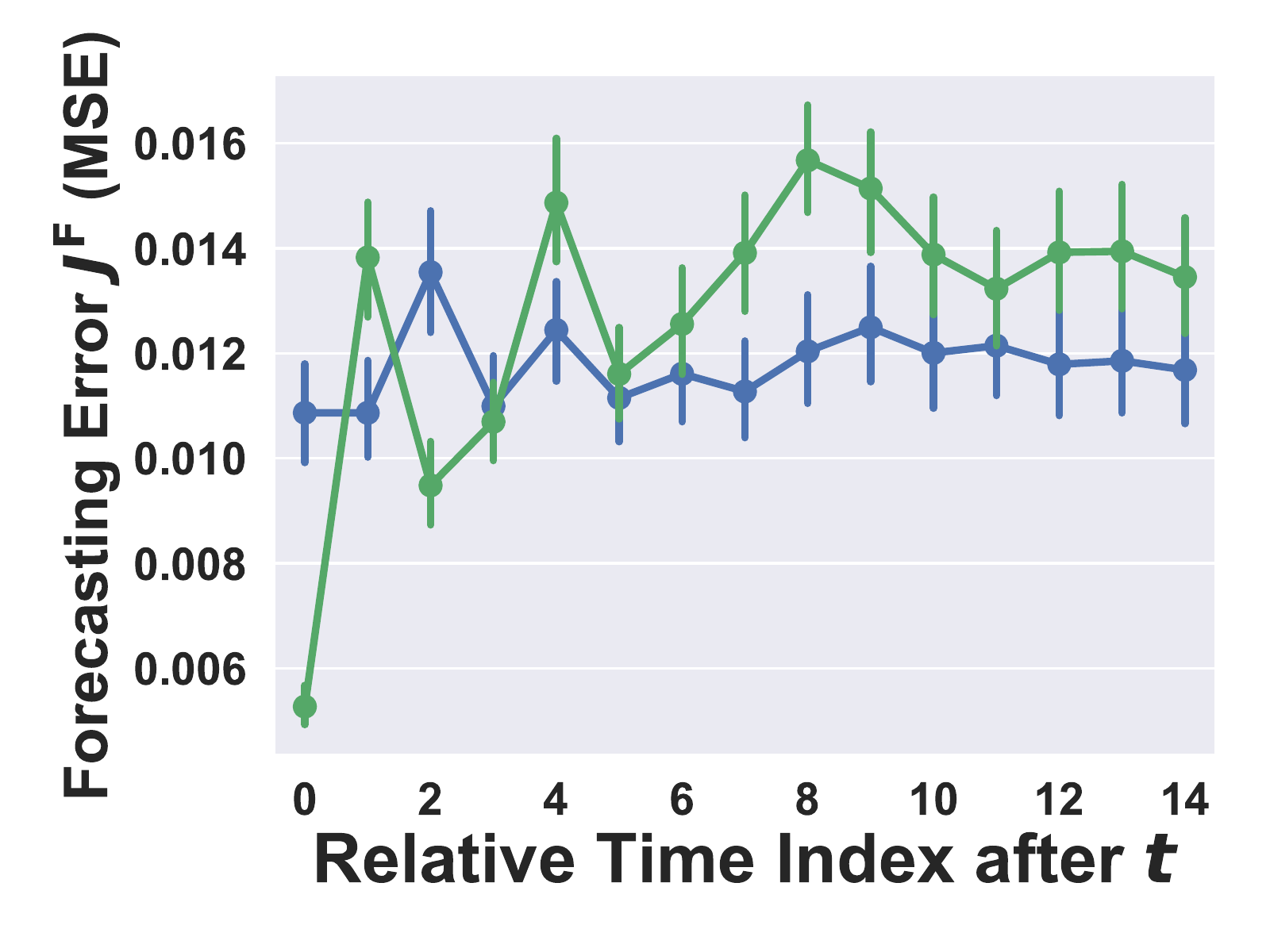}}
\label{fig_cell_forecast_errors_time_horizon}
}
%\subfigure[]{
\subfigure{
{\includegraphics[width=0.31\columnwidth]{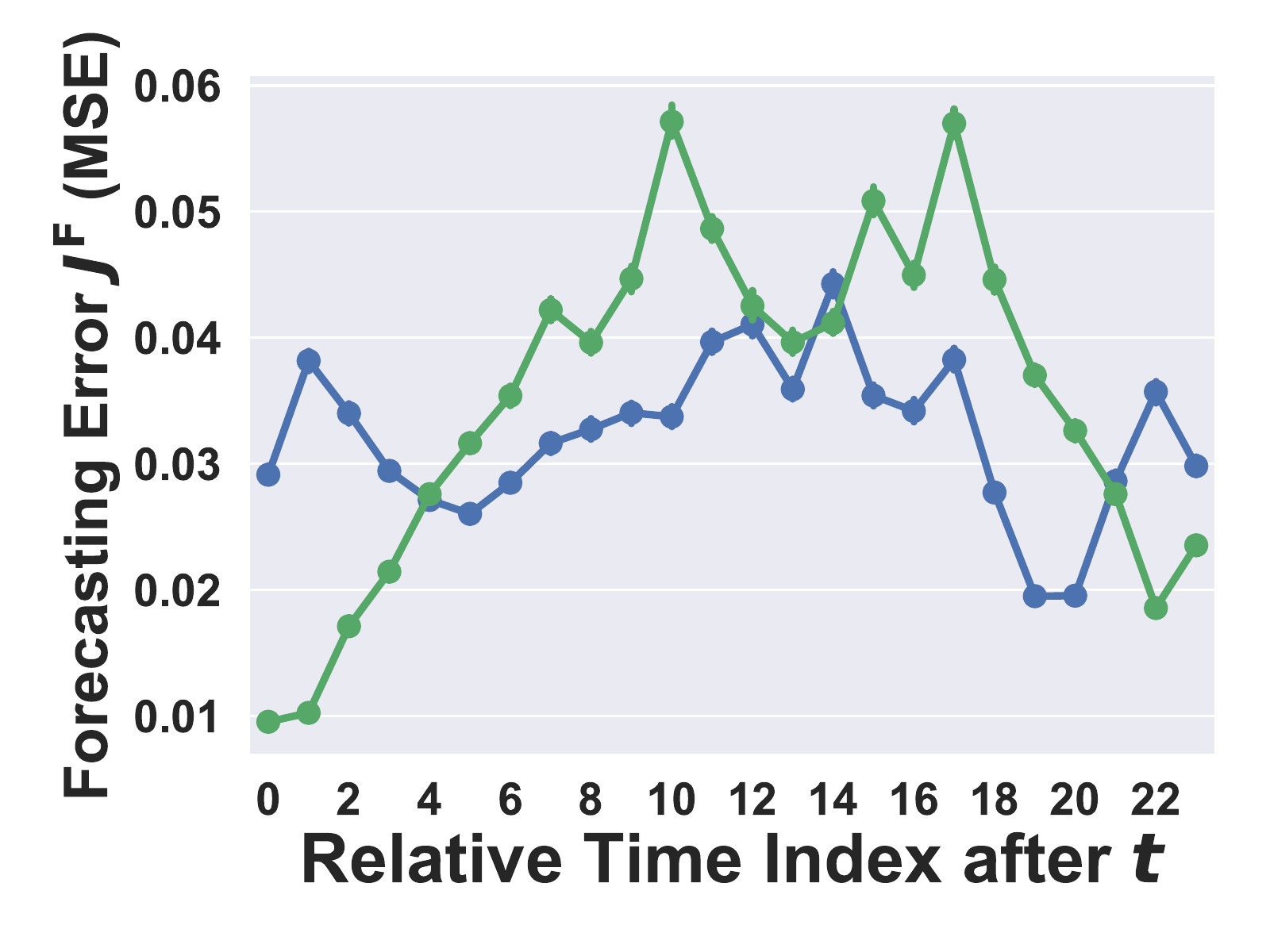}}
\label{fig_pjm_forecast_errors_time_horizon}
}
\caption{\textbf{Real-world dataset results: } From left to right, the columns correspond to smart factory regulation from IoT sensors, taxi dispatching with cell demand, and battery storage optimization. \textbf{(Row 1)} Co-design achieves lower cost $\Jcontrol$ for smaller bottlenecks $\zbottleneck$ compared to task-agnostic methods. \textbf{(Row 2)} We also achieve lower error for each dimension $i$ of the vector control, $u(i)$, plotted for a highly-compressed $Z=3$. \textbf{(Row 3)} Co-design heavily reduces forecasting errors for initial horizons that are especially important for MPC's decision-making.}
%While a task-agnostic approach (blue) distributes forecasting errors roughly evenly across a horizon $H$, our co-design approaches heavily reduce errors for initial horizons that are especially important for MPC's decision-making.}
\label{fig_real}
\end{center}
\vskip -0.2in
\end{figure*}
%Aggregated forecasting error for  each relative time index when $Z=3$, under different policies. (j-l) Forecasting error for each $s(i)$ and each relative time index when $Z=3$, under task-agnostic (top) and weighted (bottom) policy, respectively.
%\caption{Results for real data. Columns from left to right corresponds to smart home regulation, taxi dispatching and battery storage optimization, respectively. (a-c) Control cost $J$ under different bottleneck dimension $Z$ and training policies; (d-f) Control error for each $u(i)$ when $Z=3$ under different training policies; (g-i) Aggregated forecasting error for  each relative time index when $Z=3$, under different policies. (j-l) Forecasting error for each $s(i)$ and each relative time index when $Z=3$, under task-agnostic (top) and weighted (bottom) policy, respectively.}

%\subsection{Algorithm Instantiations and Benchmarks}
\tbf{Algorithms and Benchmarks.} We test the above metrics on the following algorithms, which represent various instantiations of Alg. \ref{alg:codesign} for different $\lambdaforecast$ as well as today's prevailing method of optimizing for prediction MSE. Our algorithms and benchmarks are:
1) \textbf{Fully Task-aware ($\lambdaforecast=0$):} We co-design with $\lambdaforecast=0$ according to Alg. \ref{alg:codesign} to assess the full gains of compression.
2) \textbf{Weighted}: We instantiate Alg. \ref{alg:codesign} with $\lambdaforecast > 0$ to assess the benefits of task-aware compression as well as forecasting errors induced by compression. In practice, $\lambdaforecast$ is user-specified. For visual clarity, we show results for $\lambdaforecast=1$ in Fig. \ref{fig_real} since the trends for other $\lambdaforecast$ mirror those in Fig. \ref{fig:main_pca_full}.
3) \textbf{Task-agnostic (MSE)}: Our benchmark learns a forecast $\boldshat$ to minimize MSE prediction error, which is directly passed to the controller \textit{without} any co-design.

\textbf{Forecaster and Controller Models.} 
We compared forecast encoder/decoders with long short term memory (LSTM) DNNs \cite{LSTM} and simple feedforward networks. We observed similar performance for all models, which we hypothesize is because co-design needs to represent only a small set of \textit{control-relevant} features. We used standard DNN architectures, hyperparameters, and the Adam optimizer, as further detailed in the supplement. 
Our code and data are publicly available at \url{https://github.com/chengjiangnan/cooperative_networked_control}.
% \SC{All our code and data (provided in the supplement) will be \textbf{publicly-released} after peer review.}
%Finally, we use differentiable quadratic program solvers to obtain the gradient of MPC's control cost using the \textsc{qpth} \cite{amos2017optnet} library. 

We now evaluate our algorithms on the IoT, taxi scheduling, and battery charging scenarios described in Sec. \ref{sec:scenarios}. Our results on a \textit{test} dataset are depicted in Fig. \ref{fig_real}, where each column corresponds to a real dataset and each row corresponds to an evaluation metric, as discussed below.

\textbf{How does compression affect control cost?} The first row of Fig. \ref{fig_real} quantifies the control cost $\Jcontrol$ for various compressed representations $\zbottleneck$. The optimal cost, in a dashed black line, is an unrealizable lower-bound cost when the controller is given the true future $s_{t:t+H-1}$ without any forecast error. The vertical bars show the distribution of costs across several test rollouts, each with different timeseries $\bolds$. 
%\JC{The vertical bars indicate the distribution of costs across different testing samples.}
%Our key result is that our co-design schemes (green and orange) achieve within \SC{$5 \%$} of the optimal cost but with a representation size that is at least \SC{$4\times$} smaller than a task-agnostic baseline in blue. Thus, for only \SC{$5 \%$} more cost than the optimal scheme, we achieve a compression gain of \SC{$15 \times, 15 \times$}, and \SC{$96 \times$} for the IoT, traffic, and battery datasets respectively. 
% In all curves, our weighted co-design approach (green) requires a marginally larger bottleneck than the purely task-aware approach ($\lambdaforecast = 0$) since the latent representation should minimize both control and forecast error. 
%The benefits of a weighted approach to also minimize forecast error are described subsequently.
Our key result is that our task-aware scheme (orange) achieves within $5 \%$ of the optimal cost, but with a small bottleneck size $Z$ of $4$, $4$ and $2$ for the IoT, traffic, and battery datasets, respectively. This corresponds to an absolute compression gain of $15\times$, $15\times$, and $96\times$ for each dataset. In contrast, with the same bottleneck sizes, a competing task-agnostic scheme (blue) incurs at least $25\%$ more control cost than our method.

Moreover, for the IoT and battery datasets, the task-agnostic benchmark requires a large bottleneck of $Z=35$ and $Z=11$, leading our approach to transmit $88 \%$ and $82 \%$ less data respectively. Strikingly, even for a large representation of $Z = 60$, a task-agnostic scheme incurs $100\%$ more cost than the optimal for the cell traffic dataset. This is because the cost function is highly sensitive to shortages with $\gamma_s \gg \gamma_e$, which is not captured by simply optimizing for \textit{mean} error. To clearly see the trend in Fig. \ref{fig_real}, we only plot until $Z=9$, but ran the experiments until $Z=60$. Our weighted approach (green) requires a marginally larger representation than the purely task-aware approach ($\lambdaforecast = 0$) since it should minimize both control and forecast error. 

\textbf{Does co-design reduce control errors?}
We now investigate how the compression benefits of co-design arise. Given the stochastic nature of all our real world datasets, all prediction models inevitably produce forecasting error, which in turn induce errors in selecting controls. However, the key benefit of co-design methods is they explicitly model and account for how MPC chooses controls based on noisy forecasts $\boldshat$, and are thus able to minimize the control error, which we now quantify.

As defined in Sec. \ref{subsec:input_driven_LQR}, for any state $x_t$, $u_t$ is the optimal first MPC control given perfect knowledge of $s_{t:t+H-1}$, while $\hat{u}_t$ is MPC's actual enacted control given a noisy forecast.
%First, we define the optimal control $u_t$ to be the control selected by MPC given the ground-truth future $s_{t:t+H-1}$ from a given initial state $x_t$. For the same initial condition $x_t$, we define $\hat{u}_t$ to be MPC's control when it is given a forecast $\shat_{t:t+H-1}$ which can have errors due to encoding/decoding. 
Then, the \textit{control errors} across various control dimensions $i$ are the MSE error $\doublevert u_t(i)  - \hat{u}_t(i) \doublevert^2_2$ between optimal control $u_t(i)$ and $\hat{u}_t(i)$. The second row of Fig. \ref{fig_real} clearly shows that our task-aware and weighted methods (orange and green) achieve lower \textit{control} error on all three datasets.

\textbf{Why does co-design yield task-relevant forecasts?} 
To further show that our co-design approach reduces forecasting error for the purposes of an ultimate control task, we show forecasting errors across various time horizons in the third row of Fig. \ref{fig_real}. As argued in the previous section, all forecasting models produce prediction error. However, a task-agnostic forecast (blue) roughly equally distributes prediction error across the time horizon $t$ to $t+H-1$. In stark contrast, the weighted co-design approach (green) drastically reduces prediction errors in the \textit{near future} since MPC enacts the first control $u_t$ and then re-plans on a rolling horizon. Of course, the full forecast $\shat_{t:t+H-1}$ matters to enact control plan $\hat{u}_{t:t+H-1}$, but the cost is most sensitive to the initial forecast and control errors in our MPC scenarios. For visual clarity, we present forecast errors of the fully task-aware approach ($\lambdaforecast = 0$) in the supplement, since the errors are much larger than the other two methods. 
%\SC{discuss 12 hours period battery.}

\textbf{Limitations: } \SC{Our work does not automatically learn the optimal bottleneck size $Z$ that minimizes control cost nor necessarily learn a human-interpretable latent representation.} 
%Further, co-design may yield minimal benefits if every timeseries feature is unique and equally-important for control.}
%However, as evidenced by our evaluation, such a scenario is unlikely for diverse engineering domains.}

\section{Conclusion}
\label{sec:conclusion}
Society is rapidly moving towards ``smart cities'' \cite{batty2012smart,al2015applications}, where smart grid and 5G wireless network operators alike can share forecasts to enhance external control applications. This paper presents a preliminary first step towards this goal, by contributing an algorithm to learn task-relevant, compressed representations of timeseries for a control objective. 
%On real datasets spanning diverse engineering domains, our method transmits at least $80\%$ less data and achieves better control performance than today's task-agnostic benchmarks.
Our future work will center around privacy guarantees that constrain learned representations to filter personal features, such as individual mobility patterns. Further, we want to certify our algorithm does not reveal proprietary control logic or private internal states of the downstream controller. While recent work has addressed how to value datasets for supervised learning \cite{ghorbani2019data,agarwal2019marketplace}, a promising extension of our work is to price timeseries datasets for cooperative control in a data-market. Indeed, our ability to gracefully trade-off control cost with data exchange lends itself to an economic analysis. %Overall, we believe our work is a valuable first step to learning efficient representations for networked control.
%, which is timely given recent deployments of fleets of IoT devices, 5G networks, and diverse energy sources in increasingly ``smart'' cities.

\bibliography{ref/ref}
\bibliographystyle{unsrtnat}

% \section*{Checklist}
% \input{sections/checklist.tex}

\newpage
\appendix
\section{Appendix}

\subsection{Nonlinear Dynamics with Transition Noise }
\label{subsec:nonlinear}

To illustrate that our co-design approach works well for systems with \textbf{nonlinear} dynamics, we provide the following nonlinear
example concerning an idealized mobile video streaming scenario. In this application, a mobile video client stores a buffer of video segments and must choose a video quality to download for the next segment of video. The goal is to maximize the quality of video while minimizing video stalls, which occur when the buffer under-flows while waiting for a segment to be downloaded. Here, state $x_t$ represents the buffer of stored video segments, control $u_t$ is segment quality, and $s_t$ is network throughput. The nonlinear dynamics are $x_{t+1} = [ x_{t} - u_t \oslash s_t ]_+ + L_x + \eta_{t}$, where $\oslash$ represents element-wise division, $L_x$ is the increase in stored video for each download, and $\eta_t$ is Gaussian transition noise. The cost aims to keep a positive buffer and have high video quality:
    $\Jcontrol(\boldx, \boldu) = \sum_{t=0}^{T} \gamma_x \doublevert x_t - L_x \doublevert^2_2  + \sum_{t=0}^{T-1} \gamma_u \doublevert u_t - L_u\doublevert^2_2.$

\begin{figure}[ht]
\vskip 0.2in
\begin{center}
\subfigure{
{\includegraphics[width=0.6\columnwidth]{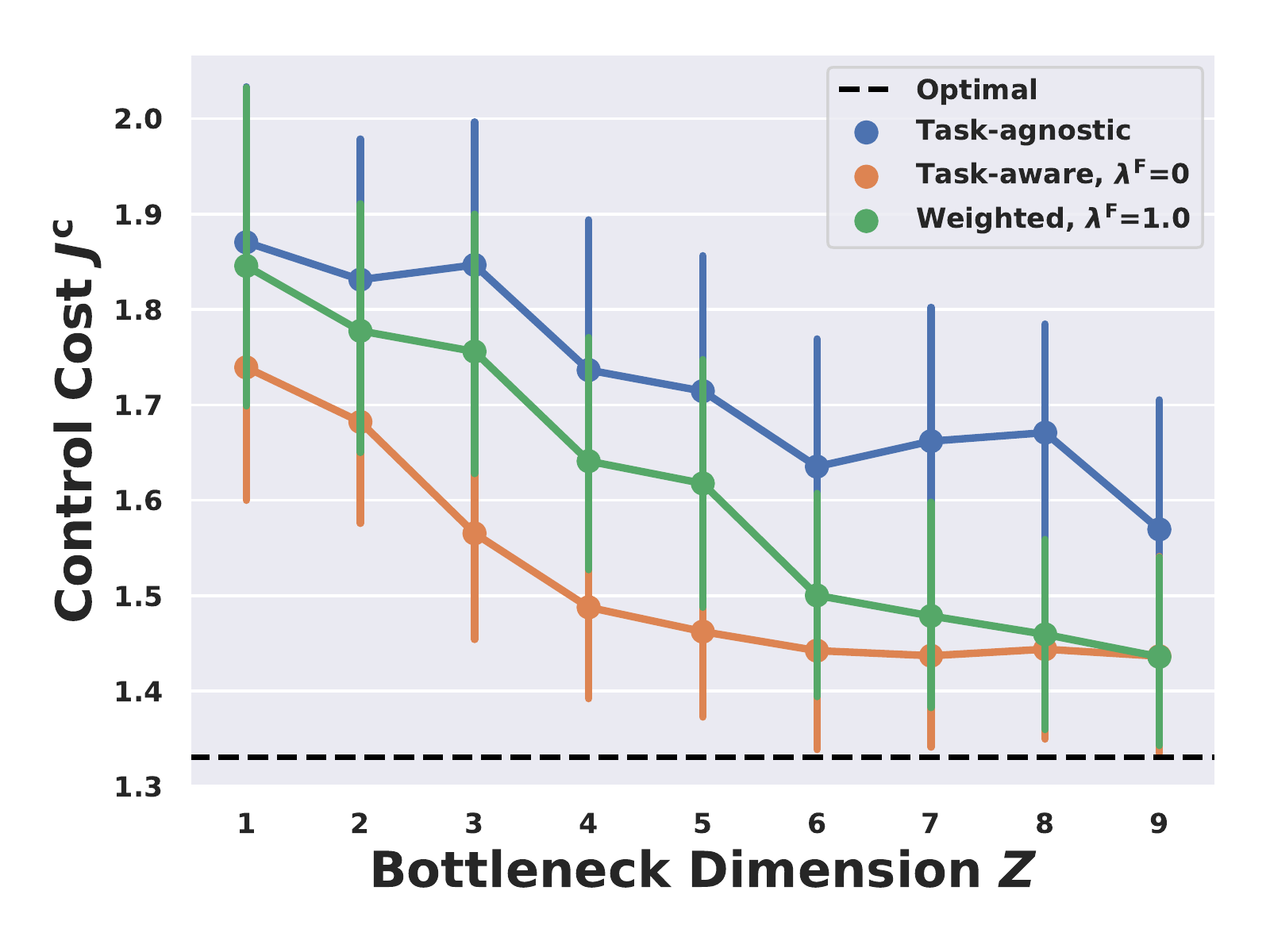}
}}
\caption{Co-Design Results with \textit{Nonlinear} Dynamics and Transition Noise.}
\label{fig:nonlinear}
\end{center}
\vskip -0.2in
\end{figure}

Fig. \ref{fig:nonlinear} clearly shows our approach works quite well for a nonlinear scenario with transition noise, which complements the three diverse examples in the main paper. In the above experiments, the parameters are: $T=60$, $W = H = 15$, $m=n=p=4$, $\gamma_x = 0.25$, $\gamma_u = 1$, $L_x = 0.5 \times \mathbbm{1}_n$, $L_u = 0.2 \times \mathbbm{1}_m$. 

\subsection{Time Horizon}
\begin{figure}[ht]
\vskip 0.2in
\begin{center}
\subfigure{
{\includegraphics[width=0.6\columnwidth]{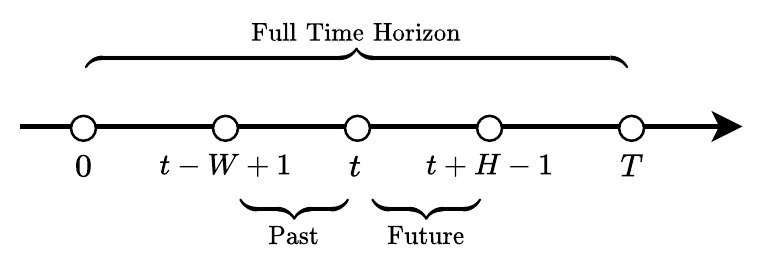}}
}
\caption{Time horizon illustration.}
\label{fig:time_horizon}
\end{center}
\vskip -0.2in
\end{figure}

Fig. \ref{fig:time_horizon} illustrates the time horizon of the problem we consider in Sec. \ref{sec:problem_statement}.

\subsection{Additional Explanations on the Proof of Proposition \ref{prop:input_dirven_lqr}}
Here, we provide some additional explanations on the proof of Proposition \ref{prop:input_dirven_lqr}, which are not included in the main paper due to space limits. 

1) \textbf{Positive definite matrix.} $\Psi + \lambdaforecast I$ ($\lambdaforecast > 0$) is positive definite because, $(\Psi + \lambdaforecast I)^\top = \Psi + \lambdaforecast I$, and $\Joverall \geq \lambdaforecast || \boldshat - \bolds||_2^2 > 0$ for any $\boldshat \neq \bolds$.

2) \textbf{Eigen-decomposition.}
The eigen-decomposition of $\Psi + \lambdaforecast I$ is $Y \Lambda Y^{-1}$, where $Y \in \reals^{\pH \times \pH}$ and the columns of $Y$ are the normalized eigen-vectors of $\Psi + \lambdaforecast I$, and $\Lambda \in \reals^{\pH \times \pH}$ is the diagonal matrix whose diagonal elements are the eigenvalues of $\Psi + \lambdaforecast I$. Since $\Psi + \lambdaforecast I$ is symmetric, $Y$ is also orthogonal, i.e., $Y^{-1} = Y^\top$. So $Y \Lambda Y^{-1} = Y \Lambda Y^\top$.

3) \textbf{Inverse matrix.} The matrix $\Lambda^{\frac{1}{2}}Y^\top$ is invertible because $\Psi + \lambdaforecast I$ is positive definite and its eigenvalues are all positive.

% 4) \textbf{Optimality.} If there is another encoder and decoder pair, denoted by ($E'$,$D'$), that achieves a smaller $\Joverall$ than ($E = \Omega Y^\top \Lambda^{\frac{1}{2}}$, $D = (Y^\top \Lambda^{\frac{1}{2}})^{-1}\Omega^\top $), then the projection $Y^\top \Lambda^{\frac{1}{2}})^{-1}E'$ and the inverse projection $Y^\top \Lambda^{\frac{1}{2}}D'$ give a solution to the PCA problem with smaller reconstruction error. This is contradictory to the assumption that $\Omega$ and $\Omega^\top$ are the optimal projection and inverse projection.

\subsection{Details on the LQR Simulations}

Here we provide further details on the two LQR simulations mentioned in Sec. \ref{subsec:input_driven_LQR}. In both of the simulations, vector timeseries $s$ has log, negative exponential, sine, square, and saw-tooth functions superimposed with a Gaussian random walk noise process.

\subsubsection{Basic LQR Simulation (Fig. \ref{fig:main_pca_full})}

\begin{figure*}[t]
\vskip 0.2in
\begin{center}

\includegraphics[width=0.8\columnwidth]{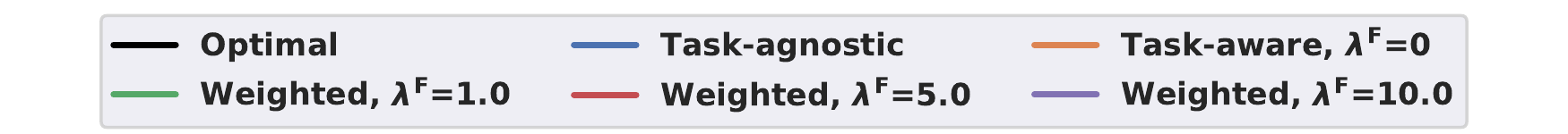}

\subfigure{
{\includegraphics[width=0.4\columnwidth]{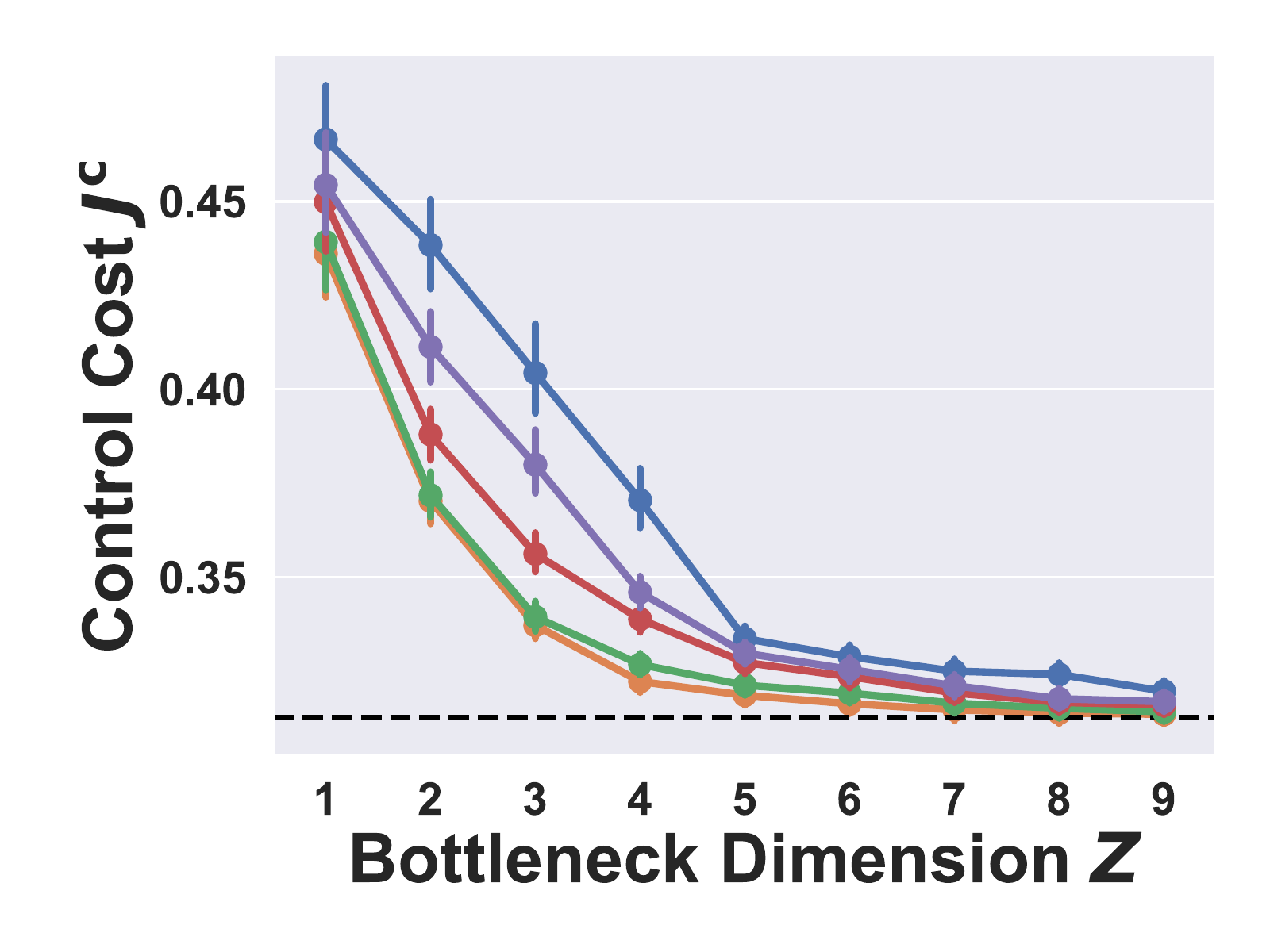}}
\label{fig_pca_full_cost_bottleneck}
}
\subfigure{
{\includegraphics[width=0.4\columnwidth]{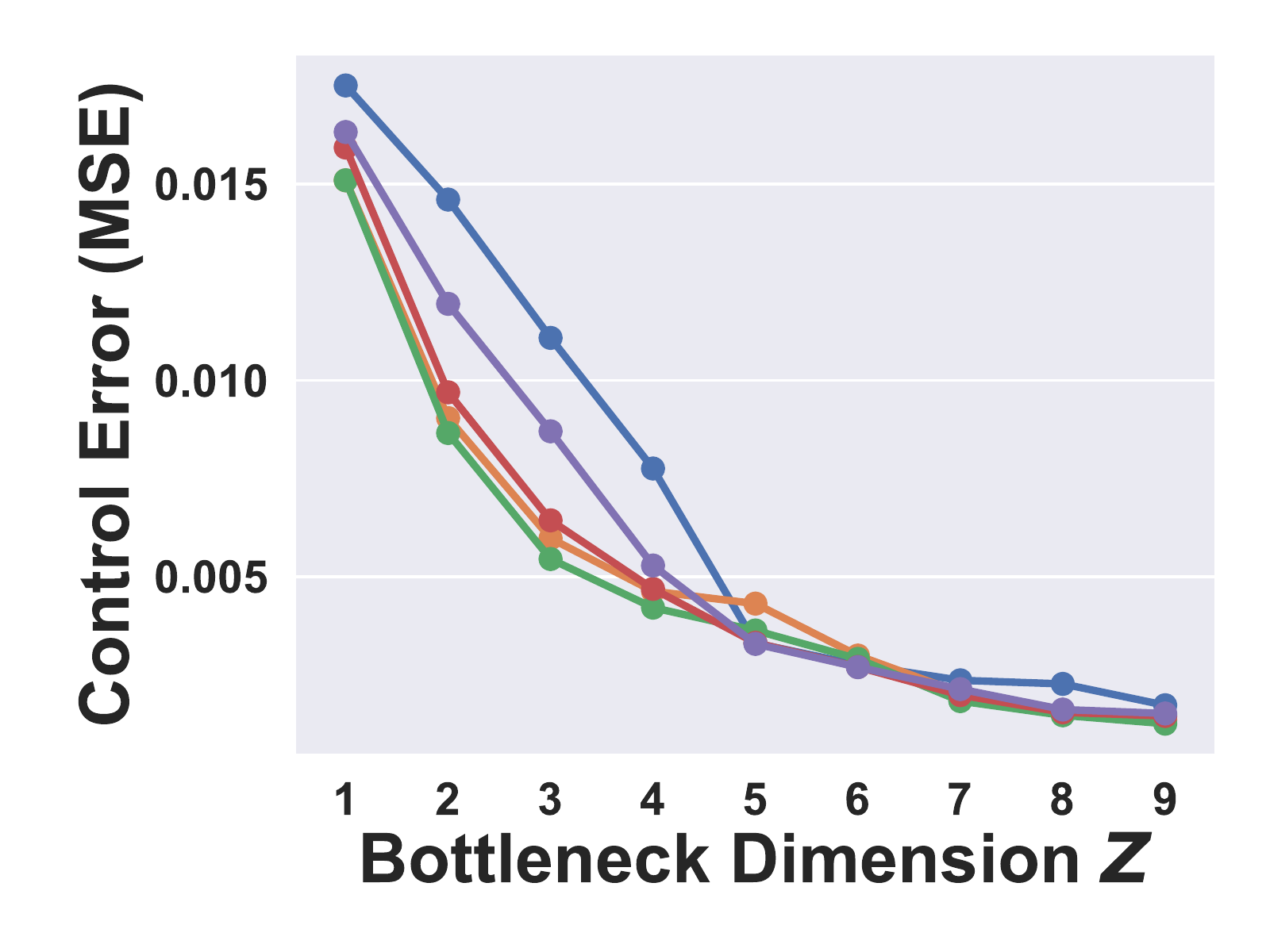}}
\label{fig_pca_full_ctrl_MSE}
}
\subfigure{
{\includegraphics[width=0.4\columnwidth]{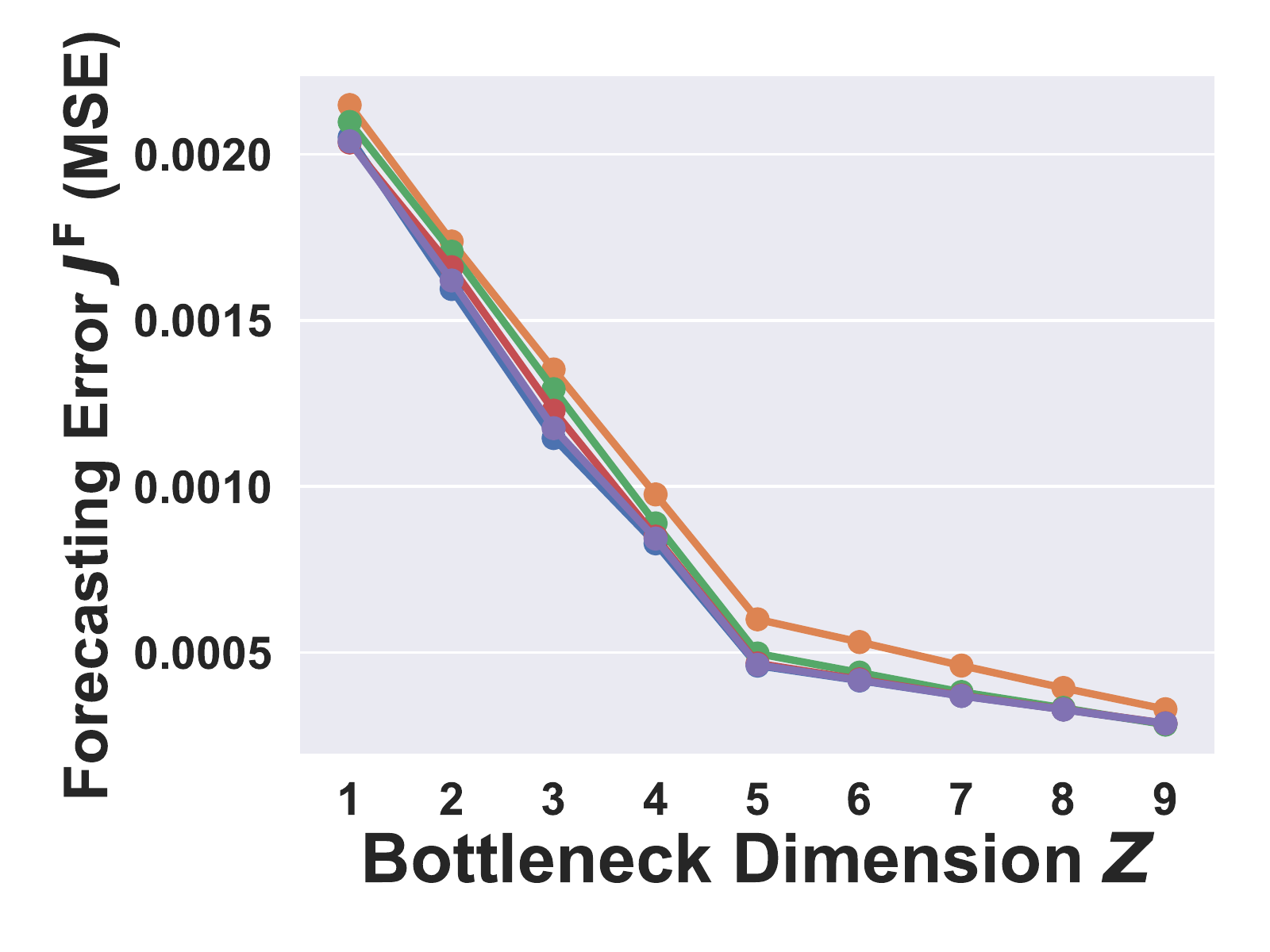}}
\label{fig_pca_full_fcst_MSE}
}
\subfigure{
{\includegraphics[width=0.4\columnwidth]{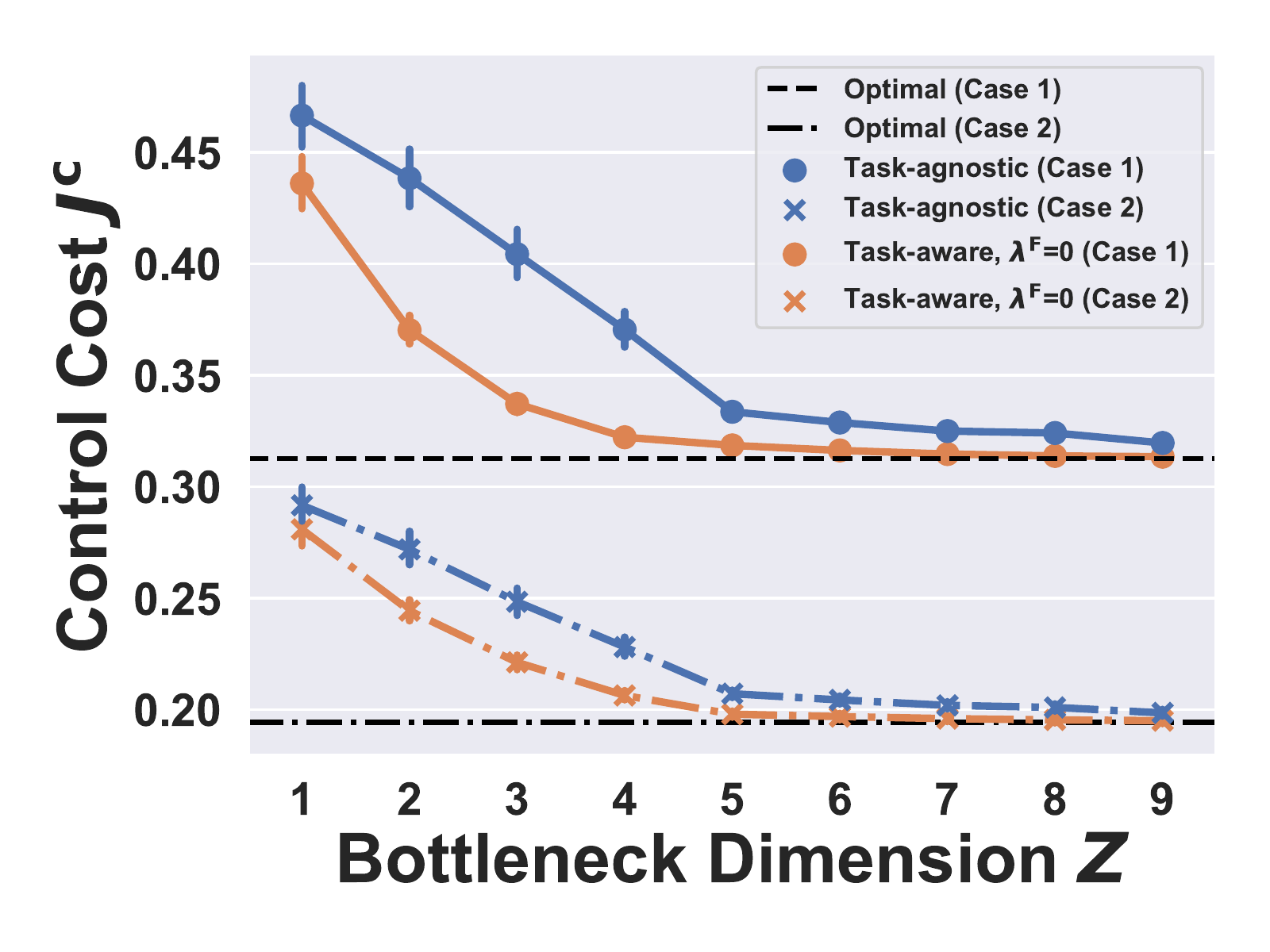}
\label{fig_pca_full_forecast_errors}
}
}
\caption{\textbf{Analytic results for linear control:} (a) By only representing information salient to a control task, our co-design method (orange) achieves the optimal control cost with  
$43\%$
% $1.75\times$ 
less data than a standard MSE approach (``task-agnostic'', blue). Formal definitions of all benchmarks are in Sec. \ref{sec:evaluation}. (b-c) By weighting prediction error by $\lambdaforecast > 0$, we learn representations that are compressible, have good predictive power, and lead to near-optimal control cost (\eg~ $\lambdaforecast=1.0$).
(d) For the \textit{same} timeseries $\bolds$, two different control tasks require various amounts of data shared, motivating our task-centric representations.}
% \caption{\textbf{Analytic Results for Linear Control:} (a) By compressing information salient to a control task, our co-design method (orange) can effectively lower the control cost than a standard RMSE approach (``task-agnostic'', blue). Formal definitions of all benchmarks are in Sec. \ref{sec:evaluation}. (b-c) By weighting prediction error by $\lambdaforecast > 0$, we learn representations that are compressible, have good predictive power, and lead to near-optimal control cost (\eg $\lambdaforecast=1.0$).
% (d) For the \textit{same} timeseries $\bolds$ but a different control task (bottom), co-design method may not differ too much from the standard RMSE approach.}
\label{fig:main_pca_full}
\end{center}
\vskip -0.2in
\end{figure*}

1) \textbf{Dynamics:}
\begin{align*}
x_{t+1} = x_t + u_t - C s_t
\end{align*}
2) \textbf{Cost function:}
\begin{align*}
\JMPC = \frac{1}{1000} (\sum_{t=0}^{H} ||x_t||^2_2 + \sum_{t=0}^{H-1} ||u_t||^2_2) 
\end{align*}
3) \textbf{Parameters:} $H = 20$; $n = m = p = 5$; $C=\text{diag}(1, 2, \cdots, 5)$ for Fig. \ref{fig_pca_full_cost_bottleneck}-\ref{fig_pca_full_fcst_MSE} and Fig. \ref{fig_pca_full_forecast_errors} top, and $C=\text{diag}(1.5, 2, \cdots, 3.5)$ for Fig. \ref{fig_pca_full_forecast_errors} bottom.

As per Proposition \ref{prop:input_dirven_lqr}, we solve a simple low-rank approximation problem per bottleneck $Z$ to obtain the optimal encoder $E$, decoder $D$, and use Eqs. \ref{eq:low_rank_approximation}-\ref{eq:low_rank_approximation_final} to obtain the control and prediction costs. Clearly, our co-design algorithm (orange) outperforms a task-agnostic approach (blue) that simply optimizes for MSE.

\subsubsection{LQR Simulation with MPC (Fig. \ref{fig:main_pca})}
\begin{figure*}[t]
\vskip 0.2in
\begin{center}

\includegraphics[width=0.8\columnwidth]{figures/pca_legend.pdf}

\subfigure{
{\includegraphics[width=0.4\columnwidth]{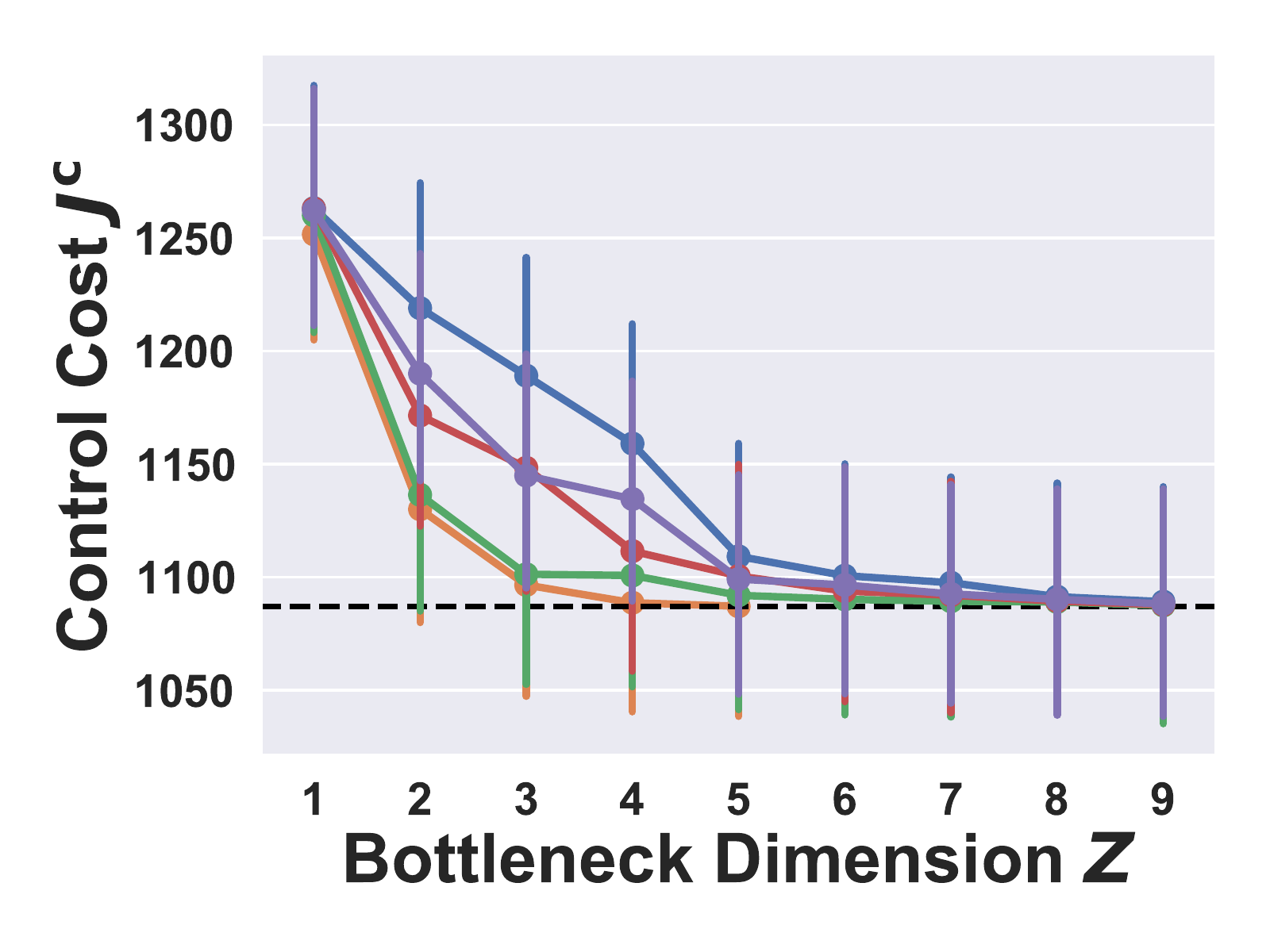}}
\label{fig_pca_cost_bottleneck}
}
\subfigure{
{\includegraphics[width=0.4\columnwidth]{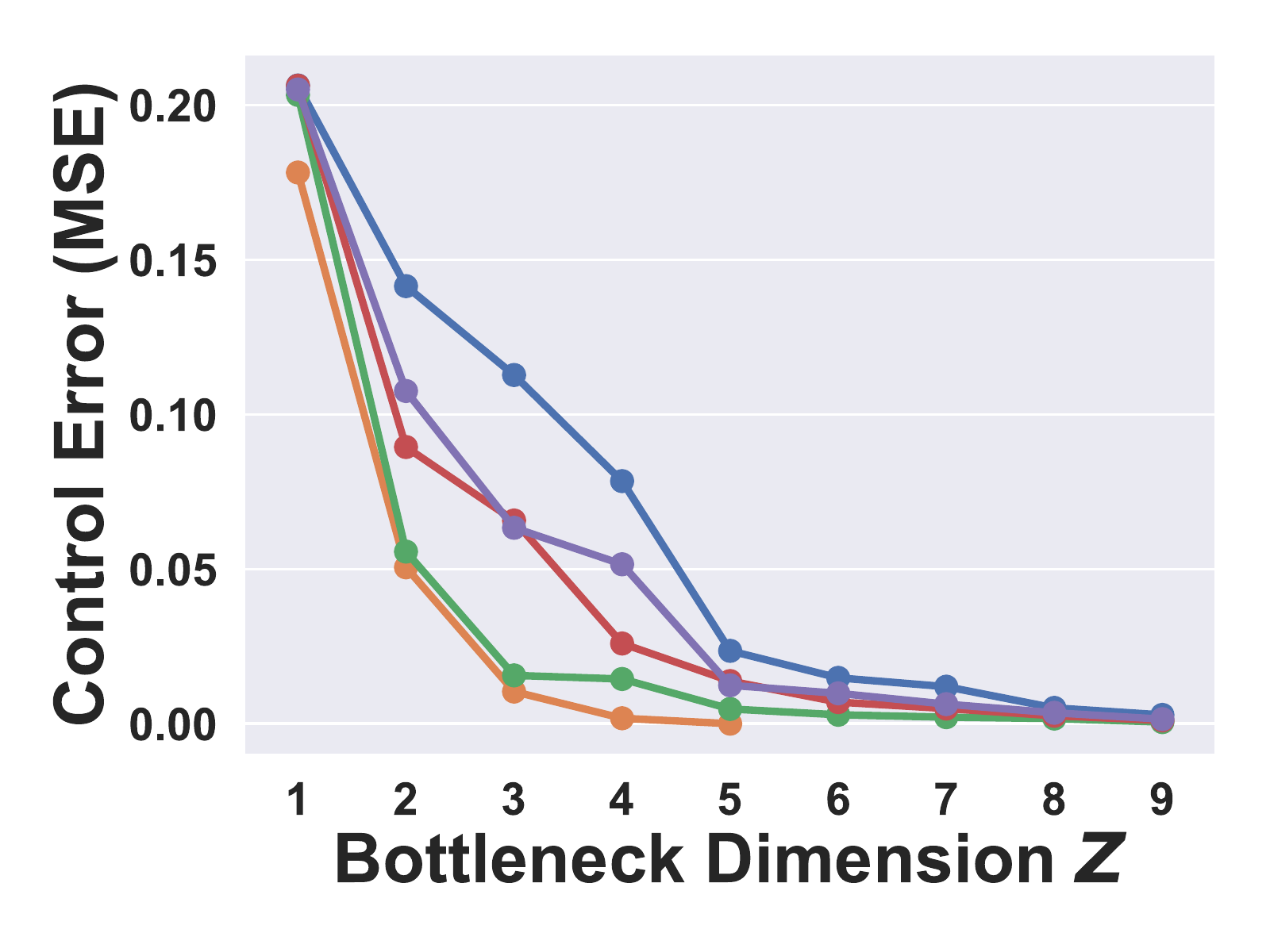}}
\label{fig_pca_ctrl_MSE}
}
\subfigure{
{\includegraphics[width=0.4\columnwidth]{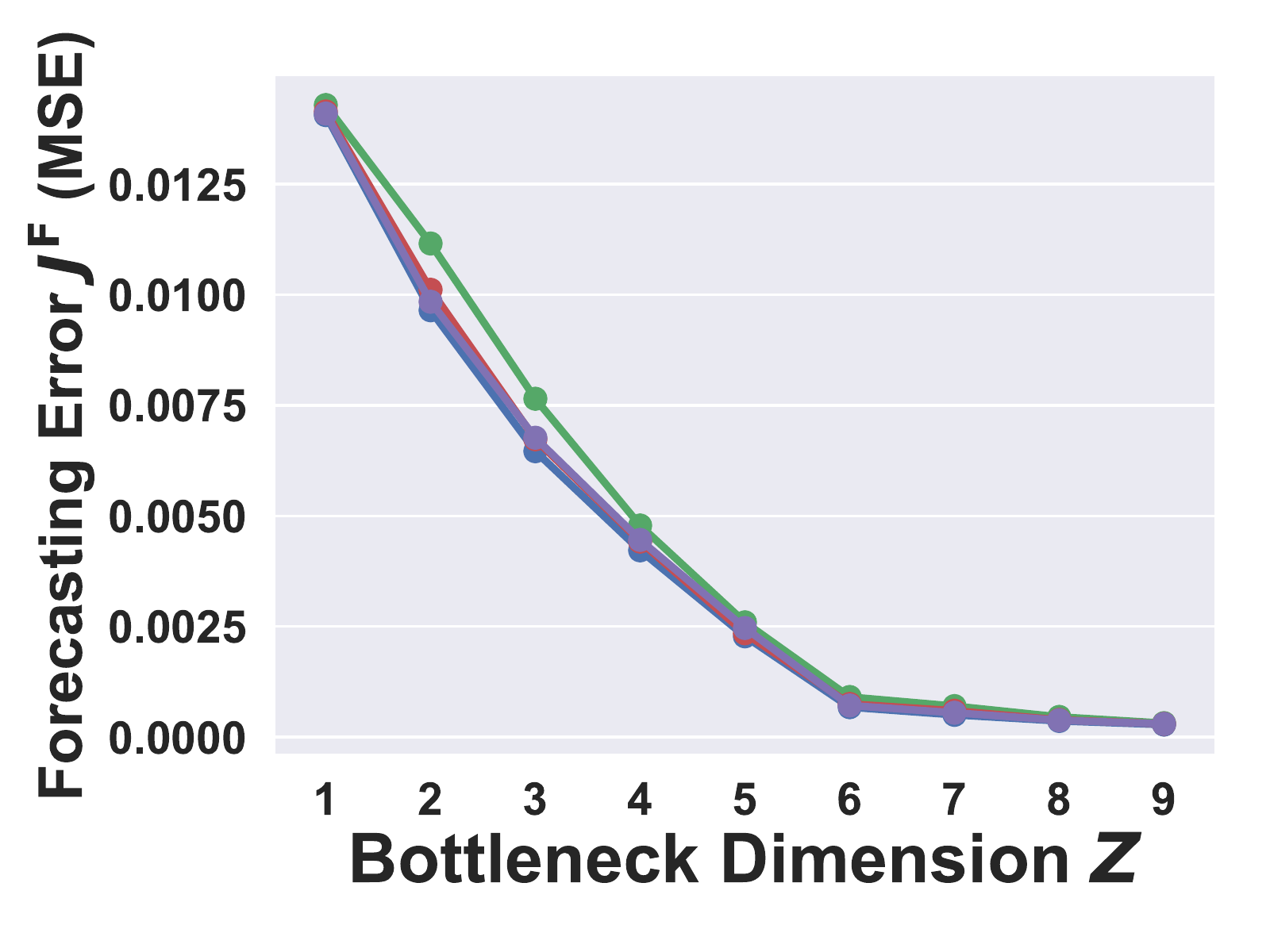}}
\label{fig_pca_fcst_MSE}
}
\subfigure{
{\includegraphics[width=0.4\columnwidth]{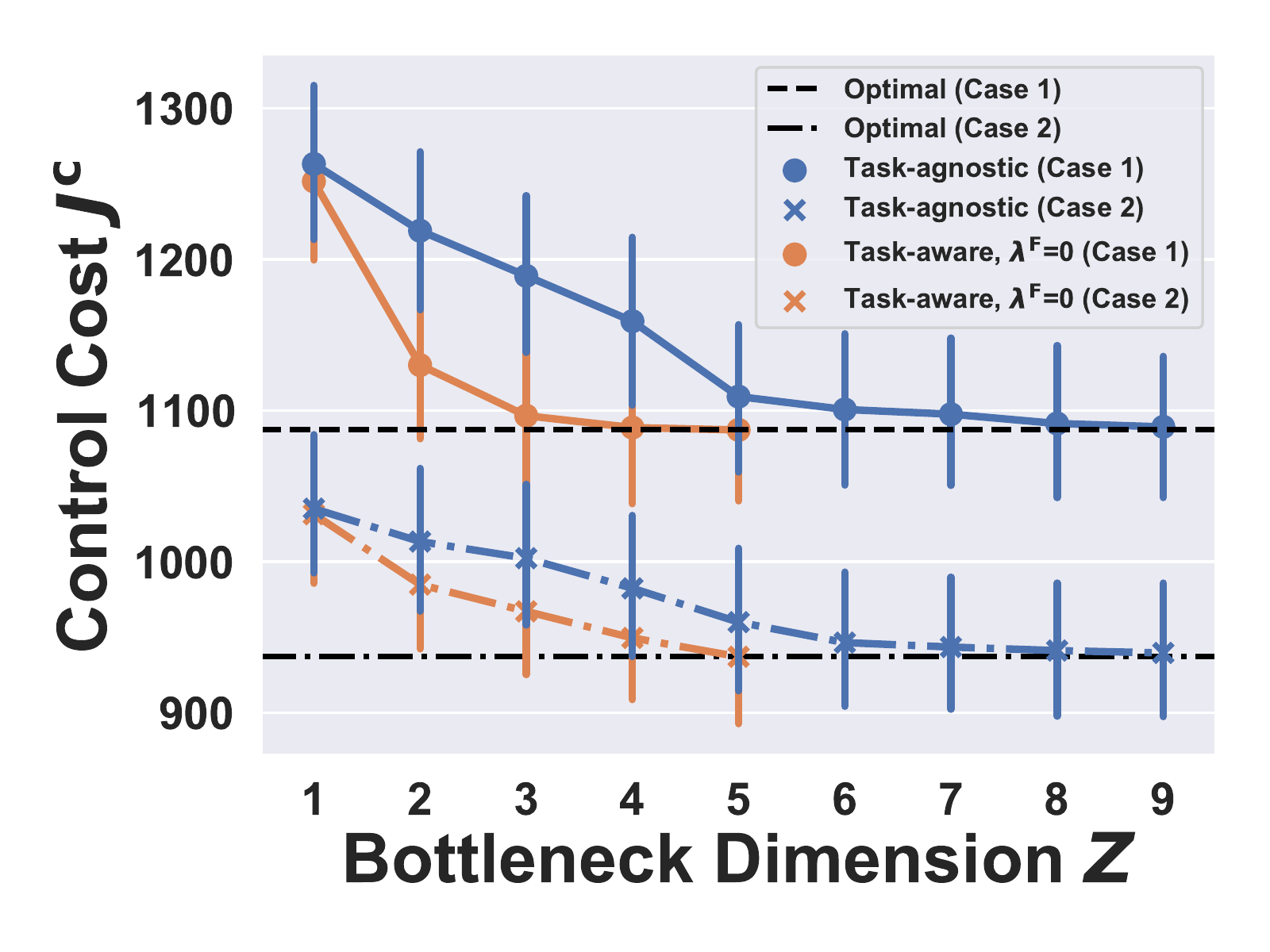}
\label{fig_pca_forecast_errors}
}
}
\caption{\textbf{Linear control with MPC:} We repeat our analysis of input-driven LQR, but solve the problem in a receding horizon manner with forecasts for $H < T$ as discussed in Section \ref{subsec:input_driven_LQR} and Figure \ref{fig:main_pca_full}.
    (a) By only representing information salient to a control task, our co-design method (orange) achieves the optimal control cost with 
$60\%$ less data than a standard MSE approach (``task-agnostic'', blue). Formal definitions of all benchmarks are in Sec. \ref{sec:evaluation}. (b-c) By weighting prediction error by $\lambdaforecast > 0$, we learn representations that are compressible, have good predictive power, and lead to near-optimal control cost (\eg~ $\lambdaforecast=1.0$). The forecasting error of the task-aware scheme (orange) is much larger than the rest and thus not shown in the zoomed-in view.
(d) For the \textit{same} timeseries $\bolds$, two different control tasks require various amounts of data shared, motivating our task-centric representations.}
\label{fig:main_pca}
\end{center}
\vskip -0.2in
\end{figure*}

1) \textbf{Dynamics:}
\begin{align*}
x_{t+1} = x_t + u_t - C s_t
\end{align*}
2) \textbf{Cost function:}
\begin{align*}
\JMPC = \sum_{t=0}^{T} ||x_t||^2_2 + \sum_{t=0}^{T-1} ||u_t||^2_2 
\end{align*}
3) \textbf{Parameters:} $T = 100$, $W = H = 15$; $n = m = p = 5$; $C=\text{diag}(1, 2, \cdots, 5)$ for supplement Fig. \ref{fig_pca_cost_bottleneck}-\ref{fig_pca_fcst_MSE} and Fig. \ref{fig_pca_forecast_errors} top, and $C=\text{diag}(3, 3, \cdots, 3)$ for Fig. \ref{fig_pca_forecast_errors} bottom.

% We also run another LQR simulation with MPC, as shown in Fig. \ref{fig:main_pca}. The dynamics and the cost function are as follows:
% \begin{align*}
% & x_{t+1} = x_t + u_t - C s_t, \\
% & \JMPC = \sum_{t=0}^{T} ||x_t||^2_2 + \sum_{t=0}^{T-1} ||u_t||^2_2 
% \end{align*}

% The parameters are: 
% \begin{itemize}
% \item $T = 100$, $W = H = 15$, $n = m = p = 5$. 
% \item $C=\text{diag}(1, 2, \cdots, 5)$ for Fig. \ref{fig_pca_cost_bottleneck}-\ref{fig_pca_fcst_MSE} and Fig. \ref{fig_pca_forecast_errors} top; $C=\text{diag}(3, 3, \cdots, 3)$ for Fig. \ref{fig_pca_full_forecast_errors} bottom.
% \end{itemize}

% The data is the same as described in the last section.

\subsection{IoT Data Collection}
\begin{figure}[ht]
\vskip 0.2in
\begin{center}
\subfigure{
\raisebox{10mm}{\includegraphics[width=0.2\columnwidth]{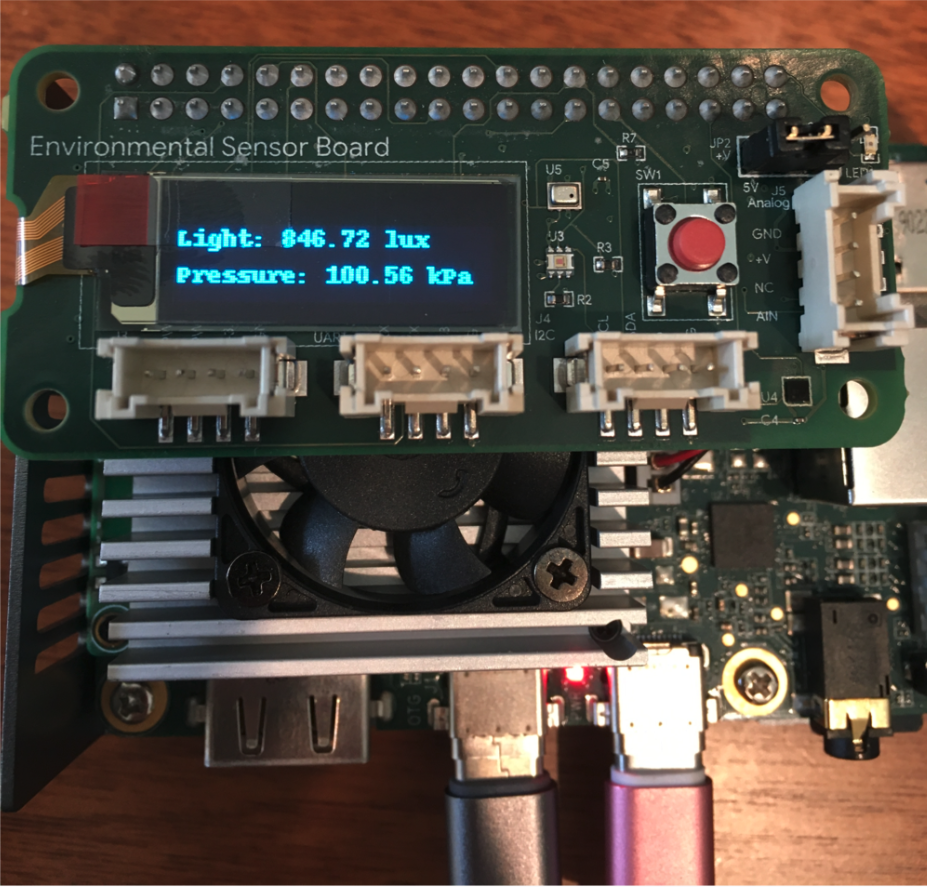}}
}
\subfigure{
{\includegraphics[width=0.6\columnwidth]{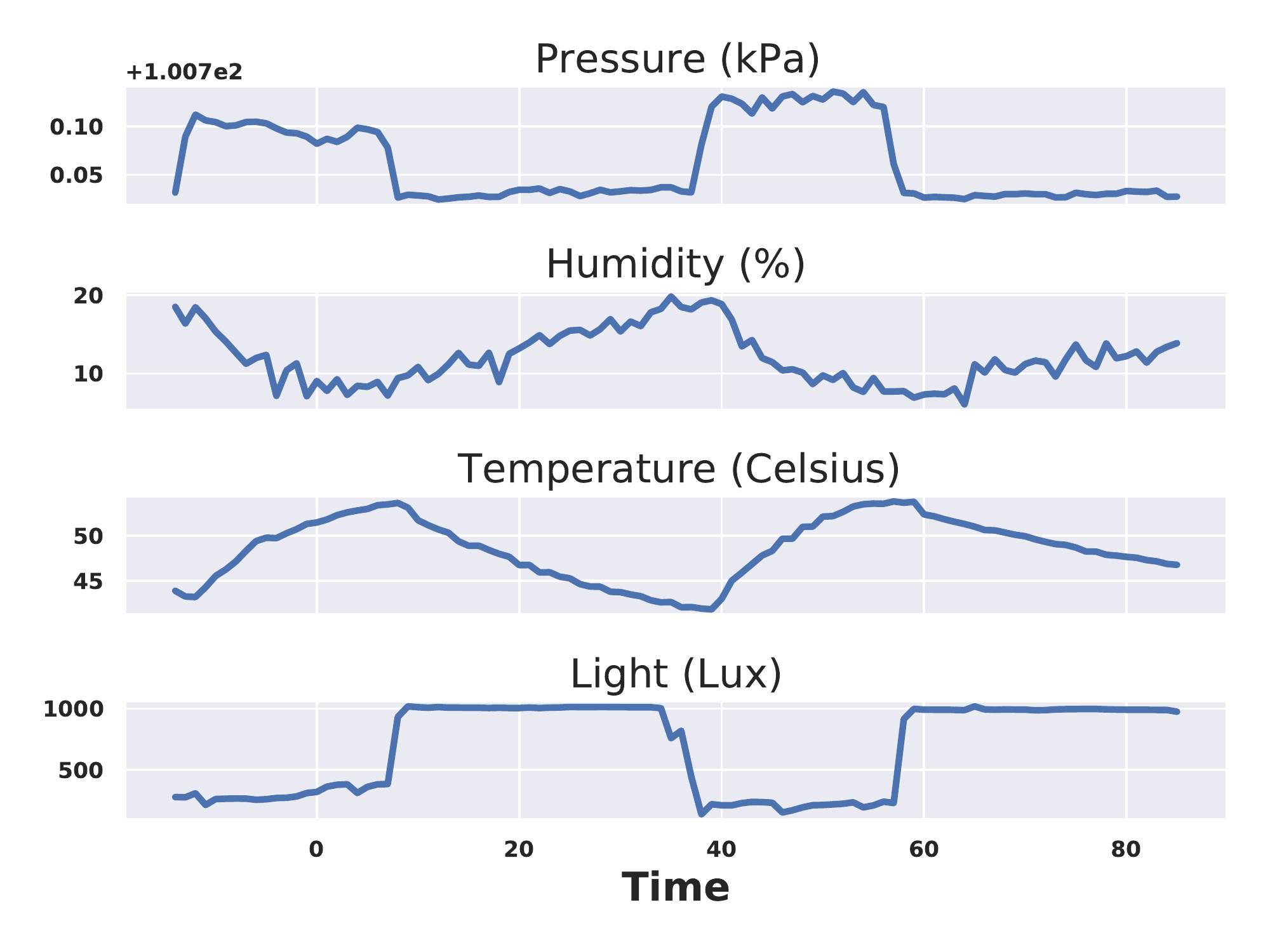}}
}
\caption{Environmental sensor on the Google Edge TPU (left) and example stochastic timeseries (right).}
\label{fig:iot_sensor}
\end{center}
\vskip -0.2in
\end{figure}

Fig. \ref{fig:iot_sensor} shows the environmental sensor board (connected to an Edge TPU DNN accelerator) and an example of collected stochastic timeseries for our IoT data.

\subsection{Detailed Evaluation Settings}
\label{sec:appendix_evaluation}
\begin{figure}[ht]
\vskip 0.2in
\begin{center}
\centerline{\includegraphics[width=0.8\columnwidth]{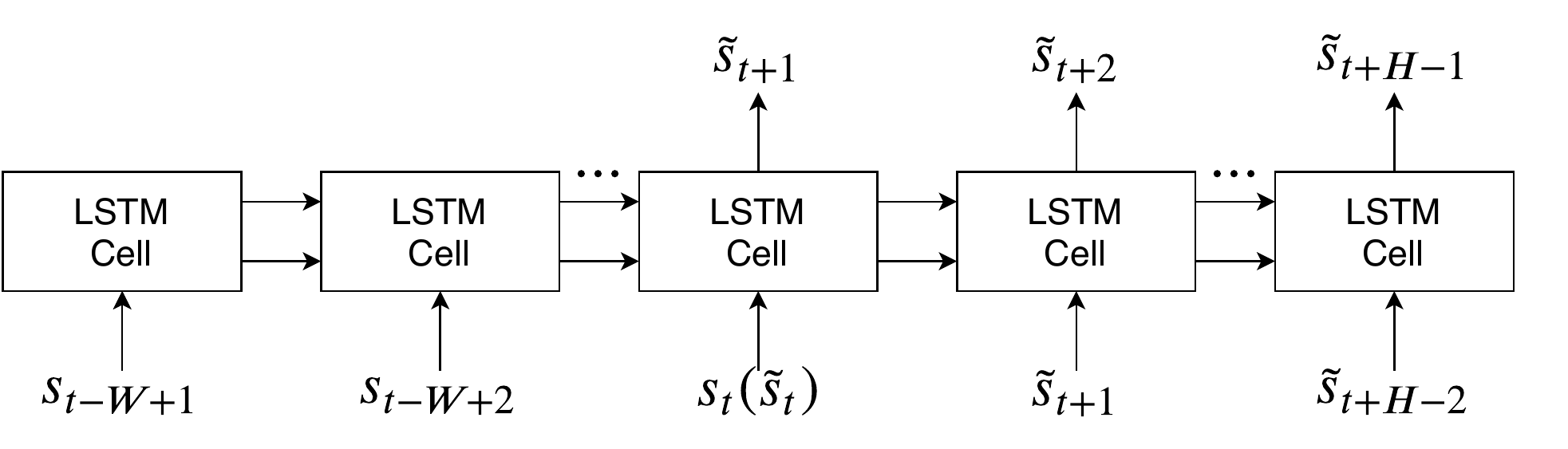}}
\caption{LSTM timeseries network} 
\label{fig:lstm}
\end{center}
\vskip -0.2in
\end{figure}

\begin{figure}[ht]
\vskip 0.2in
\begin{center}
\centerline{\includegraphics[width=0.5\columnwidth]{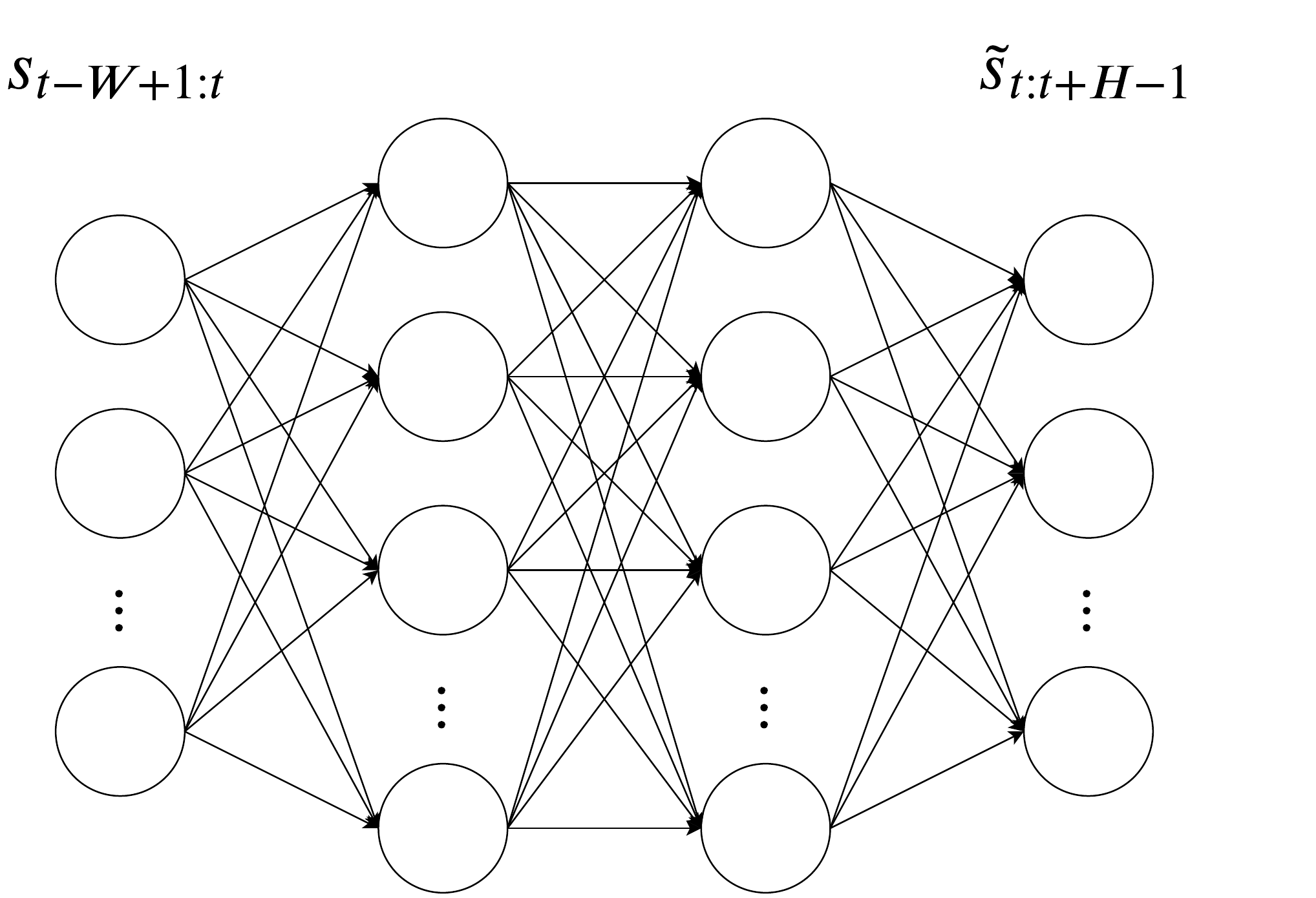}}
\caption{2-hidden-layer feedforward network} 
\label{fig:DNN}
\end{center}
\vskip -0.2in
\end{figure}

We now provide further details on Sec. \ref{sec:evaluation} by summarizing the settings of our evaluation.

\subsubsection{Forecaster, Controller $\&$ Data Scaling}

\textbf{Basic Forecaster Settings.} In all three scenarios, the encoder parameters $\thetaencoder$ are responsible for \textit{both} forecasting and compression. We first have a forecasting model that first provides a full-dimensional forecast $\tilde{s}_{t:t+H-1}$, and then adopts simple linear encoder $E \in \reals^{\zbottleneck \times \pH}$ to yield $\phi_t$. The combination of the forecasting model's parameters and encoder $E$ constitute $\thetaencoder$. Then, a linear decoder $D \in \reals^{\pH \times \zbottleneck}$eventually produces decoded forecast $\hat{s}_{t:t+H-1}$. The model used to provide full-dimensional forecast $\tilde{s}_{t:t+H-1}$ varies case by case, as described subsequently.

\textbf{Smart Factory Regulation with IoT Sensors.} For forecasting, we adopt an LSTM timeseries network, as shown in Fig. \ref{fig:lstm}, with $W+H-2$ cells and hidden size $64$. The parameters associated with the forecaster and controller are set as follows: $T=72$, $W = H = 15$, $n = m = p = 4$; $u_\text{min} = -0.95 \times \mathbbm{1}_{p}, u_\text{max} = 0.95 \times \mathbbm{1}_{p}$, $\gamma_e = \gamma_s = \gamma_u = 1$. Further, we scale $s_t(i)$ to be within $[-1, 1], \forall i$.

\textbf{Taxi Dispatch Based on Cell Demand Data.} For forecasting, we adopt a 2-hidden-layer feedforward network, as shown in Fig. \ref{fig:DNN}, with hidden size $64$ and ReLu activation. The parameters associated with the forecaster and controller are set as follows: $T=32$, $W = H = 15$, $n = m = p = 4$; no constraint on $u_t$, and $\gamma_e = 1, \gamma_s = 100, \gamma_u = 1$. Further, we scale $s_t(i)$ to be within $[0, 1], \forall i$. 

\textbf{Battery Storage Optimization.} For forecasting, we adopt a 2-hidden-layer feedforward network, as shown in Fig. \ref{fig:DNN}, with hidden size $64$ and ReLu activation. The parameters associated with the forecaster and controller are set as follows: $T=122$, $W = H = 24$, $n = m = p = 8$; no constraint on $u_t$, and $\gamma_e = \gamma_s = \gamma_u = 1$. Further, we scale $s_t(i)$ to be within $[0, 1], \forall i$.

We observed similar performance for feedforward networks and LSTMs since the crux of our problem is to find a small set of task-relevant features for control.

\subsubsection{Training}
\begin{table}[ht]
\caption{Train/Test Timeseries, Training Epochs and Runtime.}
\label{tab:eval}
\vskip 0.15in
\begin{center}
\begin{small}
\begin{sc}
\begin{tabular}{lcccr}
\toprule
Dataset & Train/Test & Training & Runtime\\
& Timeseries & Epochs & \\
\midrule
IoT & 30/30 & 1000 & $<96$ hrs\\
Cell & 17/17 & 1000 & $<48$ hrs\\
Battery & 15/15 & 2000 & $<1$ hr \\
\bottomrule
\end{tabular}
\end{sc}
\end{small}
\end{center}
\vskip -0.1in
\end{table}

Our evaluation runs on a Linux machine with 4 NVIDIA GPUs installed (3 Geforce and 1 Titan).
Our code is based on Pytorch. We use the Adam optimizer and learning rate $10^{-3}$ for all the evaluations. The number of train/test timeseries\footnote{With MPC, each timeseries corresponds to $T$ samples, such as $T=72$ for the IoT scenario.}, training epochs, and resulting runtime are summarized in Table \ref{tab:eval}. 
The IoT dataset is provided in our code release and it does not have any personally identifiable or private information. 
% We provide our IoT dataset in our code release and will make it publicly-available (without any restrictions or license) online after the peer review process. The dataset does not have any personally identifiable or private information. 
The publicly-available electricity and cellular datasets did not have a stated license online.

% \subsection{Key Result}
% \begin{table}[ht]
% \caption{Additional cost for each scheme on different datasets with the same compression gain -- the largest compression gain when task-aware scheme requires no more than $5\%$ additional cost.}
% \label{tab:key_result}
% \vskip 0.15in
% \begin{center}
% \begin{small}
% \begin{sc}
% \begin{tabular}{lcccr}
% \toprule
% Dataset & Task & Task & Weighted \\
% (Comp. gain) & Agnostic& Aware& \\
% \midrule
% IoT ($15\times$) & $32.3\%$ & $4.55\%$ & $7.30\%$ \\
% Cell ($15\times$) & $1435\%$ & $1.97\%$ & $41.6\%$ \\
% PJM ($96\times$) & $32.2\%$ & $4.38\%$ & $8.56\%$ \\
% \bottomrule
% \end{tabular}
% \end{sc}
% \end{small}
% \end{center}
% \vskip -0.1in
% \end{table}

% Table \ref{tab:key_result} presents the additional cost for each scheme on different datasets with the same compression gain. When the compression gains on the IoT, Cell and PJM dataset are $15\times$, $15\times$ and $96\times$ respectively, task-aware scheme requires only no more than $5\%$ additional cost, while task-agnostic scheme yields at least $30\%$ of the additional cost (notably, $1435\%$ additional cost on the cell dataset). Our weighted scheme yields a additional cost larger than the task-aware scheme but still relatively much smaller than the task-agnostic scheme. 

\subsection{Further Analysis on the Evaluation Results}
For better understanding of the differences between different schemes, we give further analysis on our evaluation results in Sec. \ref{sec:evaluation}.

\begin{figure*}[t]
\vskip 0.2in
\begin{center}
%\subfigure[]{
\subfigure{
{\includegraphics[width=0.31\columnwidth]{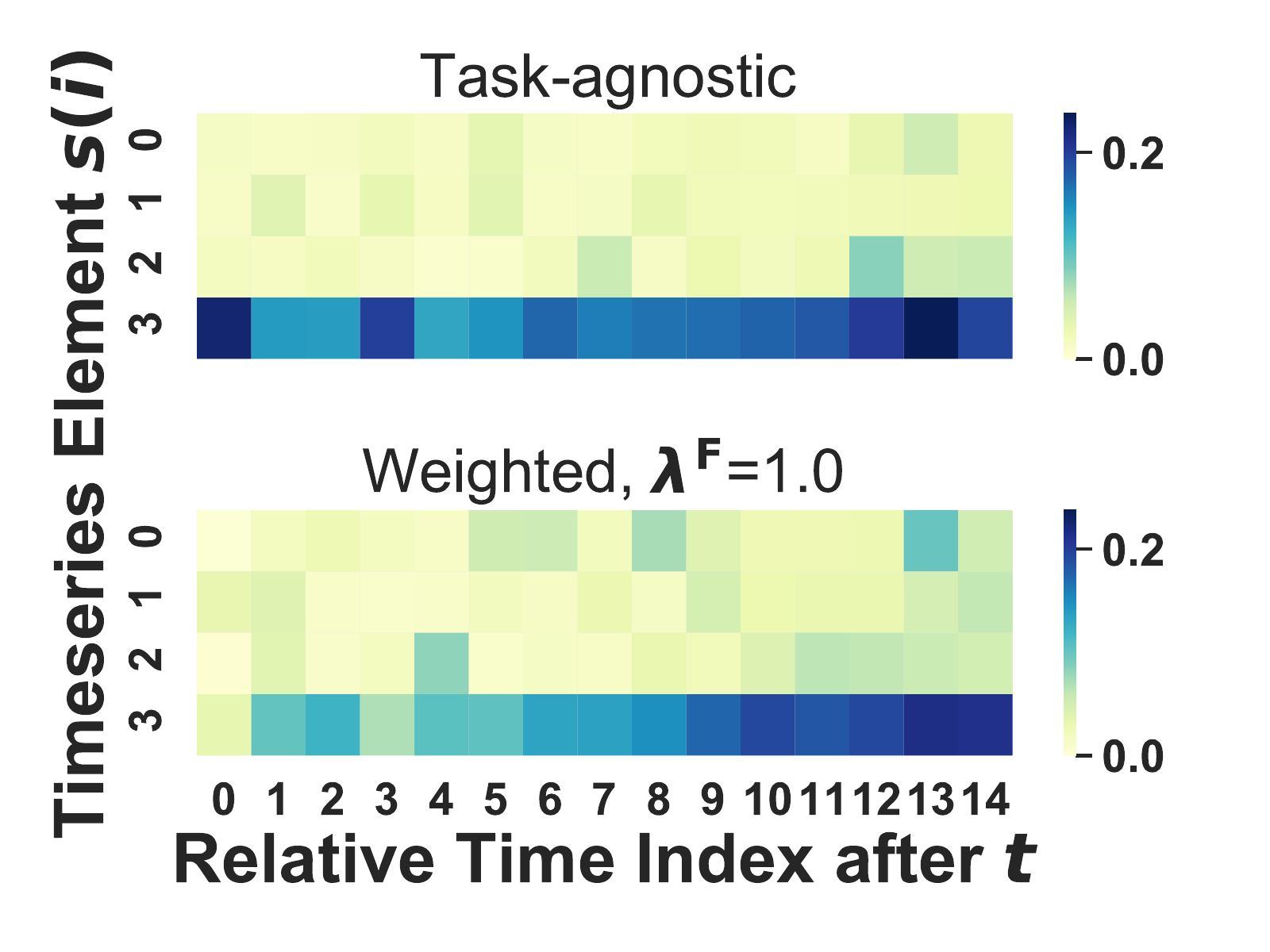}}
\label{fig_iot_forecast_errors}
}
%\subfigure[]{
\subfigure{
{\includegraphics[width=0.31\columnwidth]{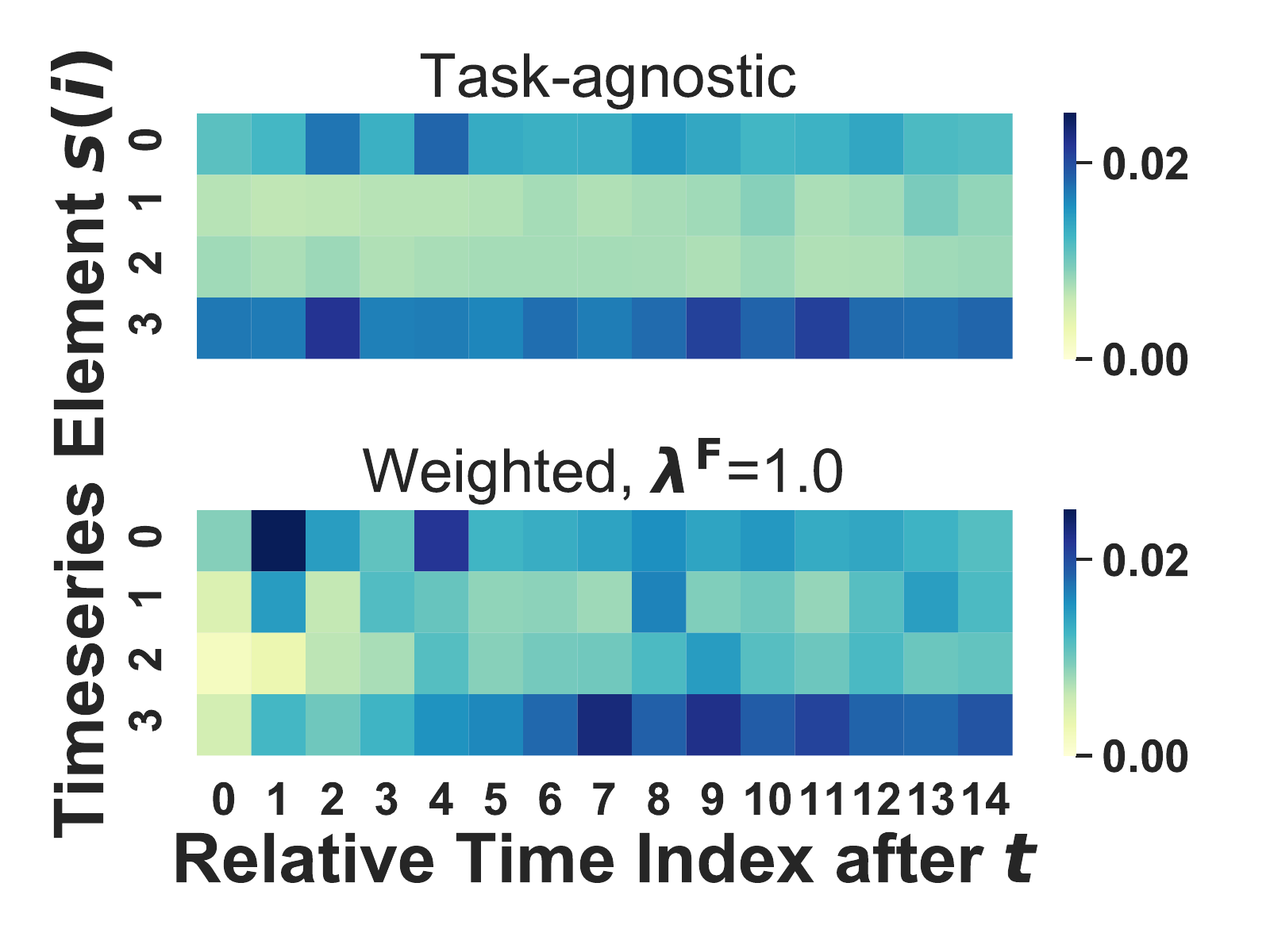}}
\label{fig_cell_forecast_errors}
}
%\subfigure[]{
\subfigure{
{\includegraphics[width=0.31\columnwidth]{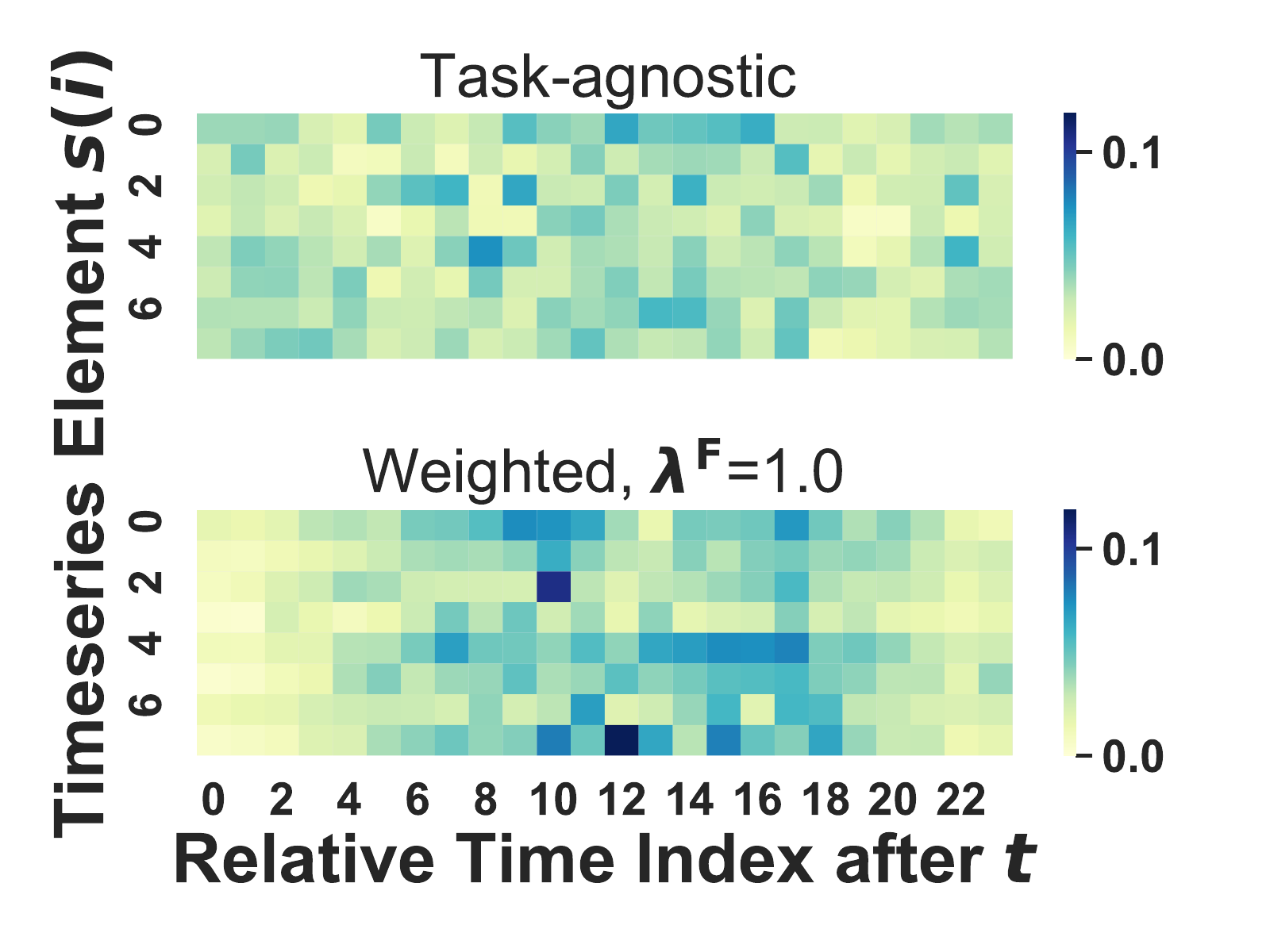}}
\label{fig_pjm_forecast_errors}
}
\caption{\textbf{Forecasting error comparison: task-agnostic vs. weighted scheme.} From left to right, the columns correspond to smart factory regulation from IoT sensors, taxi dispatching with cell demand, and battery storage optimization. The heatmaps show how co-design minimizes errors on timeseries elements $s(i)$ and forecast horizons that are salient for the control task when $Z=3$.}
\label{fig_heatmap}
\end{center}
\vskip -0.2in
\end{figure*}
%Aggregated forecasting error for  each relative time index when $Z=3$, under different policies. (j-l) Forecasting error for each $s(i)$ and each relative time index when $Z=3$, under task-agnostic (top) and weighted (bottom) policy, respectively.
%\caption{Results for real data. Columns from left to right corresponds to smart home regulation, taxi dispatching and battery storage optimization, respectively. (a-c) Control cost $J$ under different bottleneck dimension $Z$ and training policies; (d-f) Control error for each $u(i)$ when $Z=3$ under different training policies; (g-i) Aggregated forecasting error for  each relative time index when $Z=3$, under different policies. (j-l) Forecasting error for each $s(i)$ and each relative time index when $Z=3$, under task-agnostic (top) and weighted (bottom) policy, respectively.}

\textbf{Why does co-design yield task-relevant forecasts? (Continued)}

We further contrast the prediction errors made by task-agnostic and co-design approaches in the heatmaps of Fig. \ref{fig_heatmap}. In each heatmap, the x-axis represents the future time horizon, while the y-axis represents forecasting errors across various dimensions of timeseries $s$, denoted by $s(i)$.  Clearly, a weighted approach significantly reduces prediction error for near time-horizons, which is most pronounced for the battery dataset.

\begin{figure}[ht]
\vskip 0.2in
\begin{center}
\includegraphics[width=0.8\columnwidth]{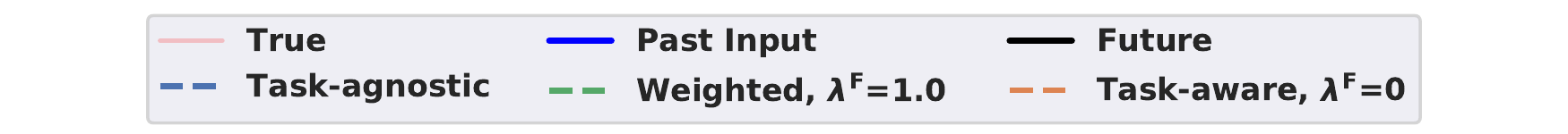}

\subfigure{
{\includegraphics[width=0.4\columnwidth]{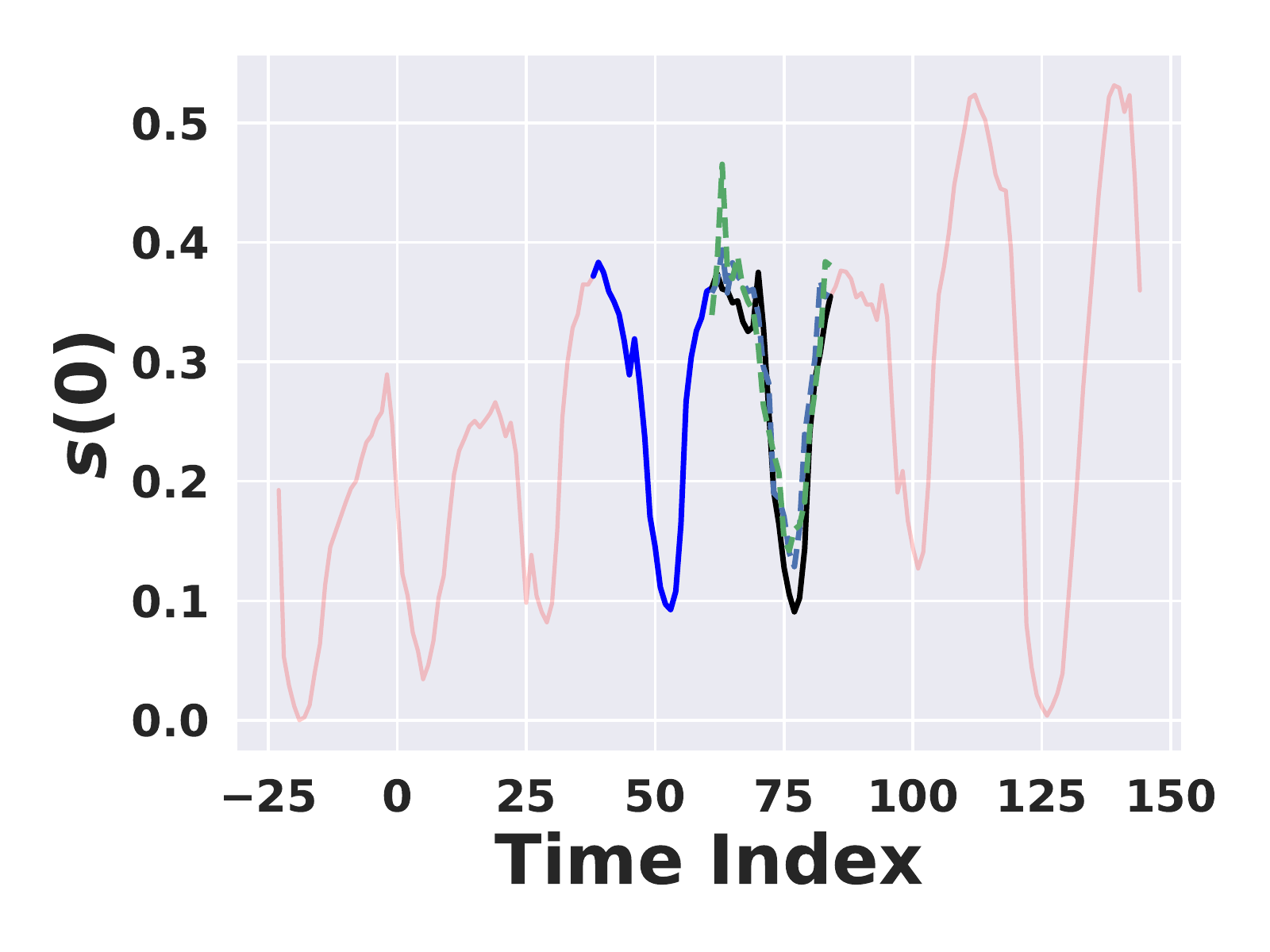}}
}
\subfigure{
{\includegraphics[width=0.4\columnwidth]{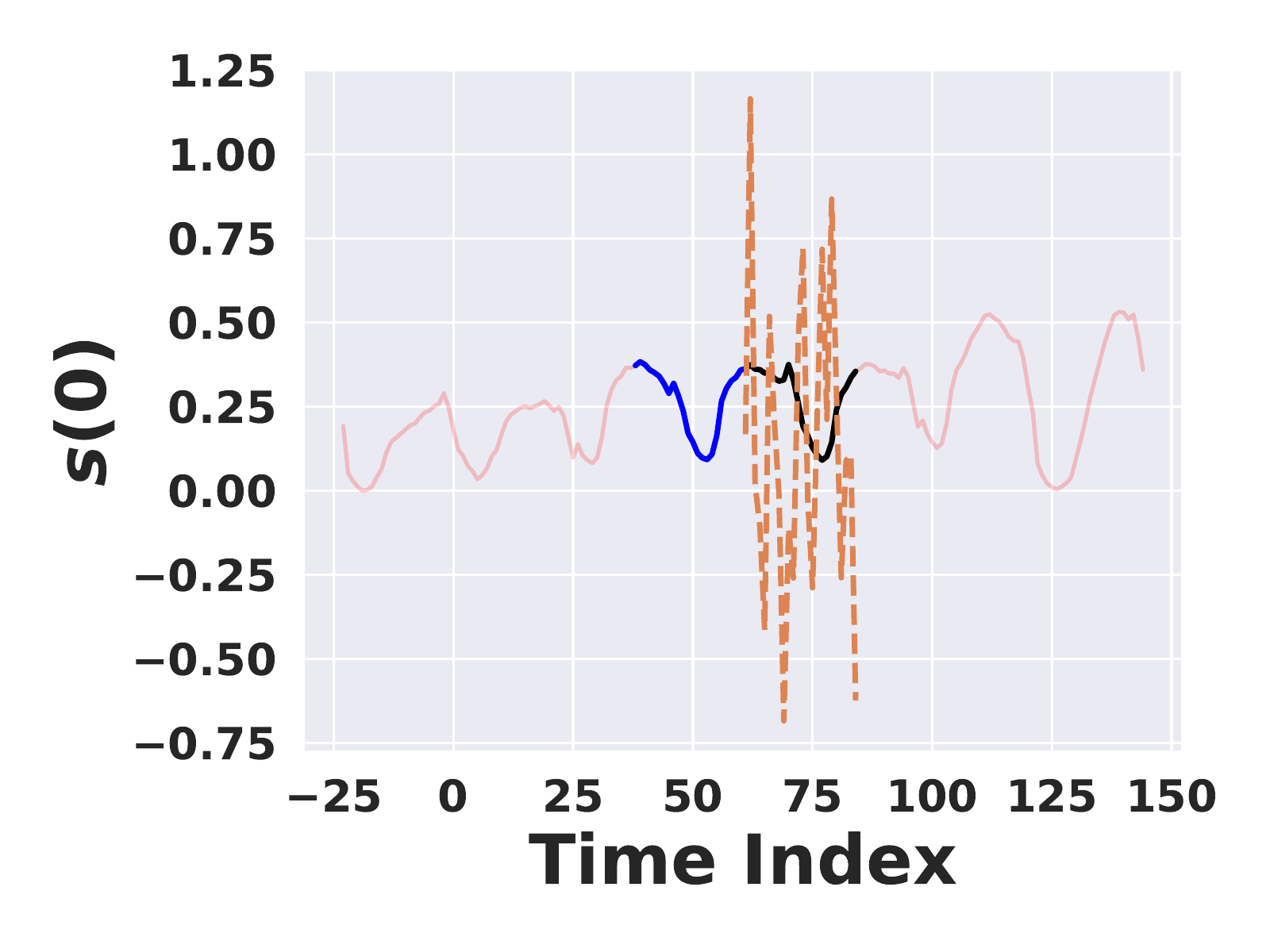}}
}
\caption{\textbf{Forecast comparison: task-agnostic/weighted schemes vs. task-aware scheme.}  Example forecasts for the battery charging scenario at $t=84$ when $Z=9$, for both our task-agnostic/weighted schemes (left) and the task-aware scheme (right). Clearly, a fully task-aware approach with $\lambdaforecast = 0$ yields poor predictions since it does not regularize for prediction errors. This motivates our weighted co-design approach on the left.}
\label{fig:forecasts_comparison}
\end{center}
\vskip -0.2in
\end{figure}

\begin{figure}[ht]
\vskip 0.2in
\begin{center}

\includegraphics[width=0.8\columnwidth]{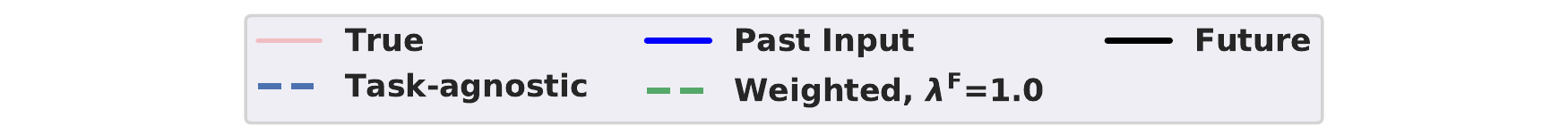}

\subfigure{
{\includegraphics[width=0.4\columnwidth]{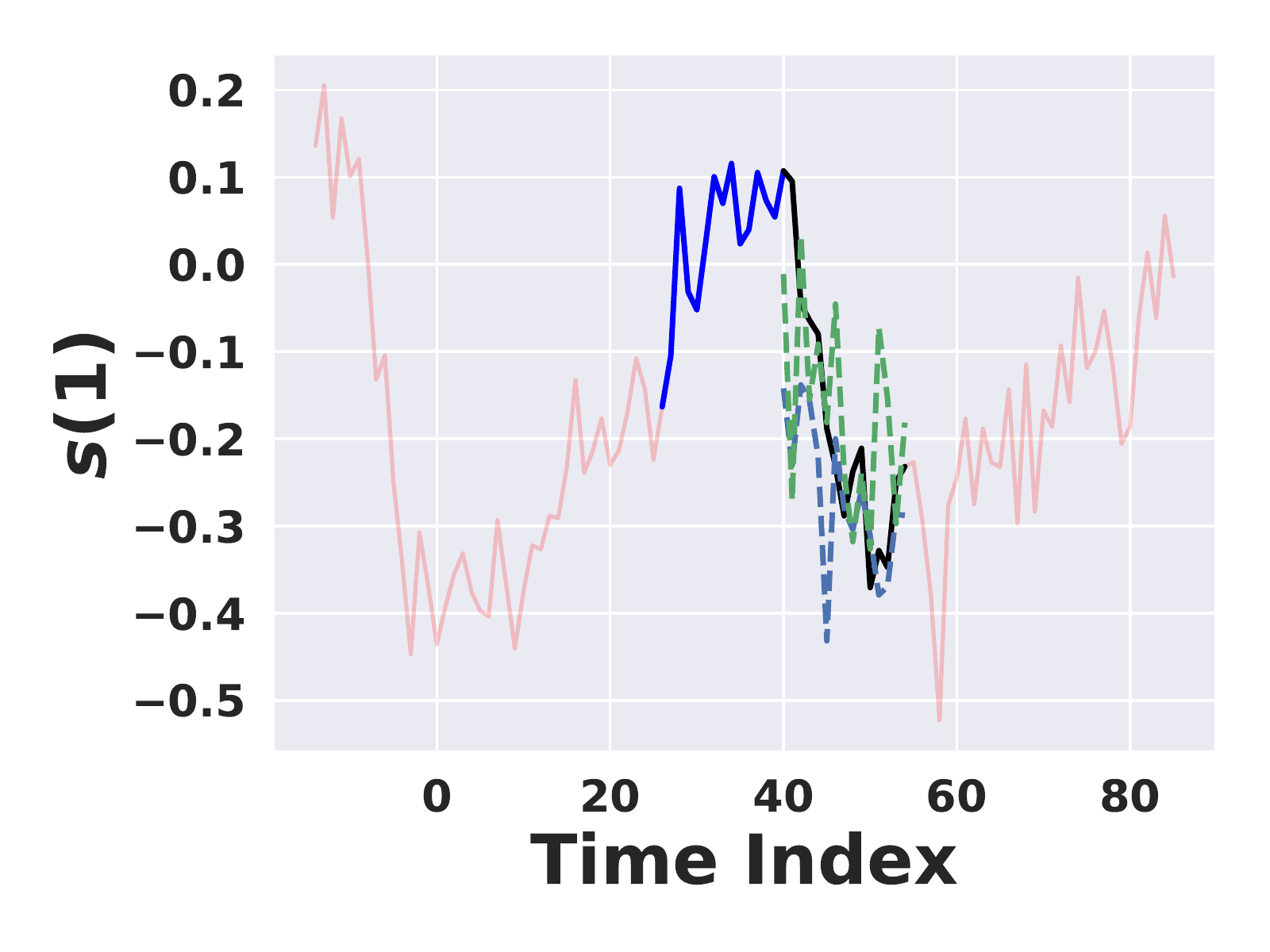}}
}
\subfigure{
{\includegraphics[width=0.4\columnwidth]{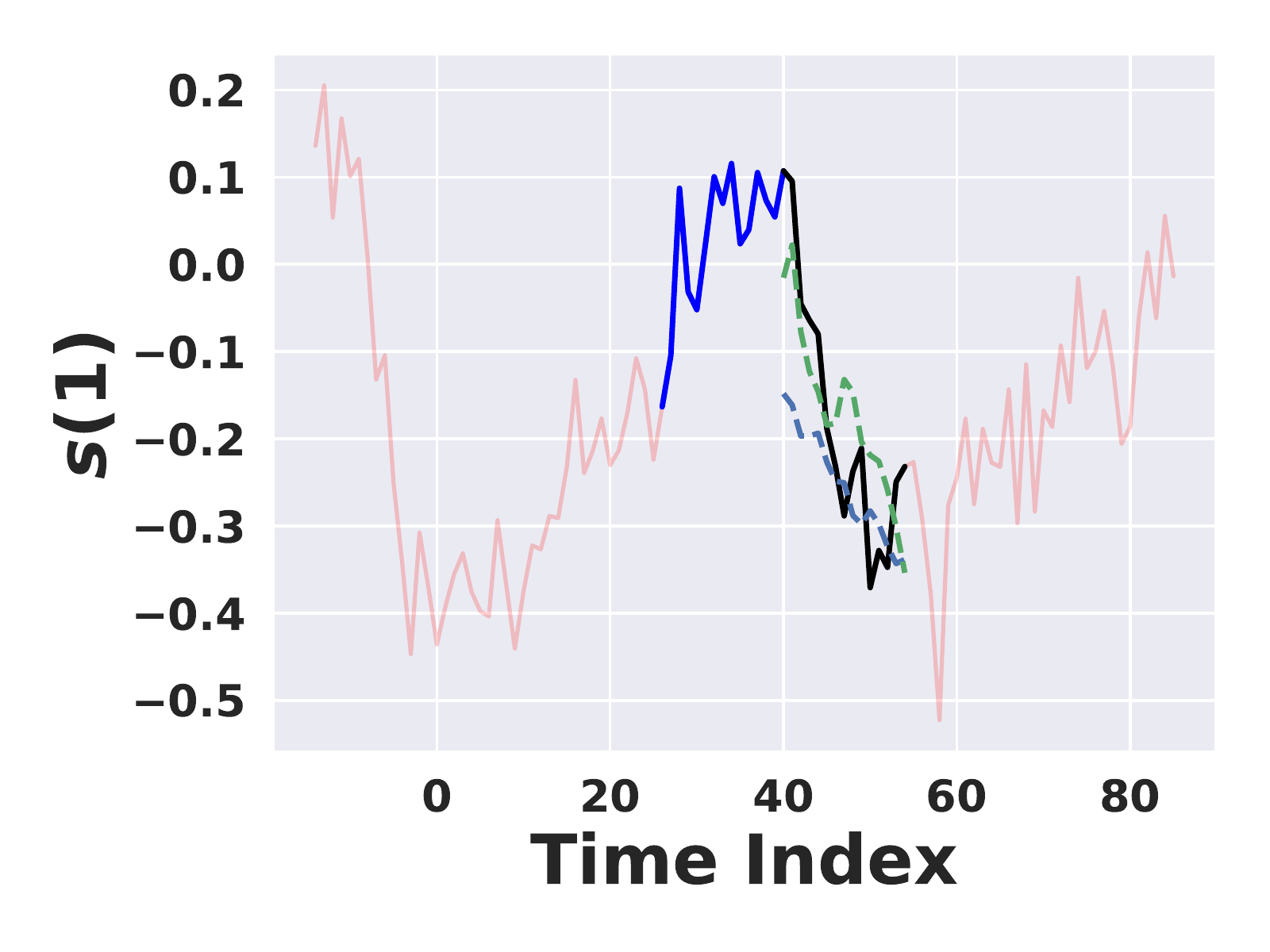}}
}
\subfigure{
{\includegraphics[width=0.4\columnwidth]{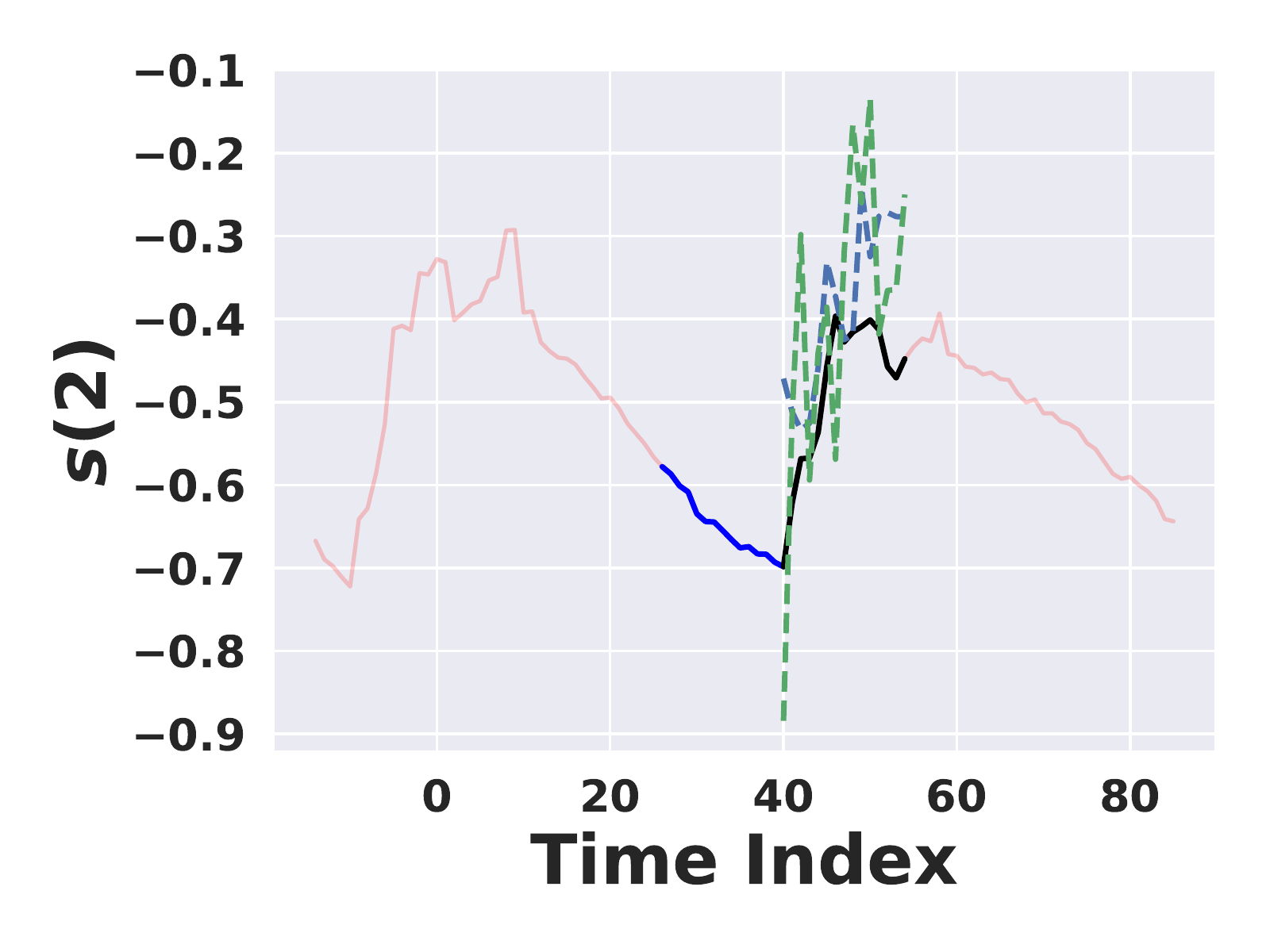}}
}
\subfigure{
{\includegraphics[width=0.4\columnwidth]{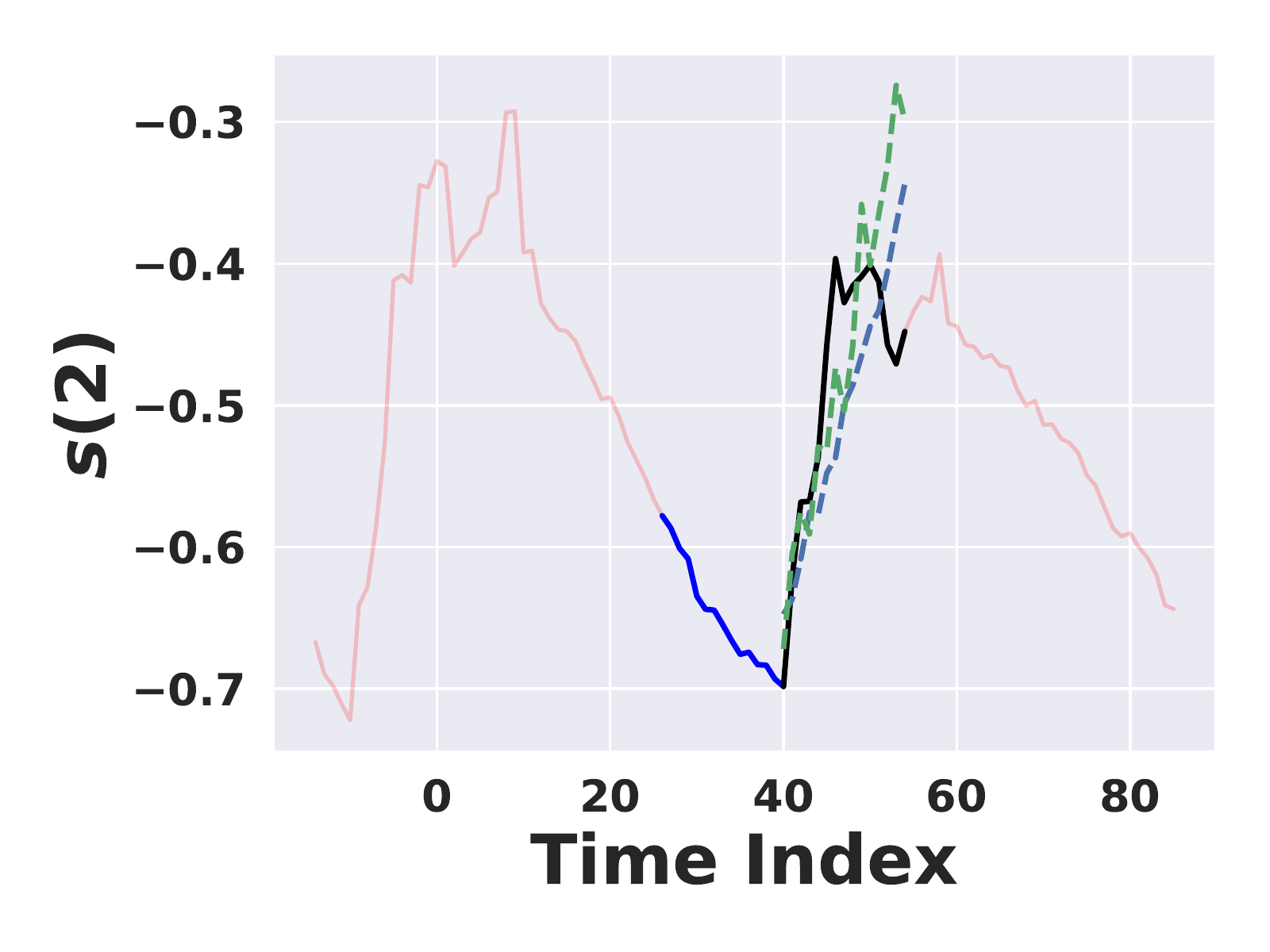}}
}

\caption{\textbf{Example forecasts (IoT sensors).}  Example forecasts at $t=40$ when $Z=4$ (left) and $Z=9$ (right). 
Clearly, the predictions are more accurate and smooth when $Z=9$. However, with a smaller bottleneck of $Z=4$ (left), we achieve near-optimal control performance since we capture task-relevant features with a coarse forecast that captures high-level, but salient, trends.}
\label{fig:iot_forecasts}
\end{center}
\vskip -0.2in
\end{figure}

\begin{figure}[ht]
\vskip 0.2in
\begin{center}

\includegraphics[width=0.8\columnwidth]{figures/forecast_legend_short.pdf}

\subfigure{
{\includegraphics[width=0.4\columnwidth]{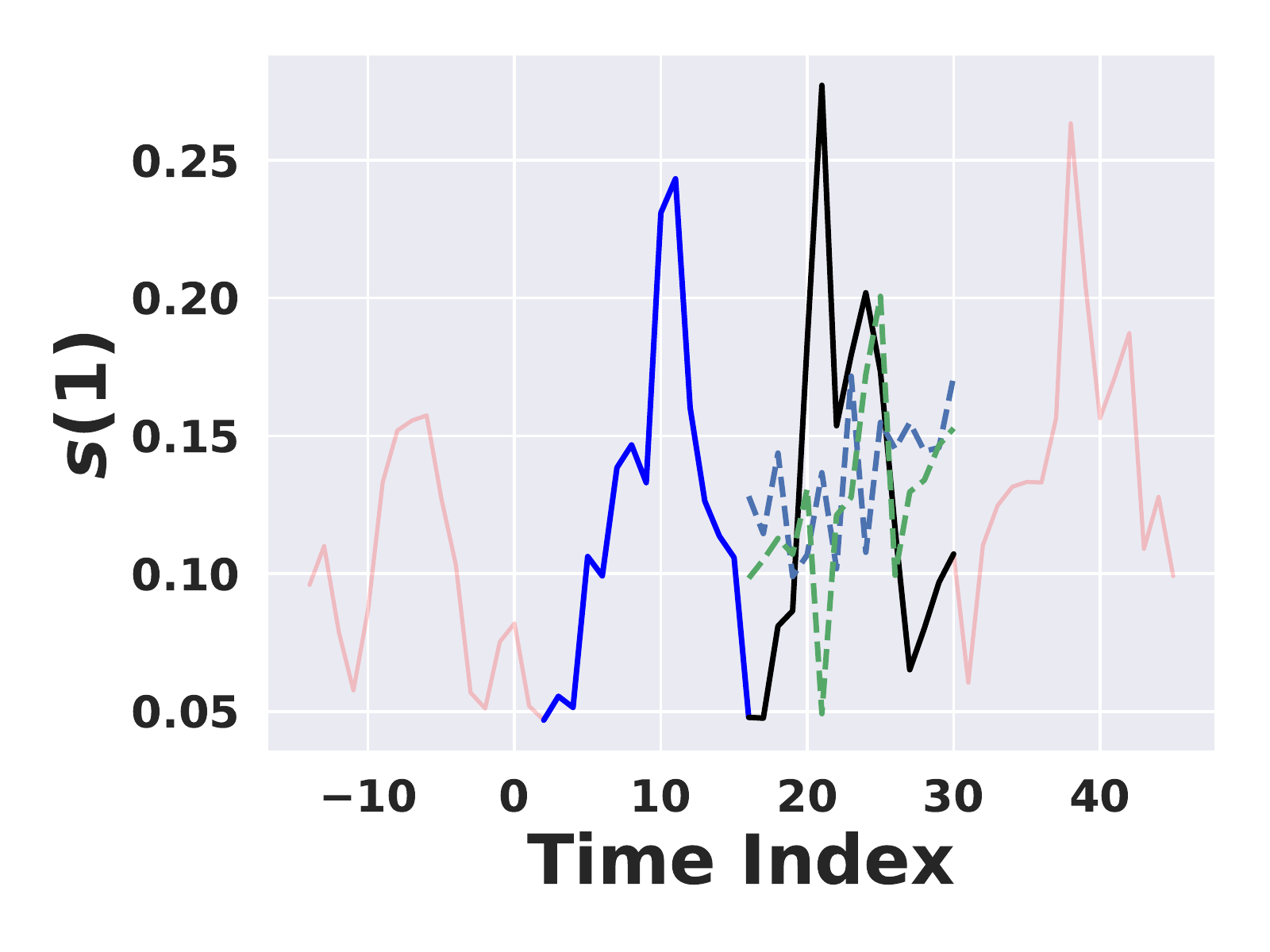}}
}
\subfigure{
{\includegraphics[width=0.4\columnwidth]{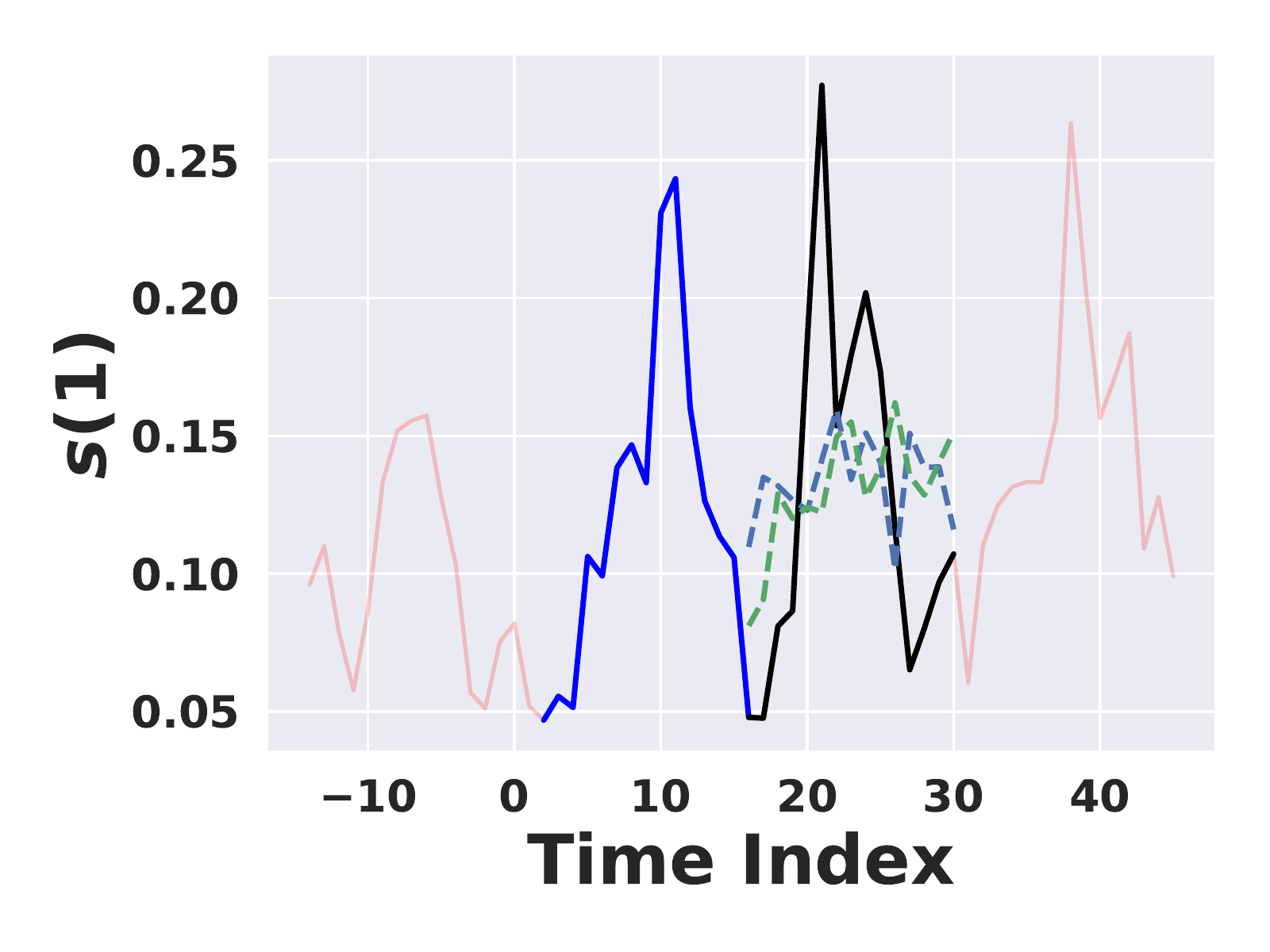}}
}

\subfigure{
{\includegraphics[width=0.4\columnwidth]{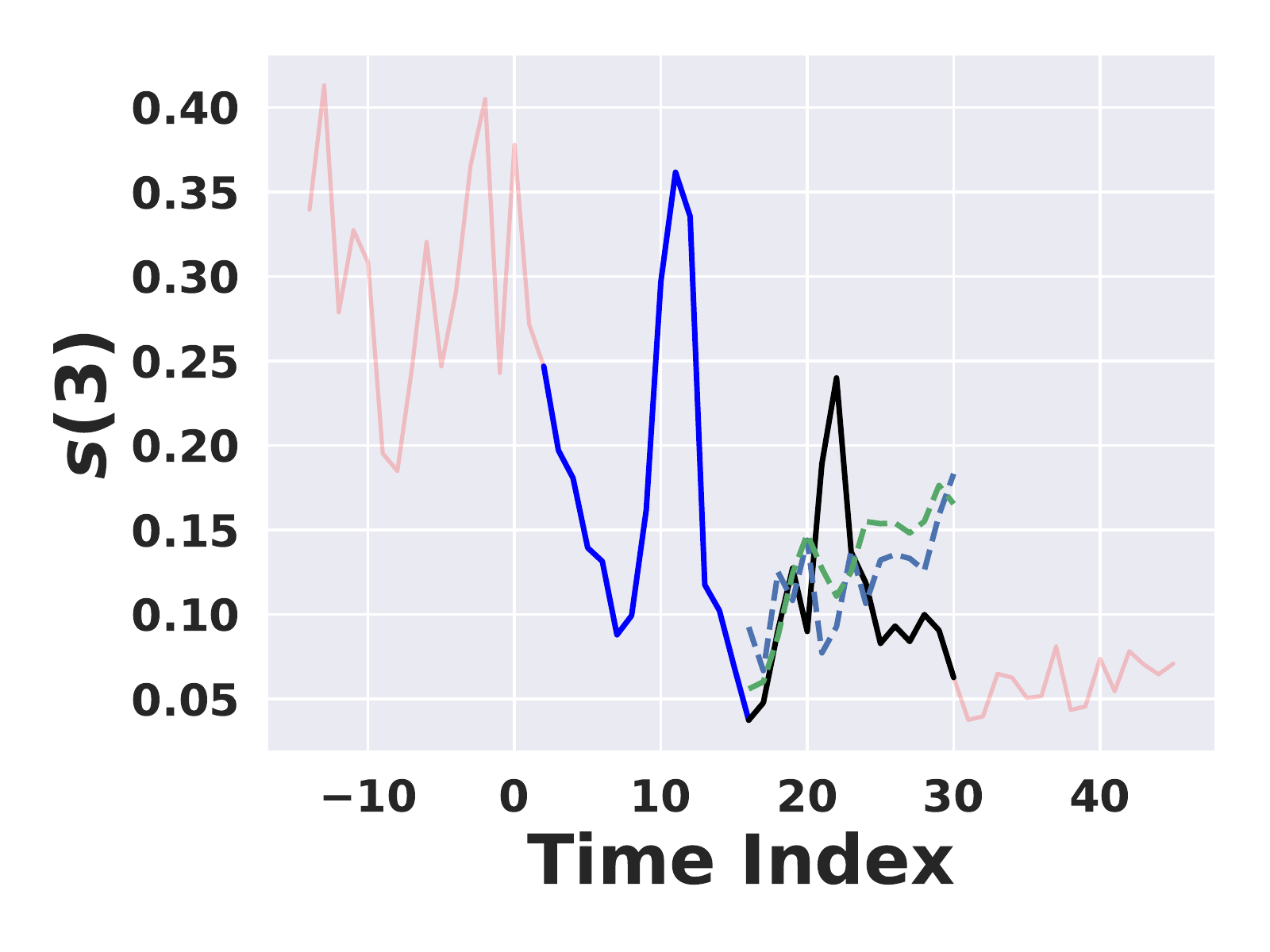}}
}
\subfigure{
{\includegraphics[width=0.4\columnwidth]{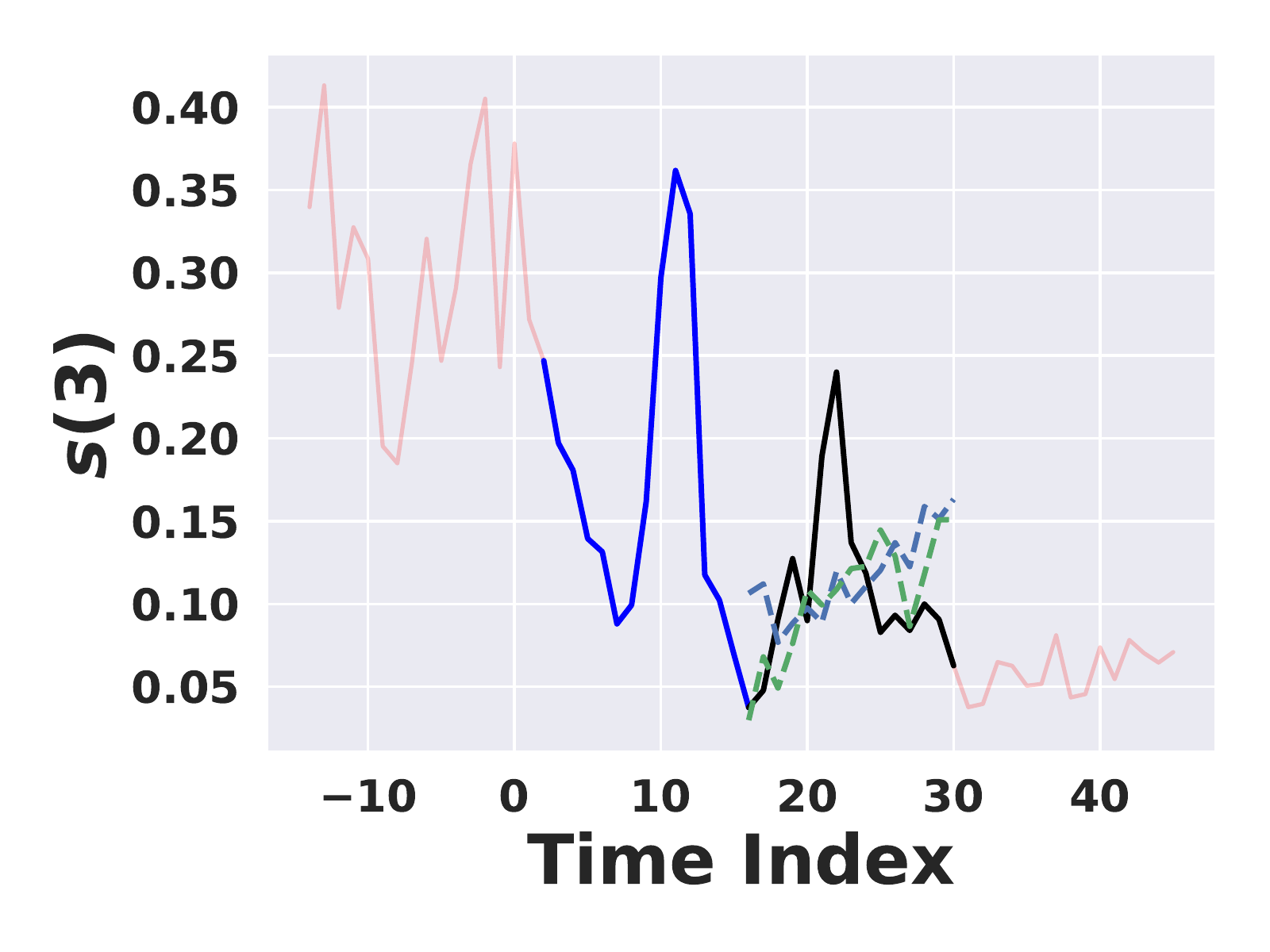}}
}

\caption{\textbf{Example forecasts (taxi scheduling).}  Example forecasts at of at $t=16$ when $Z=4$ (left) and $Z=9$ (right). This scenario had the worst prediction errors since the cell data is highly stochastic.}
\label{fig:cell_forecasts}
\end{center}
\vskip -0.2in
\end{figure}

\begin{figure}[ht]
\vskip 0.2in
\begin{center}

\includegraphics[width=0.8\columnwidth]{figures/forecast_legend_short.pdf}

\subfigure{
{\includegraphics[width=0.4\columnwidth]{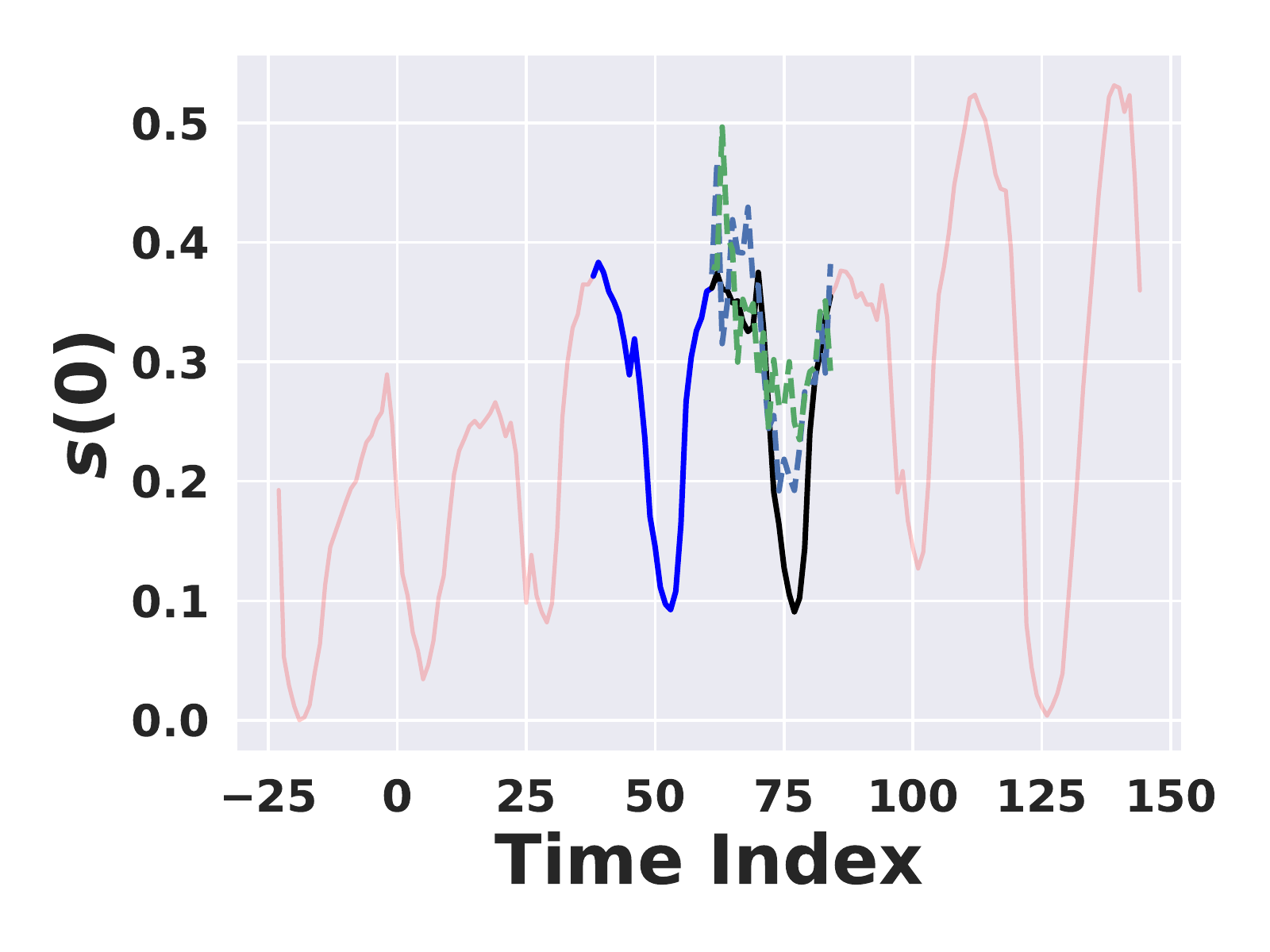}}
}
\subfigure{
{\includegraphics[width=0.4\columnwidth]{figures/appendix/pjm/z_9/forecasts_no_task_aware_sample_1_time_84_signal_0.pdf}}
}
\subfigure{
{\includegraphics[width=0.4\columnwidth]{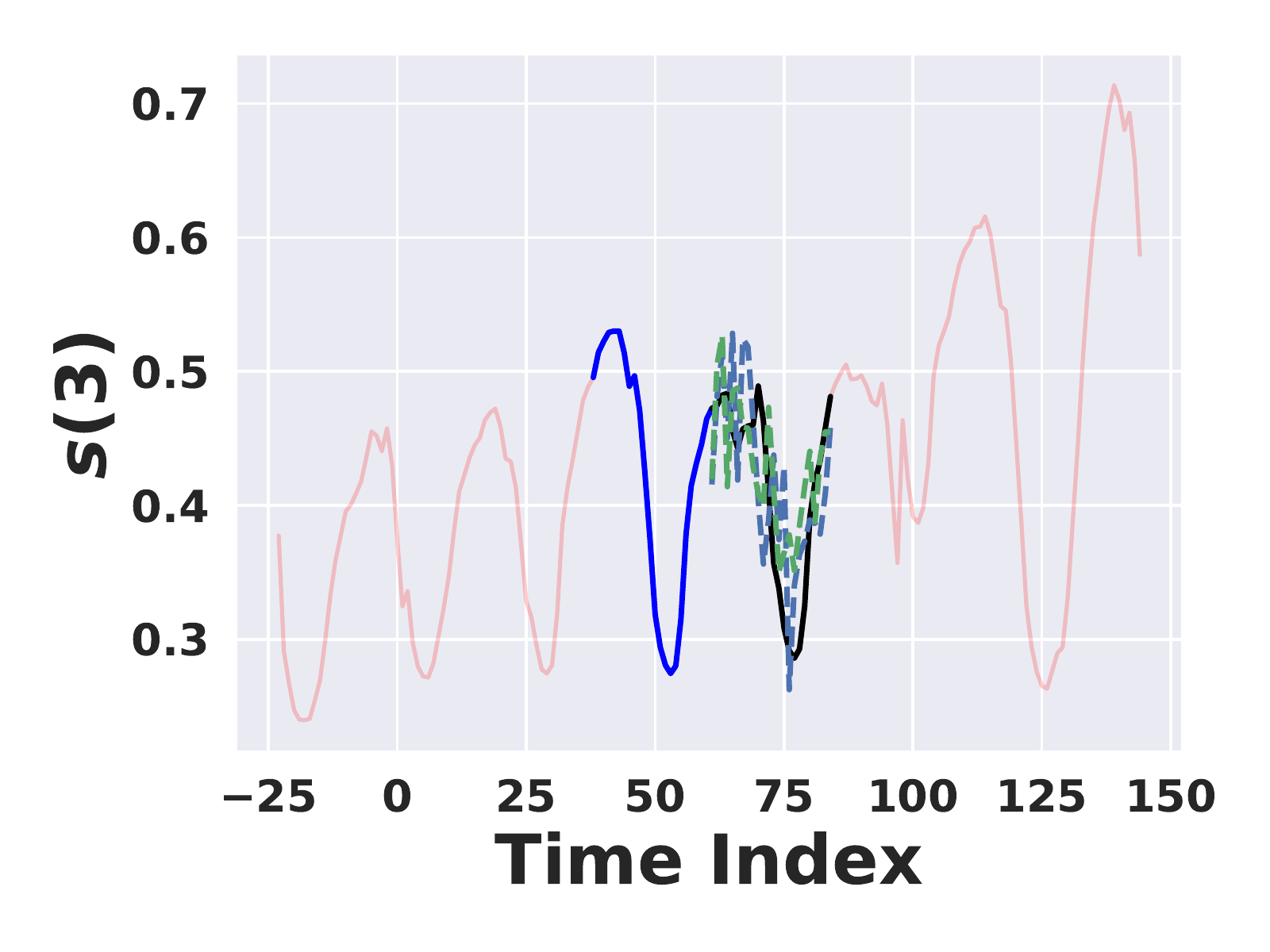}}
}
\subfigure{
{\includegraphics[width=0.4\columnwidth]{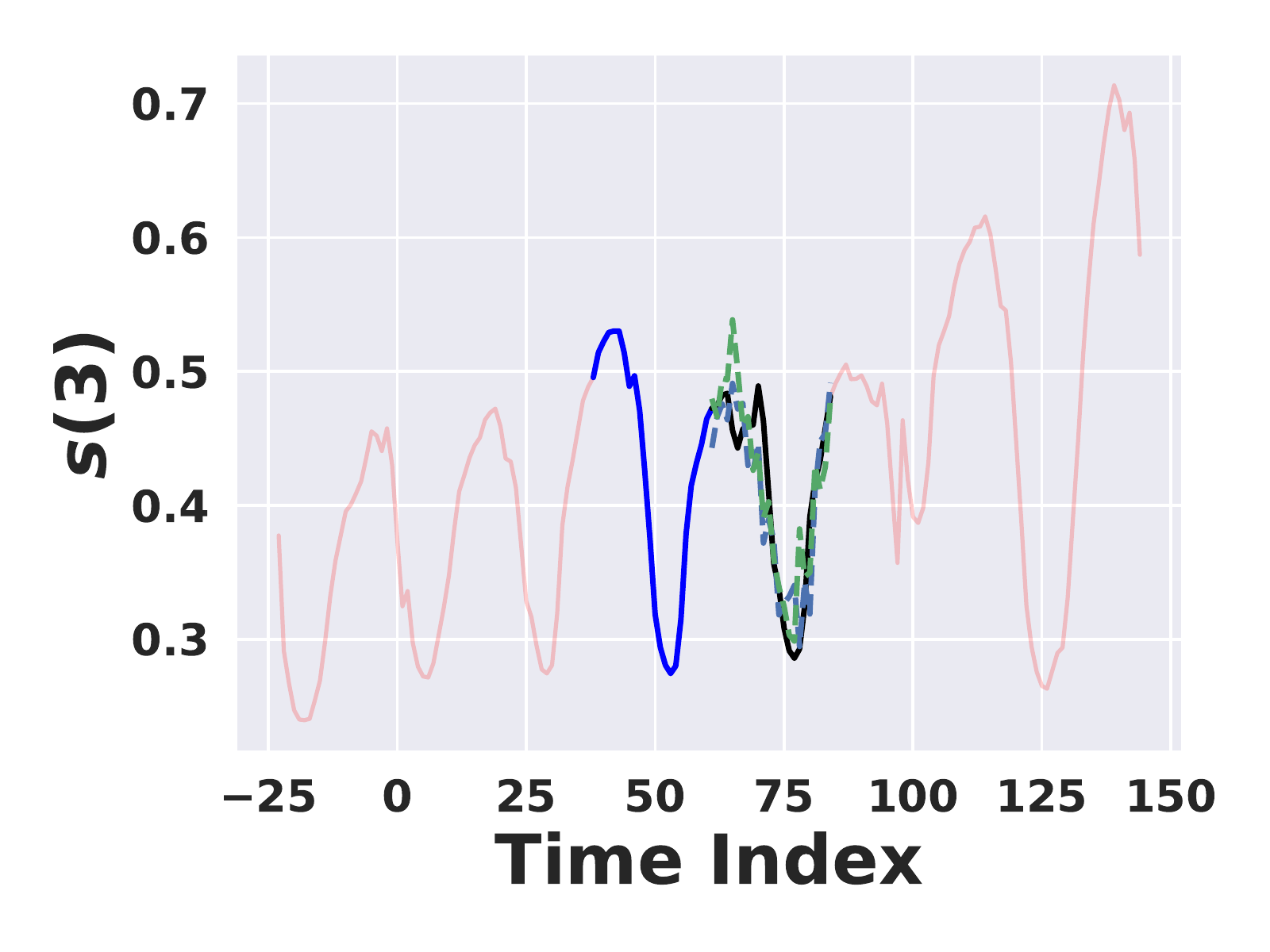}}
}

\caption{\textbf{Example forecasts (battery charging).}  Example forecasts at $t=84$ when $Z=4$ (left) and $Z=9$ (right). As before, the predictions are more accurate and smooth when $Z=9$. However, with a smaller bottleneck of $Z=4$ (left), we achieve near-optimal control performance since we capture task-relevant features with a coarse forecast that captures high-level, but salient, trends.}
\label{fig:pjm_forecasts}
\end{center}
\vskip -0.2in
\end{figure}

\textbf{The fully task-aware ($\lambdaforecast = 0$) scheme is good for control but poor for forecasting.} 

Fig. \ref{fig:forecasts_comparison} compares the time-domain forecasts given by task-agnostic/weighted scheme and task-aware scheme. Note that the timeseries starts at $t=-W+1<0$ because $s_{-W+1:0}$ is needed at $t=0$. While the task-agnostic and weighted scheme make reasonable forecasts, the task-aware scheme focuses solely on improving the task-relevant control and imposes no penalties on the forecasting error, leading to poor forecasts. This motivates our weighted approach which balances the control cost and forecasting error.

\textbf{Small $Z$ (e.g., $Z=4$) produces coarse forecasts, which are suitable for good control performance.} 

Fig. \ref{fig:iot_forecasts}, Fig. \ref{fig:cell_forecasts} and Fig. \ref{fig:pjm_forecasts} present the time-domain forecasts with different bottleneck dimensions $Z$ for IoT, taxi scheduling, and battery charging scenarios, respectively. In general, for small $Z$ (e.g., $Z=4$), the task-agnostic scheme makes noisy forecasts which provides room for our weighted scheme to improve the control cost by considering a task-relevant objective. For large $Z$ (e.g., $Z=9$) both the task-agnostic and weighted scheme make smooth forecasts\footnote{The trend is less prominent for the taxi scheduling scenario, because the cell demand itself is rapidly-changing and highly-stochastic.}.

\begin{figure}[ht]
\vskip 0.2in
\begin{center}
\includegraphics[width=0.8\columnwidth]{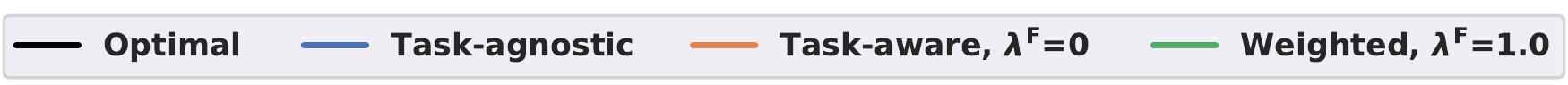}

\subfigure{
{\includegraphics[width=0.31\columnwidth]{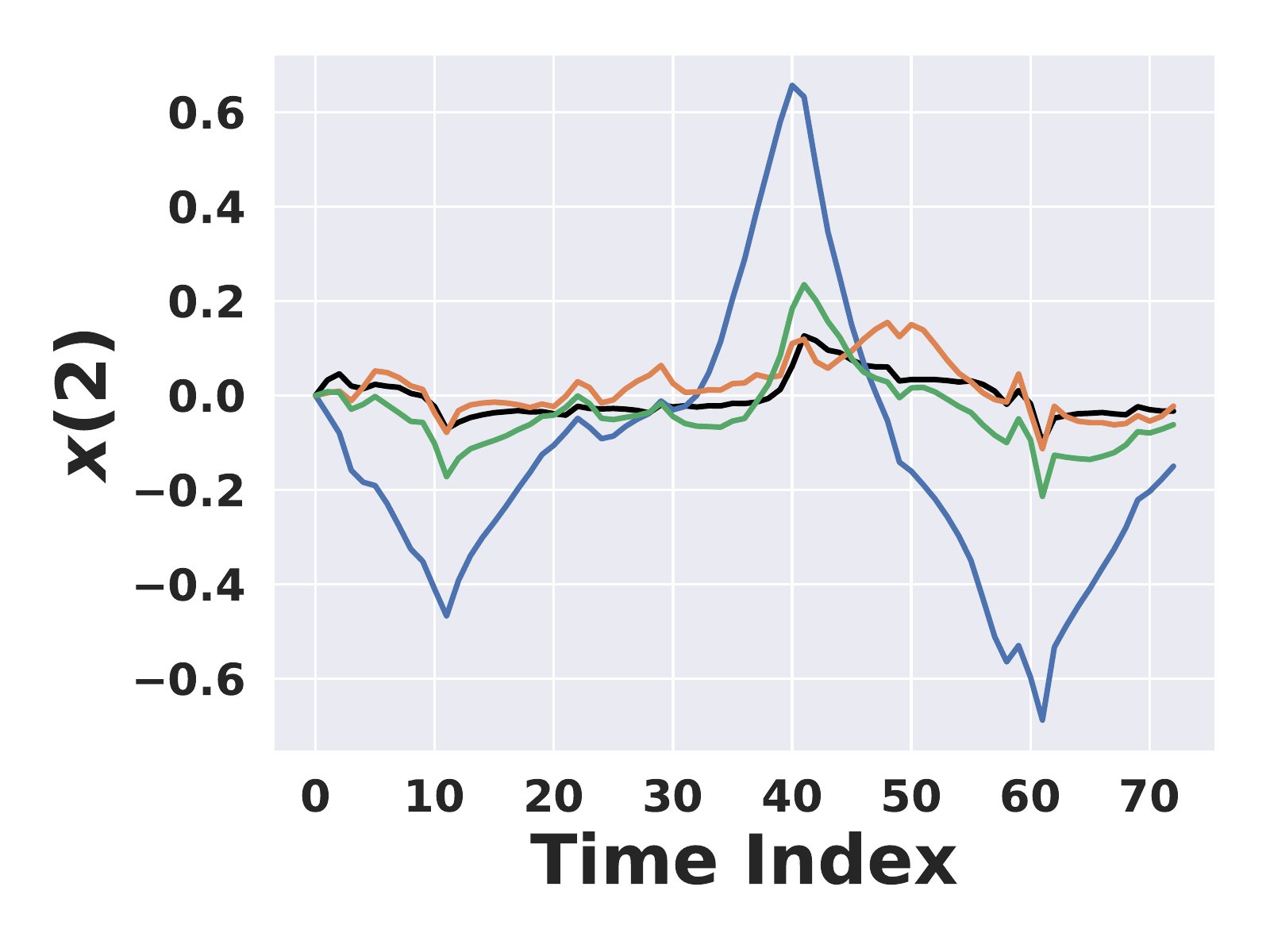}}
\label{fig_synthetic_state_evolution}
}
\subfigure{
{\includegraphics[width=0.31\columnwidth]{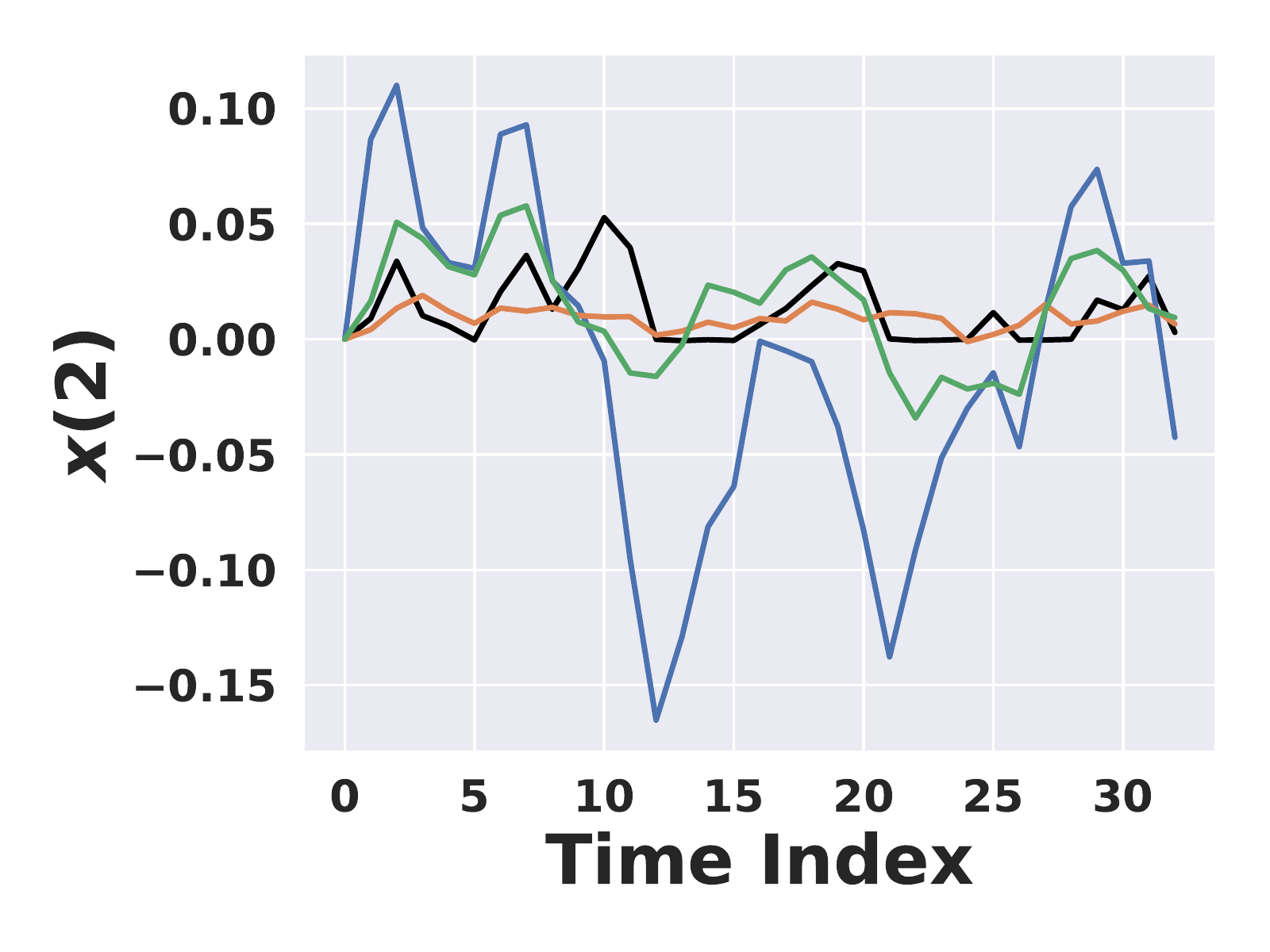}}
}
\subfigure{
{\includegraphics[width=0.31\columnwidth]{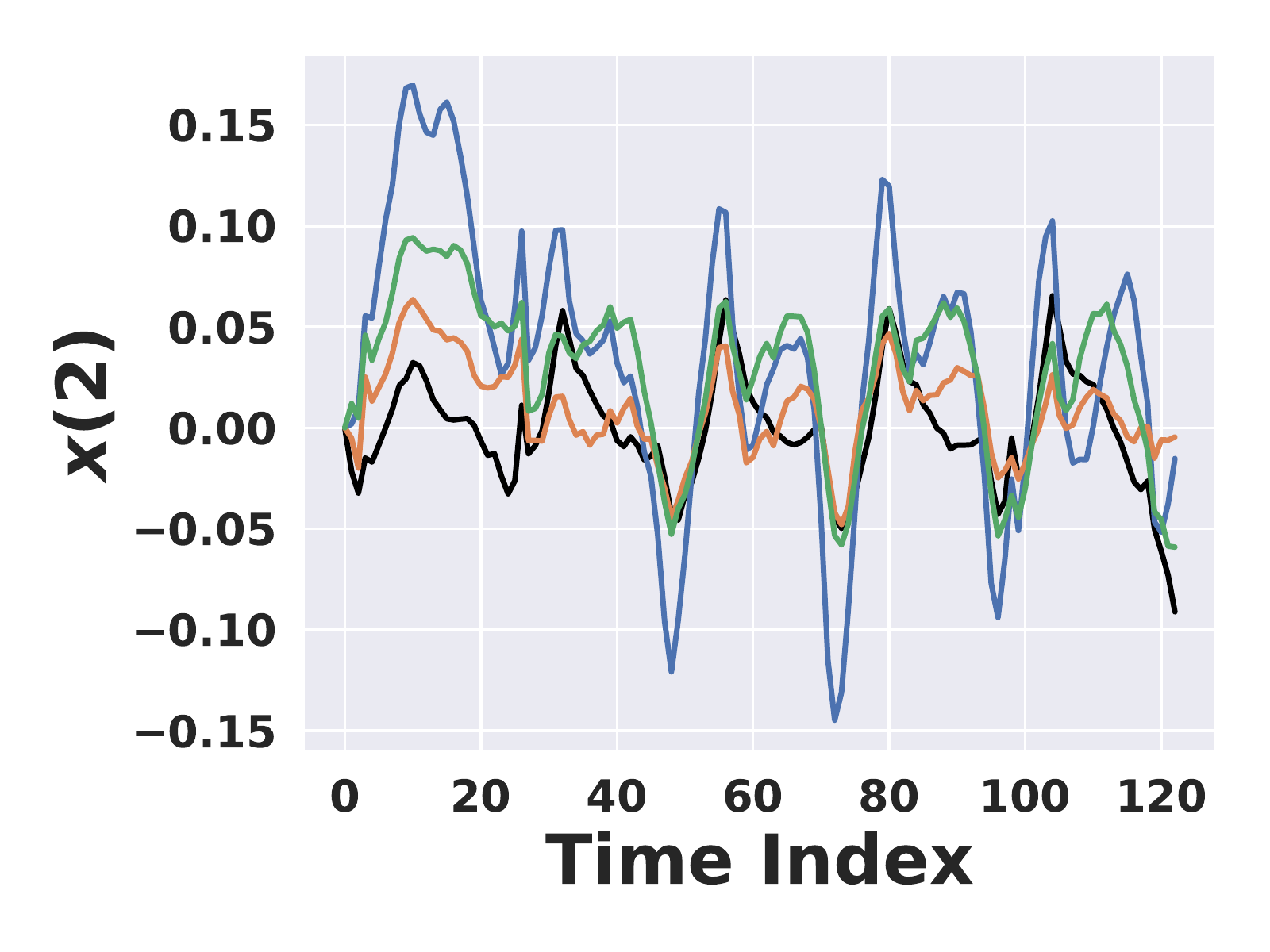}}
\label{fig_synthetic_forecast_errors}
}

\caption{Example evolution of $x(2)$ when $Z = 4$, for IoT (top), taxi scheduling (middle) and battery charging (bottom) scenarios, respectively. Clearly, our co-design approach has state evolutions closer to the unrealizable optimal solution (black) which assumes \textit{perfect} forecasts.}
\label{fig:state_evolution}
\end{center}
\vskip -0.2in
\end{figure}

\textbf{The state evolution of our task-aware/weighted scheme is closer to the optimal trace.} 

Fig. \ref{fig:state_evolution} shows the example state evolution of $x(2)$ for the three scenarios. Importantly, the black trace corresponds to an \textit{unrealizable} baseline with the lowest cost since it assumes \textit{perfect} knowledge of $\bolds$ for the future $H$ steps. We can see that our task-aware and weighted scheme have state evolution traces closer to the optimal trace than the competing task-agnostic scheme. This further explains why task-aware and weighted schemes can yield a near-optimal cost for small $Z$ while the task-agnostic benchmark cannot.

\end{document}